\newif\ifllncs
\newif\ifshortintro
\def\R{\mathbb{R}}
\def\argmin{\mathop{\rm argmin}}
\def\argmax{\mathop{\rm argmax}}
\def\etal.{ \mbox{et al.}}
\def\vs.{\mbox{vs.}}
\def\eg.{\mbox{eg.}}
\def\ie.{\mbox{ie.}}
\def\al.{\mbox{al.}}
\def\mzero{\kern-.1em{\scriptstyle\mathit{0}}\kern-.1em}
\def\tp{{t^\prime}}
\def\fq{f_Q}
\def\fp{f_P}
\def\eff{\textsl{f}}
\def\gammab{\bar{\gamma}}
\def\DP#1#2{\left<{#1},{#2}\right>}
\def\calQ{\mathcal{Q}}
\def\calF{\mathcal{F}}
\def\calC{\mathcal{C}}
\def\calH{\mathcal{H}}
\def\calX{\mathcal{X}}
\def\calY{\mathcal{Y}}
\def\calZ{\mathcal{Z}}
\def\calV{\mathcal{V}}
\def\calE{\mathcal{E}}
\def\calP{\mathcal{P}}
\def\pspace#1{\calP_{\!{#1}}}
\def\calM{\pspace{\calX}} 
\def\subH{_{_\calH}}
\def\pfd{{\raisebox{-.3ex}{\small\#}}}
\newcommand{\E}[2][]{\mathbb{E}\ifthenelse{\equal{#1}{}}{}{_{#1}}\!\left[{#2}\right]}
\newcommand{\grad}[1][]{\mathrm{grad}\ifthenelse{\equal{#1}{}}{}{_{#1}}\,}
\newcommand{\stacktwo}[2]{\stackrel{\scriptstyle #1}{\scriptstyle #2}}
\def\qedsymbol{\hfill\rule[-.2ex]{1.1ex}{1.1ex}}
\newenvironment{proofx}[1]{%
  \smallskip\par\noindent
  \begingroup\small\emph{#1~~}}{%
  \endgroup\par\medskip}
\renewenvironment{proof}{%
  \begin{proofx}{Proof}}{%
  \qedsymbol\end{proofx}}
  \newtheorem{theorem}{Theorem}[section]
  \newtheorem{lemma}[theorem]{Lemma}
  \newtheorem{proposition}[theorem]{Proposition}
  \newtheorem{corollary}[theorem]{Corollary}
  \newtheorem{definition}[theorem]{Definition}
  \theoremstyle{remark}
  \newtheorem{remark}[theorem]{Remark}
  \newtheorem{example}[theorem]{Example}
\title{Geometrical Insights \\
  for Implicit Generative Modeling}
  \title{Geometrical Insights for Unsupervised Learning}
  \author{Leon Bottou\inst{1} \and
    Martin Arjovsky\inst{2} \and
    David Lopez-Paz\inst{3} \and
    Maxime Oquab\inst{4}
  }
  \institute{
    Facebook AI Research, New York, \email{leon@bottou.org} \and
    New York University, New York, \email{martinarjovsky@gmail.com} \and
    Facebook AI Research, Paris, \email{dlp@fb.com} \and
    Inria, Paris, \email{maxime.oquab@inria.fr}
  }
  \def\ss#1{\,\raisebox{.7ex}{\small #1}}
  \author{Leon Bottou\ss{a,b},
    Martin Arjovsky\ss{b},
    David Lopez-Paz\ss{a},
    Maxime Oquab\ss{a,c}\\[1ex]
    \ss{a} Facebook AI Research, New York, Paris\\
    \ss{b} New York University, New York\\
    \ss{c} Inria, Paris.}
\begin{document}
\maketitle

\ifshortintro

\begin{abstract}
  Learning algorithms for implicit generative models can optimize a
  variety of criteria that measure how the data distribution differs
  from the implicit model distribution, including the Wasserstein
  distance, the Energy distance, and the Maximum Mean Discrepancy
  criterion. A careful look at the geometries induced by these
  distances on the space of probability measures reveals interesting
  differences. In particular, we can establish surprising approximate
  global convergence guarantees for the $1$-Wasserstein distance, even
  when the parametric generator has a nonconvex parametrization.
\end{abstract}

\section{Introduction}
\label{sec:introduction}
\label{sec:comparingprobas}

Instead of representing the model distribution with a parametric
density function, implicit generative models directly describe how to
draw samples of the model distribution by first drawing a sample $z$
from a fixed random generator and mapping into the data space with a
parametrized generator function $G_\theta(z)$. The reparametrization
trick \cite{challis-barber-2012,rezende-2014}, Variational
Auto-Encoders (VAEs)~\cite{kingma-welling-2013}, and Generative
Adversarial Networks (GANs)~\cite{goodfellow-2014} are recent
instances of this approach.

Many of these authors motivate implicit modeling with the computational
advantage that results from the ability of using the efficient
back-propagation algorithm to update the generator parameters.  In
contrast, our work targets another, more fundamental, advantage of
implicit modeling.

Although unsupervised learning is often formalized as estimating the
data distribution~\cite[\S14.1]{hastie-tibshirani-friedman-2009}, the
practical goal of the learning process rarely consists in recovering
actual probabilities. Instead, the probability models are often
structured in a manner that is interpretable as a physical or causal
model of the data. This is often achieved by defining an interpretable
density $p(y)$ for well chosen latent variables $y$ and letting the
appearance model $p(x|y)$ take the slack. This approach is well
illustrated by the \emph{inverse graphics} approach to computer
vision~\cite{lee-nevatia-2004,kulkarni-2015,romaszko-2017}. Implicit
modeling makes this much simpler:
\begin{itemize}
\item
  The structure of the generator function $G_\theta(z)$ could be
  directly interpreted as a set of equations describing a physical
  or causal model of the data \cite{kocaoglu-2017}.
\item
  There is no need to deal with latent variables, since
  all the variables of interest are explicitly computed
  by the generator function.
\item
  Implicit modeling can easily represent simple phenomena involving a
  small set of observed or inferred variables.  The corresponding
  model distribution cannot be represented with a density function
  because it is supported by a low-dimensional manifold. But
  nothing prevents an implicit model from generating such samples.
  \end{itemize}

Unfortunately, we cannot fully realize these benefits using the
popular Maximum Likelihood Estimation (MLE) approach, which
asymptotically amounts to minimizing the Kullback-Leibler~(KL)
divergence $D_{KL}(Q,P_\theta)$ between the data distribution $Q$ and
the model distribution~$P_\theta$,
\begin{equation}
  \label{eq:kldivergence}
  D_{KL}(Q,P_\theta)
  = \int \log\left(\frac{q(x)}{p_\theta(x)}\right) q(x) d\mu(x)
\end{equation}
where $p_\theta$ and $q$ are the density functions of $P_\theta$ and
$Q$ with respect to a common measure $\mu$. This criterion is
particularly convenient because it enjoys favorable statistical
properties \cite{cramer-1946} and because its optimization can be
written as an expectation with respect to the data distribution,
\[
  \argmin_\theta D_{KL}(Q,P_\theta)
  ~=~ \argmin_\theta \E[x\sim Q]{-\log(p_\theta(x))}
  ~\approx~ \argmax_\theta \prod_{i=1}^{n} p_\theta(x_i)~.
\]
which is readily amenable to computationally attractive stochastic
optimization procedures~\cite{bottou-curtis-nocedal-2016}.  First,
this expression is ill-defined when the model distribution cannot be
represented by a density. Second, if the likelihood $p_\theta(x_i)$ of
a single example $x_i$ is zero, the dataset likelihood is also zero,
and there is nothing to maximize. The typical remedy is to add a noise
term to the model distribution. Virtually all generative models
described in the classical machine learning literature include such a
noise component whose purpose is not to model anything useful, but
merely to make MLE work.

Instead of using ad-hoc noise terms to coerce MLE into optimizing a
different similarity criterion between the data distribution and the
model distribution, we could as well explicitly optimize a different
criterion. Therefore it is crucial to understand how the
selection of a particular criterion will influence the learning
process and its final result.

Section~\ref{sec:adversarialnets} reviews known results establishing
how many interesting distribution comparison criteria can be
expressed in adversarial form, and are amenable to tractable
optimization algorithms. Section~\ref{sec:emvsdisco} reviews the
statistical properties of two interesting families of distribution
distances, namely the family of the Wasserstein distances and the
family containing the Energy Distances and the Maximum Mean
Discrepancies. Although the Wasserstein distances have far worse
statistical properties, experimental evidence shows that it can
deliver better performances in meaningful applicative setups.
Section~\ref{sec:lengthspaces} reviews essential concepts about
geodesic geometry in metric spaces. Section~\ref{sec:geodesics}
shows how different probability distances induce different geodesic
geometries in the space of probability measures. 
Section~\ref{sec:convexity} leverages these geodesic structures to
define various flavors of convexity for parametric families of
generative models, which can be used to prove that a simple gradient descent
algorithm will either reach or approach the global minimum regardless
of the traditional nonconvexity of the parametrization of the model
family. In particular, when one uses implicit generative models,
minimizing the Wasserstein distance with a gradient descent algorithm offers
much better guarantees than minimizing the Energy distance.

\else

\input{longintro.tex}

\fi

\section{The adversarial formulation}
\label{sec:adversarialnets}

The adversarial training framework popularized by the Generative
Adversarial Networks (GANs)~\cite{goodfellow-2014} can be used to
minimize a great variety of probability comparison criteria.  Although
some of these criteria can also be optimized using simpler algorithms,
adversarial training provides a common template that we can use to
compare the criteria themselves.

This section presents the adversarial training framework and reviews
the main categories of probability comparison criteria it supports,
namely Integral Probability Metrics (IPM) (Section \ref{sec:ipm}),
\eff-divergences (Section~\ref{sec:fdivergences}), Wasserstein
distances (WD) (Section~\ref{sec:wasserstein}), and Energy Distances
(ED) or Maximum Mean Discrepancy distances (MMD)
(Section~\ref{sec:ed-mmd}).

\subsection{Setup}
\label{sec:setup}

Although it is intuitively useful to consider that the sample space
$\calX$ is some convex subset of $\R^d$, it is also useful to spell
out more precisely which properties are essential to the development.
In the following, we assume that $\calX$ is a \emph{Polish metric space},
that is, a complete and separable space whose topology is
defined by a distance function
\[
  d:\quad\left\{ \begin{array}{rcl}
    \calX\times\calX & ~\rightarrow~ & \R_+\cup\{+\infty\} \\
    (x,y) & ~\mapsto~ & d(x,y) \end{array}\right.
\]
satisfying the properties of a metric distance:
\begin{equation}
  \label{eq:metricdistance}
  \forall x,y,z\in\calX \quad \left\{ \begin{array}{r@{\quad}l@{\quad}l}
    (\mzero) & d(x,x)= 0 & \text{\small(zero)} \\
    (i) & x\neq y \Rightarrow d(x,y)>0 &\text{\small(separation)} \\
    (ii) & d(x,y)=d(y,x) & \text{\small(symmetry)} \\
    (iii) & d(x,y)\leq d(x,z)+d(z,y) &  \text{\small(triangular inequality)}
   \end{array}\right.
\end{equation}

Let $\mathfrak{U}$ be the Borel $\sigma$-algebra generated by all the
open sets of $\calX$. We use the notation $\calM$ for the set of
probability measures $\mu$ defined on $(\calX,\mathfrak{U})$, and the
notation $\calM^p\subset\calM$ for those
satisfying~$\E[x,y\sim\mu]{d(x,y)^p}{<}\infty$. This condition
is equivalent to $\E[x\sim\mu]{d(x,x_0)^p}{<}\infty$ for an arbitrary
origin $x_0$ when $d$ is finite, symmetric, and satisfies the
triangular inequality.

We are interested in criteria to compare elements of $\calM$,
\[
  D:\quad\left\{ \begin{array}{rcl}
    \calM\times\calM & ~\rightarrow~ & \R_+\cup\{+\infty\} \\
    (Q,P) & ~\mapsto~ & D(Q,P) \end{array}\right.~.
\]
Although it is desirable that $D$ also satisfies the properties of a
distance \eqref{eq:metricdistance}, this is not always possible.  In
this contribution, we strive to only reserve the word \emph{distance}
for criteria that satisfy the properties \eqref{eq:metricdistance} of
a metric distance. We use the word
\emph{pseudodistance}\footnote{\relax Although failing to satisfy the
  separation property (\ref{eq:metricdistance}.$i$) can have
  serious practical consequences, recall that a pseudodistance always
  becomes a full fledged distance on the quotient space
  $\calX/\mathcal{R}$ where $\mathcal{R}$ denotes the equivalence
  relation $x\mathcal{R}y\Leftrightarrow{d(x,y)}{=}0$. All the theory
  applies as long as one never distinguishes two points separated by a
  zero distance.} when a nonnegative criterion fails to satisfy
the separation property (\ref{eq:metricdistance}.$i$).  We use the
word \emph{divergence} for criteria that are not symmetric
(\ref{eq:metricdistance}.$ii$) or fail to satisfy the triangular
inequality (\ref{eq:metricdistance}.$iii$).

We generally assume in this contribution that the distance $d$ defined
on $\calX$ is finite. However we allow probability comparison criteria
to be infinite. When the distributions $Q,P$ do not belong to the
domain for which a particular criterion~$D$ is defined, we take that
$D(Q,P){=}0$ if $Q{=}P$ and $D(Q,P){=}+\infty$ otherwise.

\subsection{Implicit modeling}
\label{sec:implicitmodeling}

We are particularly interested in model distributions $P_\theta$ that
are supported by a low-dimensional manifold in a large ambient sample
space (recall Section~\ref{sec:comparingprobas}). Since such
distributions do not typically have a density function, we cannot
represent the model family $\calF$ using a parametric density
function. Following the example of Variational Auto-Encoders (VAE)
\cite{kingma-welling-2013} and Generative Adversarial Networks (GAN)
\cite{goodfellow-2014}, we represent the model distributions by
defining how to produce samples.

Let $z$ be a random variable with known distribution $\mu_z$ defined
on a suitable probability space $\mathcal{Z}$ and let $G_\theta$ be a
measurable function, called the \emph{generator}, parametrized by
$\theta\in\R^d$,
\[
   G_\theta:~~ z\in\mathcal{Z}~\mapsto~ G_\theta(z)\in\calX~.
\]
The random variable $G_\theta(Z)\in\calX$ follows the
\emph{push-forward} distribution\footnote{\relax
  We use the notation $f\pfd\mu$ or $f(x)\pfd\mu(x)$ to denote the probability
  distribution obtained by applying function $f$ or expression $f(x)$
  to samples of the distribution $\mu$.}
\[
   G_\theta(z)\pfd\mu_Z(z) :~~ A\in\mathfrak{U}~\mapsto \mu_z(G_\theta^{-1}(A))~.
\]
By varying the parameter $\theta$ of the generator $G_\theta$, we can
change this push-forward distribution and hopefully make it close to
the data distribution $Q$ according to the criterion of interest.

This \emph{implicit modeling approach} is useful in two ways.  First,
unlike densities, it can represent distributions confined to a
low-dimensional manifold.  Second, the ability to easily generate samples
is frequently more useful than knowing the numerical value of the density
function (for example in image superresolution or semantic
segmentation when considering the conditional distribution of the
output image given the input image). In general, it is computationally
difficult to generate samples given an arbitrary high-dimensional
density \cite{neal-2001}.

Learning algorithms for implicit models must therefore be formulated
in terms of two sampling oracles. The first oracle returns training
examples, that is, samples from the data distribution $Q$. The second
oracle returns generated examples, that is, samples from the model
distribution~$P_\theta=G_\theta\pfd\mu_Z$. This is particularly easy
when the comparison criterion $D(Q,P_\theta)$ can be expressed in
terms of expectations with respect to the distributions $Q$
or~$P_\theta$.

\subsection{Adversarial training}
\label{sec:adversarialtraining}

We are more specifically interested in distribution comparison
criteria that can be expressed in the form
\begin{equation}
  \label{eq:vform}
  D(Q,P) = \sup_{(\fq,\fp)\in\calQ} \E[Q]{\fq(x)} - \E[P]{\fp(x)}~.
\end{equation}
The set $\calQ$ defines which pairs $(\fq,\fp)$ of real-valued
\emph{critic} functions defined on $\calX$ are considered in this
maximization. As discussed in the following subsections, different
choices of $\calQ$ lead to a broad variety of criteria. This
formulation is a mild generalization of the Integral Probability
Metrics (IPMs) \cite{mueller-1997} for which both functions $\fq$ and
$\fp$ are constrained to be equal (Section~\ref{sec:ipm}).

Finding the optimal generator parameter $\theta^*$ then amounts
to minimizing a cost function $C(\theta)$ which itself
is a supremum,
\begin{equation}
  \label{eq:problem}
  \min_\theta\left\{~
  C(\theta) ~ \stackrel{\Delta}{=} \max_{(\fq,\fp)\in\calQ}
     \E[x\sim Q]{\fq(x)} - \E[z\sim\mu_z]{\fp(G_\theta(z))} ~\right\}~.
\end{equation}
Although it is sometimes possible to reformulate this cost function in
a manner that does not involve a supremum (Section~\ref{sec:ed-mmd}),
many algorithms can be derived from the following variant of the
envelope theorem~\cite{milgrom-segal-2002}.

\begin{theorem}
  \label{th:envelope}
  Let $C$ be the cost function defined in \eqref{eq:problem}
  and let $\theta_0$ be a specific value of the generator parameter.
  Under the following assumptions,
  \begin{itemize}[nosep]
  \item[a.] there is $(\fq^*,\fp^*)\in\calQ$ such that
    $C(\theta_0)=\E[Q]{\fq^*(x)}-\E[\mu_z]{\fp^*(G_{\theta_0}(z))}$,
  \item[b.] the function $C$ is differentiable in $\theta_0$,
  \item[c.] the functions $h_z = \theta\mapsto\fp^*(G_\theta(z))$ are
    $\mu_z$-almost surely differentiable in $\theta_0$,
  \item[d.] and there exists an open neighborhood $\calV$ of $\theta_0$
    and a $\mu_z$-integrable function $D(z)$ such that~$\forall\theta{\in}\calV$,
    $|h_z(\theta)-h_z(\theta_0)|\leq D(z) \|\theta-\theta_0\|$,
  \end{itemize}
  \medskip
  we have the equality $\displaystyle ~
     \grad[\theta] C(\theta_0) = -\E[z\sim\mu_z]{\,\grad[\theta]h_z(\theta_0)\,}~.$
\end{theorem}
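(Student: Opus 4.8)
The plan is to compare $C$ with a simpler auxiliary function built from the fixed optimal critic pair. I would define
\[
  \tilde{C}(\theta) = \E[x\sim Q]{\fq^*(x)} - \E[z\sim\mu_z]{\fp^*(G_\theta(z))}
  = \E[Q]{\fq^*(x)} - \E[\mu_z]{h_z(\theta)}~.
\]
Because $C$ is a supremum over $\calQ$ and $(\fq^*,\fp^*)\in\calQ$ is merely one admissible pair, we have $C(\theta)\geq\tilde{C}(\theta)$ for every $\theta$; assumption~(a) upgrades this to an equality at $\theta_0$, namely $C(\theta_0)=\tilde{C}(\theta_0)$. Hence the difference $C-\tilde{C}$ is nonnegative everywhere and vanishes at $\theta_0$, so $\theta_0$ is a global minimizer of $C-\tilde{C}$. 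This reduces the theorem to two separate facts: that $\tilde{C}$ is differentiable at $\theta_0$ with the claimed gradient, and that the gradients of $C$ and $\tilde{C}$ must agree there.

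First I would establish that $\tilde{C}$ is differentiable at $\theta_0$ with $\grad[\theta]\tilde{C}(\theta_0) = -\E[\mu_z]{\grad[\theta]h_z(\theta_0)}$. Since $\E[Q]{\fq^*(x)}$ is constant in $\theta$, this amounts to differentiating $\phi(\theta)=\E[\mu_z]{h_z(\theta)}$ under the integral sign. I would study the first-order remainder $r_z(\theta)=h_z(\theta)-h_z(\theta_0)-\DP{\grad[\theta]h_z(\theta_0)}{\theta-\theta_0}$. Assumption~(c) gives $r_z(\theta)=o(\|\theta-\theta_0\|)$ for $\mu_z$-almost every $z$. Assumption~(d) bounds the difference quotients by $D(z)$ on $\calV$, and passing to the limit it also yields $\|\grad[\theta]h_z(\theta_0)\|\leq D(z)$, so that $|r_z(\theta)|\leq 2D(z)\|\theta-\theta_0\|$ on $\calV$. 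Since $D$ is $\mu_z$-integrable, the dominated convergence theorem applied along an arbitrary sequence $\theta_n\to\theta_0$ forces $\E[\mu_z]{r_z(\theta_n)}=o(\|\theta_n-\theta_0\|)$, which is precisely the assertion that $\phi$ is differentiable at $\theta_0$ with gradient $\E[\mu_z]{\grad[\theta]h_z(\theta_0)}$.

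Then I would combine the pieces. Assumption~(b) provides differentiability of $C$ at $\theta_0$, and the previous paragraph provides differentiability of $\tilde{C}$, so $C-\tilde{C}$ is differentiable at the interior point $\theta_0$. Since it attains a global minimum there, the first-order optimality condition gives $\grad[\theta](C-\tilde{C})(\theta_0)=0$, hence $\grad[\theta]C(\theta_0)=\grad[\theta]\tilde{C}(\theta_0)=-\E[\mu_z]{\grad[\theta]h_z(\theta_0)}$, as claimed.

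The main obstacle is the interchange of gradient and expectation in the middle step: pointwise almost-sure differentiability~(c) is not by itself enough to differentiate under the integral, and the role of the Lipschitz domination~(d) is exactly to supply the integrable envelope $2D(z)$ needed to invoke dominated convergence uniformly along sequences converging to $\theta_0$. A secondary point requiring care is that assumption~(b) is genuinely used and cannot be dropped: without it $C$ might only admit one-sided directional derivatives, in which case the comparison $C\geq\tilde{C}$ yields only inequalities between one-sided derivatives rather than the clean equality of gradients.
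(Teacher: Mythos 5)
Your proof is correct and follows essentially the same route as the paper's: both arguments rest on the observation that $\tilde{C}(\theta)=\E[Q]{\fq^*(x)}-\E[\mu_z]{h_z(\theta)}$ is a minorant of $C$ touching it at $\theta_0$, combined with dominated convergence under assumption~(d) to differentiate the expectation. The only cosmetic difference is the packaging of the final step --- you invoke full differentiability of $\tilde{C}$ and the first-order condition at a global minimum of $C-\tilde{C}$, whereas the paper derives the two one-sided directional inequalities $Au\geq0$ and $-Au\geq0$ directly --- but the mathematical content is identical.
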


This result means that we can compute the gradient of $C(\theta_0)$
without taking into account the way $\fp^*$ changes with
$\theta_0$. The most important assumption here is the
differentiability of the cost $C$. Without this assumption, we can
only assert that $-\E[z\sim\mu_z]{\,\grad[\theta]h_z(\theta_0)\,}$
belongs to the ``local'' subgradient
\[
   \partial^{\text{loc}}C(\theta_0) \stackrel{\Delta}{=}
     \left\{\: g\in\R^d ~:~ \forall \theta\in\R^d~~ C(\theta) \geq
       C(\theta_0)+\DP{g}{\theta{-}\theta_0}+ o(\|\theta{-}\theta_0\|)
     \:\right\}~.
\]   

\medskip

\begin{proof}
  Let $\lambda\in\R_+$ and $u\in\R^d$ be an arbitrary unit vector.
  From~\eqref{eq:vform},
  \begin{align*}
    C(\theta_0+\lambda{u})&\geq
    \E[z\sim Q]{\fq^*(x)}-\E[z\sim\mu_z]{\fp^*(G_{\theta_0+\lambda{u}}(z))}\\
  C(\theta_0+\lambda{u})-C(\theta_0)&\geq
  -\,\E[z\sim\mu_z]{h_z(\theta_0+\lambda u)-h_z(\theta_0)}~.
  \end{align*}
  Dividing this last inequality by $\lambda$, taking its
  limit when $\lambda\rightarrow0$, recalling that the dominated convergence
  theorem and assumption (d) allow us to take the limit inside the
  expectation operator, and rearranging the result gives
  \[  A u \geq 0 ~~\text{with}~~
      A:u\in\R^d\mapsto \DP{u}{\relax
        \grad[\theta]{C(\theta_0)}+\E[z\sim\mu_z]{\grad[\theta]{h_z(\theta_0)}}}.
  \]
  Writing the same for unit vector $-u$ yields inequality $-Au\geq0$.
  Therefore $Au=0$.
\end{proof}

Thanks to this result, we can compute an unbiased\footnote{\relax
  Stochastic gradient descent often relies on unbiased gradient
  estimates (for a more general condition,
  see~\cite[Assumption~4.3]{bottou-curtis-nocedal-2016}). This is not
  a given: estimating the Wasserstein distance~\eqref{eq:emdual} and
  its gradients on small minibatches gives severely biased
  estimates~\cite{bellemare-2017}. This is in fact very obvious for
  minibatches of size one. Theorem~\ref{th:envelope} therefore
  provides an imperfect but useful alternative.}
stochastic estimate $\hat{g}(\theta_t)$ of the gradient
$\grad_{\theta}C(\theta_t)$ by first solving the maximization problem
in~\eqref{eq:problem}, and then using the back-propagation algorithm
to compute the average gradient on a minibatch~$z_1\dots{z_k}$ sampled
from $\mu$,
\[
  \hat{g}(\theta_t)
    = - \frac{1}{k} \sum_{i=1}^{k} \grad[\theta]{\fp^*(G_\theta(z_i))}~.
\]
Such an unbiased estimate can then be used to perform a stochastic
gradient descent update iteration on the generator parameter
\[
  \theta_{t+1} = \theta_t - \eta_t\, \hat{g}(\theta_t)~.
\]
Although this algorithmic idea can be made to work relatively
reliably~\cite{arjovsky-chintala-bottou-2017,gulrajani-2017},
serious conceptual and practical issues remain:

\begin{remark}
  \label{rem:arora}
  In order to obtain an unbiased gradient estimate
  $\hat{g}(\theta_t)$, we need to solve the maximization problem
  in~\eqref{eq:problem} for the true distributions rather than for a
  particular subset of examples.  On the one hand, we can use the
  standard machine learning toolbox to avoid overfitting the
  maximization problem. On the other hand, this toolbox essentially
  works by restricting the family $\calQ$ in ways that can change the
  meaning of the comparison criteria itself
  \cite{arora-2017,liu-bousquet-chaudhuri-2017}.
\end{remark}

\begin{remark}
  \label{rem:twotimescale}
  In practice, solving the maximization problem~\eqref{eq:problem}
  during each iteration of the stochastic gradient algorithm is
  computationally too costly. Instead, practical algorithms interleave
  two kinds of stochastic iterations: gradient ascent steps on
  $(\fq,\fp)$, and gradient descent steps on $\theta$, with a much
  smaller effective stepsize. Such algorithms belong to the general
  class of stochastic algorithms with two time scales
  \cite{borkar-1997,konda-tsitsiklis-2004}. Their convergence
  properties form a delicate topic, clearly beyond the purpose of this
  contribution.
\end{remark}

\subsection{Integral probability metrics}
\label{sec:ipm}

Integral probability metrics~(IPMs)~\cite{mueller-1997} have the form
\[
   D(Q,P) = \left|~\sup_{f\in\calQ} \E[Q]{f(X)} - \E[P]{f(X)}~\right|\,.
\]
Note that the surrounding absolute value can be eliminated by
requiring that $\calQ$ also contains the opposite of every one of its
functions.

\begin{equation}
  \label{eq:ipmform}
  \begin{split}
  D(Q,P) &= \sup_{f\in\calQ} \E[Q]{f(X)} - \E[P]{f(X)}\,\\
         &\text{where $\calQ$ satisfies}\quad
  \forall f\in\calQ\,,~-f\in\calQ~.
  \end{split}
\end{equation}

Therefore an IPM is a special case of \eqref{eq:vform} where the
critic functions~$\fq$ and~$\fp$ are constrained to be identical, and
where $\calQ$ is again constrained to contain the opposite of every
critic function. Whereas expression~\eqref{eq:vform} does not
guarantee that $D(Q,P)$ is finite and is a distance, an IPM is always
a pseudodistance.

\begin{proposition}
  \label{th:ipmproperties}
  Any integral probability metric $D$,~\eqref{eq:ipmform} is a pseudodistance.
\end{proposition}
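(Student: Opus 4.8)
The plan is to verify directly that $D$ satisfies nonnegativity together with the zero property~(\ref{eq:metricdistance}.$\mzero$), symmetry~(\ref{eq:metricdistance}.$ii$), and the triangular inequality~(\ref{eq:metricdistance}.$iii$), while deliberately \emph{not} attempting the separation property~(\ref{eq:metricdistance}.$i$): its possible failure is exactly what distinguishes a pseudodistance from a distance, since $\calQ$ need not be rich enough to tell every pair of distinct measures apart. Throughout I would abbreviate $L_f(Q,P) = \E[Q]{f(X)} - \E[P]{f(X)}$, so that $D(Q,P) = \sup_{f\in\calQ} L_f(Q,P)$, and I would repeatedly invoke the defining hypothesis of \eqref{eq:ipmform} that $\calQ$ is closed under negation, i.e. $f\in\calQ \Rightarrow -f\in\calQ$.

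The zero property, nonnegativity, and symmetry all flow from this negation closure. For the zero property, $L_f(Q,Q)=0$ for every $f$, hence $D(Q,Q)=0$. For nonnegativity I would note that whenever $f\in\calQ$ we also have $-f\in\calQ$ with $L_{-f}(Q,P)=-L_f(Q,P)$; the supremum therefore runs over a value set symmetric about $0$ and is at least $|L_f(Q,P)|\ge 0$ for any single $f$ (assuming $\calQ$ nonempty), so $D(Q,P)\ge 0$. For symmetry I would use the same idea as a change of variable inside the supremum: reindexing by $g=-f$, which is a bijection of $\calQ$ onto itself, turns $L_f(P,Q)$ into $L_g(Q,P)$, giving $D(P,Q)=\sup_{f}L_f(P,Q)=\sup_{g}L_g(Q,P)=D(Q,P)$.

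The triangular inequality comes from linearity of $L_f$ in its measure arguments: for any third measure $R$ and any fixed $f\in\calQ$, one has $L_f(Q,P)=L_f(Q,R)+L_f(R,P)\le D(Q,R)+D(R,P)$, and taking the supremum over $f$ on the left-hand side preserves the bound. The only genuine care needed is bookkeeping with infinite values: I would rely on the convention fixed in Section~\ref{sec:setup} (that $D(Q,P)=0$ if $Q=P$ and $+\infty$ otherwise outside the domain) to keep these additive inequalities meaningful in $\R_+\cup\{+\infty\}$, and I would check that no step secretly requires finiteness of $D$. I do not expect a substantial obstacle here; the argument is the standard verification that a supremum of signed linear functionals over a sign-symmetric family behaves like a seminorm, and the single conceptual point to flag is that separation may legitimately fail.
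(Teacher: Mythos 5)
Your proof is correct and follows essentially the same route as the paper: the triangular inequality via $L_f(Q,P)=L_f(Q,R)+L_f(R,P)\leq D(Q,R)+D(R,P)$ is just a rephrasing of the paper's ``sup over pairs $f_1,f_2$, restrict to the diagonal $f_1=f_2$'' step, and the remaining properties (zero, nonnegativity, symmetry via closure of $\calQ$ under negation) are exactly the ones the paper dismisses as trivial consequences of \eqref{eq:ipmform}. You simply write out those trivial parts explicitly, which is fine.
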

\begin{proof}
  To establish the triangular inequality~(\ref{eq:metricdistance}.$iii$),
  we can write, for all $Q,P,R\in\calM$,
  \begin{eqnarray*}
    D(Q,P)+D(P,R)
    &=& \sup_{f_1,\,f_2\in\calQ} \E[Q]{f_1(X)} - \E[P]{f_1(X)} + \E[P]{f_2(X)} - \E[R]{f_2(X)} \\
    &\geq& \sup_{f_1=f_2\in\calQ} \E[Q]{f_1(X)} - \E[P]{f_1(X)} + \E[P]{f_2(X)} - \E[R]{f_2(X)} \\
    &=& ~~ \sup_{f\in\calQ} ~~ \E[Q]{f(X)} - \E[R]{f(X)} ~=~ D(Q,R)~.
  \end{eqnarray*}
  The other properties of a pseudodistance are trivial consequences
  of~\eqref{eq:ipmform}.
\end{proof}

The most fundamental IPM is the Total Variation (TV) distance.
\begin{equation}
  \label{eq:totalvariation}
  D_{TV}(Q,P)~\stackrel{\Delta}{=}
  ~\sup_{A\in\mathfrak{U}} |P(A)-Q(A)|~=
  \sup_{f\in C(\calX,[0,1])} \E[Q]{f(x)}- \E[P]{f(x)} ~,
\end{equation}
where $C(\calX,[0,1])$ is the space of continuous functions from $\calX$ to $[0,1]$.

\subsection{\eff-Divergences}
\label{sec:fdivergences}

Many classical criteria belong to the family of \eff-divergences
\begin{equation}
  \label{eq:fdivergence}
  D_{\eff\,}(Q,P) ~\stackrel{\Delta}{=}~
    \int \eff\left(\frac{q(x)}{p(x)}\right) p(x)\,d\mu(x)
\end{equation}
where $p$ and $q$ are respectively the densities of $P$ and $Q$
relative to measure $\mu$ and where $\eff\,$ is a continuous convex
function defined on $R_+^*$ such that $\eff\,(1)=0$.

Expression \eqref{eq:fdivergence} trivially
satisfies~(\ref{eq:metricdistance}.$\mzero$).
It is always nonnegative because we can pick a
subderivative $u\in\partial\eff\,(1)$ and use the inequality
$\eff\,(t)\,\geq\,u(t-1)$. This also shows that the separation
property~(\ref{eq:metricdistance}.$i$) is satisfied when
this inequality is strict for all $t\neq1$.

\smallskip
\goodbreak

\begin{proposition}[\cite{nguyen-2010,nowozin-cseke-tomioka-2016} (informal)]
  \label{th:fdivergence}
  Usually,\footnote{\relax The statement holds when there is an $M{>}0$
    such that $\mu\{x:|\eff\,(q(x)/p(x))|{>}M\}{=}0$ Restricting $\mu$ to
    exclude such subsets and taking the limit $M\rightarrow\infty$ may
    not work because $\lim\sup\neq\sup\lim$ in general.  Yet, in
    practice, the result can be verified by elementary calculus for
    the usual choices of $\eff$, such as those shown in
    Table~\ref{tbl:fdivergences}.}
  \[
     D_{\eff\,}(Q,P) ~~ =
     \sup_{\stacktwo{g\:\mathrm{bounded,\,measurable}}{g(\calX)\subset\mathrm{dom}(\eff^{\:*\!})}}
       \E[Q]{g(x)}- \E[P]{\eff^{\:*}(g(x))} ~.
  \]
  where $\eff^{\:*}\!$ denotes the convex conjugate of $\eff$.
\end{proposition}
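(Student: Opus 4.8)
The plan is to derive the variational formula from pointwise Fenchel duality and then to interchange the resulting supremum with the integral over $\calX$. Since $\eff$ is convex and continuous (hence lower semicontinuous), its convex conjugate $\eff^{\:*}$ satisfies the Fenchel--Young inequality $\eff(u)\geq tu-\eff^{\:*}(t)$ for all $u\in\R_+^*$ and all $t\in\mathrm{dom}(\eff^{\:*})$, and the Fenchel--Moreau theorem supplies the matching biconjugate identity $\eff(u)=\sup_{t\in\mathrm{dom}(\eff^{\:*})}\big(tu-\eff^{\:*}(t)\big)$. These two facts are the engine of the whole argument: the inequality gives the ``easy'' bound and the biconjugate identity gives the ``hard'' one.

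First I would prove the inequality ``$\geq$''. For any bounded measurable $g$ with $g(\calX)\subset\mathrm{dom}(\eff^{\:*})$, apply Fenchel--Young with $u=q(x)/p(x)$ and $t=g(x)$, multiply through by $p(x)$, and integrate against $\mu$:
\[
  D_{\eff\,}(Q,P)\;\geq\;\int\big(g(x)\,q(x)-\eff^{\:*}(g(x))\,p(x)\big)\,d\mu(x)
  \;=\;\E[Q]{g(x)}-\E[P]{\eff^{\:*}(g(x))}~.
\]
Taking the supremum over all admissible $g$ establishes this direction with no further effort.

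For the reverse inequality I would exhibit a critic that attains the bound. The biconjugate identity shows that the pointwise maximizer of $t\mapsto t\,q(x)/p(x)-\eff^{\:*}(t)$ is any subgradient of $\eff$ at $q(x)/p(x)$; since $\eff$ is convex on the real half-line, a concrete measurable choice is available via the right derivative, so I would set $g(x)=\eff'_+(q(x)/p(x))$. This $g$ is automatically Borel measurable, being the composition of the measurable map $x\mapsto q(x)/p(x)$ with the nondecreasing function $\eff'_+$; it lands in $\mathrm{dom}(\eff^{\:*})$ because equality in Fenchel--Young forces $\eff^{\:*}(g(x))$ to be finite; and by construction $g(x)\,q(x)/p(x)-\eff^{\:*}(g(x))=\eff(q(x)/p(x))$ pointwise, so that multiplying by $p(x)$ and integrating reproduces $D_{\eff\,}(Q,P)$ exactly. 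Using the explicit right-derivative selection sidesteps any appeal to abstract measurable-selection theorems.

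The hard part is the boundedness requirement on $g$. The pointwise-optimal selection $\eff'_+(q(x)/p(x))$ need not be bounded, whereas the supremum in the statement ranges only over bounded critics, so the supremum and the integral cannot be freely interchanged in general --- a genuine $\limsup\neq\sup\lim$ obstruction, exactly as flagged in the footnote. This is what the auxiliary hypothesis controls: if $|\eff(q(x)/p(x))|\leq M$ for $\mu$-almost every $x$, then the relevant subgradients remain confined to a bounded set, the selection above is already bounded (or can be taken so), and dominated convergence passes the integral through to yield equality. In the general case one would truncate the selection at level $\pm M$, invoke monotone/dominated convergence as $M\to\infty$, and verify the limit directly for the standard choices of $\eff$ collected in Table~\ref{tbl:fdivergences} --- which is precisely why the proposition is stated as holding only ``usually.''
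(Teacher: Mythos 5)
The paper does not actually prove this proposition: it is imported from \cite{nguyen-2010,nowozin-cseke-tomioka-2016}, explicitly labeled ``informal,'' and accompanied only by the footnote delimiting when it holds. So there is no in-paper argument to match yours against; what you wrote is the standard Fenchel-duality derivation used in those references, and its skeleton is sound. The ``$\geq$'' direction via Fenchel--Young applied pointwise at $u=q(x)/p(x)$, $t=g(x)$ and integrated against $p\,d\mu$ is correct and complete. The ``$\leq$'' direction via the biconjugate identity and the measurable right-derivative selection $g(x)=\eff'_+(q(x)/p(x))$ is the right idea, and you correctly identify boundedness of that selection as the only real obstruction.

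One step as stated is wrong, though. You claim that the footnote hypothesis $|\eff(q(x)/p(x))|\leq M$ $\mu$-a.e.\ forces the subgradients into a bounded set. It does not: take $\eff(t)=t\log t$ (Kullback--Leibler). Then $\eff(t)\to 0$ as $t\to 0^+$, so the hypothesis places no lower bound on $q/p$, yet $\eff'_+(t)=\log t+1\to-\infty$ there. So even under the footnote's condition the pointwise-optimal critic can be unbounded, and you cannot dispense with the truncation argument in that case --- the truncation-plus-monotone/dominated-convergence step you describe as the ``general case'' fallback is in fact needed under the stated hypothesis too. That fallback is itself only a sketch (which limit theorem applies, and with what dominating function, depends on the growth of $\eff^{\:*}$ and on integrability of $\eff'_+(q/p)$ under $Q$), but since the proposition is announced as informal and verified ``by elementary calculus for the usual choices of $\eff$,'' this level of detail is consistent with what the paper itself commits to.
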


Table~\ref{tbl:fdivergences} provides examples of \eff-divergences and
provides both the function~$\eff$ and the corresponding conjugate
function~$\eff^{\:*}$ that appears in the variational formulation.  In
particular, as argued in \cite{nowozin-cseke-tomioka-2016}, this
analysis clarifies the probability comparison criteria associated with
the early GAN variants~\cite{goodfellow-2014}.

\begin{table}
  \caption{\label{tbl:fdivergences}
    Various \eff-divergences and the corresponding $\eff$ and $\eff^{\:*}$.}
  \centering
  \begin{tabular}{l@{\quad}ccc}
    & $\mathbf{\eff\,(t)}$ & \textbf{dom}$\mathbf{(\eff^{\:*})}$ & $\mathbf{\eff^{\:*}(u)}$ \\
    \hline\noalign{\smallskip}\relax
    Total variation \eqref{eq:totalvariation}
    &  $\tfrac12|t-1|$ & $[-\tfrac12,\tfrac12]$ & $u$ \\
    Kullback-Leibler \eqref{eq:kldivergence}
    &  $t\log(t)$ & $\R$ & $\exp({u-1})$ \\
    Reverse Kullback-Leibler
    & $-\log(t)$ & $\R_-$ & $-1-\log(-u)$ \\
    GAN's Jensen Shannon \cite{goodfellow-2014}
    & $t\log(t)-(t+1)\log(t+1)$ & $\R_-$ & $-\log(1-\exp(u))$ \\
    \hline
  \end{tabular}
  \par\bigskip
\end{table}

Despite the elegance of this framework, these comparison criteria are
not very attractive when the distributions are supported by
low-dimensional manifolds that may not overlap.
The following simple example shows how this can be a
problem~\cite{arjovsky-chintala-bottou-2017}.

\begin{figure}[t]
  \centering
  \begin{tikzpicture}[scale=0.8]
    \draw[->] (0,0) -- (8,0) node[anchor=north, inner sep=5pt] {$\theta$};
    \draw[->] (0,0) -- (0,3) node[anchor=east,  inner sep=5pt] {$t$};
    \draw (0.000, 0.000) node[anchor=north, inner sep=5pt] {$P_0$};
    \draw (0.875, 0.000) node[anchor=north, inner sep=5pt] {$P_{1/4}$};
    \draw (1.750, 0.000) node[anchor=north, inner sep=5pt] {$P_{1/2}$};
    \draw (3.500, 0.000) node[anchor=north, inner sep=5pt] {$P_1$};
    \draw (7.000, 0.000) node[anchor=north, inner sep=5pt] {$P_2$};
    \node [left] at (0, 0.000) {$0$};
    \node [left] at (0, 2.500) {$1$};
    \draw[] (0.000, 0.000) -- (0.000,  -0.100);
    \draw[] (0.875, 0.000) -- (0.875,  -0.100);
    \draw[] (1.750, 0.000) -- (1.750,  -0.100);
    \draw[] (3.500, 0.000) -- (3.500,  -0.100);
    \draw[] (7.000, 0.000) -- (7.000,  -0.100);
    \draw[very thick, red]   (0.000, 0.000) -- (0.000,  2.500);
    \draw[very thick, blue] (0.875, 0.000) -- (0.875,  2.500);
    \draw[very thick, blue] (1.750, 0.000) -- (1.750,  2.500);
    \draw[very thick, blue] (3.500, 0.000) -- (3.500,  2.500);
    \draw[very thick, blue] (7.000, 0.000) -- (7.000,  2.500);
  \end{tikzpicture}
  \caption{\label{fig:segments} Let distribution $P_\theta$ be
    supported by the segment $\{(\theta,t)~t\in[0,1]\}$ in $\R^2$.
    According to both the TV distance \eqref{eq:totalvariation} and
    the \eff-divergences \eqref{eq:fdivergence}, the sequence of
    distributions $(P_{1/i})$ does not converge to $P_0$. However
    this sequence converges to $P_0$ according to either the
    Wasserstein distances \eqref{eq:wasserstein} or the Energy
    distance~\eqref{eq:energydistance}.}
\end{figure}
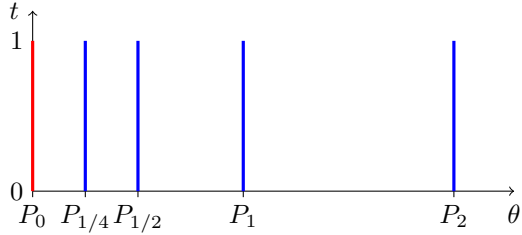

\begin{example}
  \label{ex:segments}
  Let $U$ be the uniform distribution on the real segment $[0,1]$ and
  consider the distributions $P_\theta=(\theta,x)\pfd U(x)$ defined on
  $\R^2$.  Because $P_0$ and $P_\theta$ have disjoint support for
  $\theta\neq0$, neither the total variation distance
  $D_{TV}(P_0,P_\theta)$ nor the \eff-divergence
  $D_{\eff\:}(P_0,P_\theta)$ depend on the exact value of
  $\theta$. Therefore, according to the topologies induced by these
  criteria on $\calM$, the sequence of distributions $(P_{1/i})$
  does not converge to $P_0$ (Figure~\ref{fig:segments}).
\end{example}

The fundamental problem here is that neither the total variation
distance~\eqref{eq:totalvariation} nor the
\eff-divergences~\eqref{eq:fdivergence} depend on the
distance~$d(x,y)$ defined on the sample space $\calX$. The
minimization of such a criterion appears more effective for adjusting
the probability values than for matching the distribution supports.

\subsection{Wasserstein distance}
\label{sec:wasserstein}

For any $p\geq1$, the $p$-Wasserstein distance (WD) is the $p$-th root of
\begin{equation}
  \label{eq:wasserstein}
  \forall Q,P\in\calM^p\qquad
  W_p(Q,P)^p ~\stackrel{\Delta}{=}~
   \inf_{\pi\in\Pi(Q,P)} \E[(x,y)\sim\pi]{\,d(x,y)^p\,}~,
\end{equation}
where $\Pi(Q,P)$ represents the set of all measures $\pi$ defined on
$\calX\times\calX$ with marginals $x\pfd\pi(x,y)$ and $y\pfd\pi(x,y)$ 
respectively equal to $Q$ and $P$. Intuitively, $d(x,y)^p$ represents
the cost of transporting a grain of probability from point~$x$ to
point~$y$, and the joint distributions~$\pi\in\Pi(Q,P)$ represent
transport plans.

Since $d(x,y)\leq d(x,x_0)+d(x_0,y)\leq2\max\{d(x,x_0),d(x_0,y)\}$,
\begin{equation}
  \label{eq:wpdomain}
  \forall\,Q,P\in\calM^p \qquad
    W_p(Q,P)^p \leq \E[\stacktwo{x\sim Q}{y\sim P}]{d(x,y)^p} < \infty~.
\end{equation}

\begin{example}
  Let $P_\theta$ be defined as in Example~\ref{ex:segments}.  Since it
  is easy to see that the optimal transport plan from $P_0$ to
  $P_\theta$ is~$\pi^*=((0,t),(\theta,t))\pfd U(t)$, the
  Wassertein distance $W_p(P_0,P_\theta)=|\theta|$ converges to zero
  when $\theta$ tends to zero. Therefore, according to the topology
  induced by the Wasserstein distance on $\calM$, the sequence of
  distributions~$(P_{1/i})$ converges to $P_0$
  (Figure~\ref{fig:segments}).
\end{example}

Thanks to the Kantorovich duality theory, the Wasserstein distance is
easily expressed in the variational form \eqref{eq:vform}.  We
summarize below the essential results useful for this work and we
direct the reader to \cite[Chapters~4~and~5]{villani-2009}
for a full exposition.

\begin{theorem}[{\cite[Theorem~4.1]{villani-2009}}]
  \label{th:existenz}
  Let $\calX,\calY$ be two Polish metric spaces and
  $c:\calX\times\calY\rightarrow\R_+\cup\{+\infty\}$ be a nonnegative
  continuous cost function. Let $\Pi(Q,P)$ be the set of probablity
  measures on $\calX\times\calY$ with marginals $Q\in\calM$ and
  $P\in\pspace{\calY}$. There is a $\pi^*\in\Pi(Q,P)$ that minimizes
  $\E[(x,y)\sim\pi]{c(x,y)}$ over all $\pi\in\Pi(Q,P)$.
\end{theorem}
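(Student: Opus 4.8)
The plan is to use the direct method of the calculus of variations: I will establish that the feasible set $\Pi(Q,P)$ is nonempty and compact for the topology of weak (narrow) convergence of measures, that the objective $\pi\mapsto\E[(x,y)\sim\pi]{c(x,y)}$ is lower semicontinuous for that same topology, and then invoke the elementary fact that a lower semicontinuous function attains its infimum on a nonempty compact set.

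First I would dispose of nonemptiness, which is immediate since the product measure $Q\otimes P$ always belongs to $\Pi(Q,P)$. Next I would establish compactness of $\Pi(Q,P)$ via Prokhorov's theorem, whose hypotheses are available because $\calX$ and $\calY$ are Polish. The key observation is that $\Pi(Q,P)$ is tight: fixing $\epsilon>0$, tightness of the individual marginals $Q$ and $P$ produces compact sets $K_\calX\subset\calX$ and $K_\calY\subset\calY$ with $Q(\calX\setminus K_\calX)<\epsilon/2$ and $P(\calY\setminus K_\calY)<\epsilon/2$; then for every coupling $\pi\in\Pi(Q,P)$ a union bound over the two ``bad'' slabs gives $\pi\big((K_\calX\times K_\calY)^c\big)\leq Q(\calX\setminus K_\calX)+P(\calY\setminus K_\calY)<\epsilon$, and $K_\calX\times K_\calY$ is compact in $\calX\times\calY$. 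Prokhorov then yields relative compactness, and I would close the set $\Pi(Q,P)$ by noting that weak limits preserve the marginals: the coordinate projections are continuous, so if $\pi_n\to\pi^*$ weakly then the marginals of $\pi^*$ are the weak limits of the marginals of the $\pi_n$, namely $Q$ and $P$ again, whence $\pi^*\in\Pi(Q,P)$.

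The heart of the argument is the lower semicontinuity of the cost functional, which is delicate precisely because $c$ is only assumed continuous and nonnegative; it may be unbounded and even take the value $+\infty$. I would handle this by approximating $c$ from below with the truncations $c_n=\min(c,n)$, which are bounded and continuous. For bounded continuous integrands the map $\pi\mapsto\int c_n\,d\pi$ is continuous for weak convergence by the very definition of that topology, hence in particular lower semicontinuous; and since $c_n\uparrow c$ pointwise, monotone convergence gives $\int c\,d\pi=\sup_n\int c_n\,d\pi$ for each fixed $\pi$. A pointwise supremum of lower semicontinuous functions is lower semicontinuous, so $\pi\mapsto\int c\,d\pi$ inherits that property.

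Finally I would assemble the direct method. Taking a minimizing sequence $(\pi_n)$ in $\Pi(Q,P)$, compactness extracts a weakly convergent subsequence $\pi_{n_k}\to\pi^*\in\Pi(Q,P)$, and lower semicontinuity gives $\int c\,d\pi^*\leq\liminf_k\int c\,d\pi_{n_k}=\inf_{\pi\in\Pi(Q,P)}\int c\,d\pi$, so $\pi^*$ is optimal. I expect the main obstacle to be the lower semicontinuity step under the unboundedness of $c$: the truncation-plus-monotone-convergence device is what makes it work, and one must be careful to use $c_m\leq c$ to pass first through $\int c_m\,d\pi^*=\lim_k\int c_m\,d\pi_{n_k}\leq\liminf_k\int c\,d\pi_{n_k}$ and only afterwards let $m\to\infty$.
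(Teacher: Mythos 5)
The paper does not prove this statement; it is quoted verbatim from Villani's book, and your argument is precisely the standard proof given there (tightness of $\Pi(Q,P)$ inherited from the marginals, Prokhorov, closedness of the marginal constraints under weak limits, and lower semicontinuity of the cost via monotone truncation). Your proof is correct, including the careful order of limits in the last step, so there is nothing to add.
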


\begin{definition}
  Let $\calX,\calY$ be two Polish metric spaces and
  $c:\calX{\times}\calY\rightarrow\R_+{\cup}\{+\infty\}$ be a
  nonnegative continuous cost function. The pair of functions
  $f:\calX\rightarrow\R$ and $g:\calY\rightarrow\R$ is $c$-conjugate
  when
  \begin{equation}
    \label{eq:cconjugate}
    \forall x\in\calX~ f(x)=\inf_{y\in\calY} g(y)+c(x,y) \quad\text{and}\quad
    \forall y\in\calY~ g(y)=\sup_{x\in\calX} f(x)-c(x,y)~.
  \end{equation}
\end{definition}


\medskip

\begin{theorem}[Kantorovich duality {\cite[Theorem~5.10]{villani-2009}}]
  \label{th:kantorovich}
  Let $\calX$ and $\calY$ be two Polish metric spaces and
  $c:\calX{\times}\calY\rightarrow\R_+\cup\{+\infty\}$ be a nonnegative
  continuous cost function. For all $Q\in\calM$ and
  $P\in\pspace{\calY}$, let $\Pi(Q,P)$ be the set of probability
  distributions defined on $\calX\times\calY$ with marginal
  distributions $Q$ and $P$. Let $\calQ_c$ be the set of all pairs
  $(\fq,\fp)$ of respectively $Q$ and $P$-integrable functions
  satisfying the property $\forall x\in\calX~y\in\calY$,
  $\fq(x)-\fp(y)\leq c(x,y)$.
  
  \begin{itemize}
  \item[$i$)]
    We have the duality
    \begin{align}
      \label{eq:kantoprimal}
      \min_{\pi\in\Pi(Q,P)} & \E[(x,y)\sim\pi]{ c(x,y)} ~~=~~  \\
      \label{eq:kantodual}
      & \sup_{(\fq,\fp)\in\calQ_c} \E[x\sim Q]{\fq(x)} - \E[y\sim P]{\fp(y)}~.
    \end{align}
  \item[$ii$)] Further assuming that $\E[{x\sim Q}\,{y\sim P}]{c(x,y)}<\infty$,\par
    \begin{itemize}[nosep]
    \item[a)] Both \eqref{eq:kantoprimal} and \eqref{eq:kantodual}
      have solutions with finite cost.
    \item[b)] The solution $(\fq^*,fp^*)$ of~\eqref{eq:kantodual} is a $c$-conjugate pair.
    \end{itemize}
  \end{itemize}
\end{theorem}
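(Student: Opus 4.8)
The plan is to prove the two inequalities of the duality separately and then upgrade the optimal dual pair to a $c$-conjugate one. The easy direction (weak duality, the primal value \eqref{eq:kantoprimal} dominates the dual value \eqref{eq:kantodual}) is immediate: for any transport plan $\pi\in\Pi(Q,P)$ and any admissible pair $(\fq,\fp)\in\calQ_c$, the marginal conditions give $\E[x\sim Q]{\fq(x)}-\E[y\sim P]{\fp(y)}=\E[(x,y)\sim\pi]{\fq(x)-\fp(y)}\leq\E[(x,y)\sim\pi]{c(x,y)}$, using $\fq(x)-\fp(y)\leq c(x,y)$ pointwise. Taking the supremum over $(\fq,\fp)$ on the left and the infimum over $\pi$ on the right shows that \eqref{eq:kantodual} is at most \eqref{eq:kantoprimal}, and the primal infimum is attained by Theorem~\ref{th:existenz}.

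The substance of the theorem is the \emph{absence of a duality gap}, i.e. the reverse inequality. First I would treat the case of a bounded continuous cost, where the cleanest tool is the Fenchel--Rockafellar duality theorem in the Banach space $E=C_b(\calX\times\calY)$ of bounded continuous functions. The idea is to encode the two ingredients of the dual problem as convex functionals on $E$ (up to sign conventions): one functional $\Theta$ that is finite, with value $-\E[Q]{\fq}+\E[P]{\fp}$, on functions of the separated form $u(x,y)=\fq(x)-\fp(y)$ and $+\infty$ otherwise, and a second functional $\Xi$ that is the indicator of the constraint cone $\{u:u(x,y)\leq c(x,y)\}$. One then checks the qualification hypothesis (both are finite and one is continuous at a common point, e.g. a strictly negative constant function, which lies in the interior of the constraint set since $c\geq0$) and computes the two convex conjugates. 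The key computation is that the conjugate of the constraint indicator is the integral $\pi\mapsto\E[(x,y)\sim\pi]{c(x,y)}$ restricted to nonnegative measures, while the conjugate of $\Theta$ enforces exactly the marginal constraints $\pi\in\Pi(Q,P)$; together they reconstruct the primal problem, and Fenchel--Rockafellar delivers equality of the two optimal values.

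To remove the boundedness assumption and reach a general nonnegative continuous cost, I would approximate $c$ from below by the bounded continuous costs $c_n=\min\{c,n\}$. The bounded-case result gives equality of primal and dual values for each $c_n$; the primal values increase to \eqref{eq:kantoprimal} by monotone convergence, and one passes to the limit on the dual side using tightness of $\Pi(Q,P)$ (Prokhorov) together with lower semicontinuity of $\pi\mapsto\E[(x,y)\sim\pi]{c(x,y)}$, exactly as in Theorem~\ref{th:existenz}. This interchange of limit and supremum is the step I expect to be the main obstacle, because $C_b(\calX\times\calY)$ is not the dual of a convenient space when $\calX,\calY$ are merely Polish (no local compactness, hence no clean Riesz representation of $E^*$ by measures), so one must secure the no-gap property through tightness and a minimax argument rather than a naive identification of $E^*$ with measures.

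Finally, for part $(ii)$, the extra hypothesis $\E[{x\sim Q}\,{y\sim P}]{c(x,y)}<\infty$ forces both optimal values to be finite: taking $\pi=Q\otimes P$ bounds the primal, and the admissible pair $\fq=\fp=0$ bounds the dual below. Given any near-optimal admissible $(\fq,\fp)$, I would improve it by the double $c$-transform: replace $\fp$ by $\fp'(y)=\sup_{x}\{\fq(x)-c(x,y)\}$ and then $\fq$ by $\fq'(x)=\inf_{y}\{\fp'(y)+c(x,y)\}$. Each replacement preserves admissibility and can only increase the objective $\E[Q]{\fq}-\E[P]{\fp}$ (we decrease $\fp$ and increase $\fq$), so an optimizer may be taken in this form; by construction it satisfies exactly the fixed-point relations~\eqref{eq:cconjugate}, i.e. it is $c$-conjugate. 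The only care needed here is the measurability and $Q$-/$P$-integrability of the transforms, which the finiteness assumption guarantees $Q$- and $P$-almost everywhere.
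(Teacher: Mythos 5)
The paper does not prove this theorem at all: it is imported verbatim from Villani with the citation [Theorem~5.10], and the text explicitly directs the reader to Chapters~4 and~5 of that book for the exposition. So there is no in-paper proof to compare against; what you have written is a sketch of the classical argument (Fenchel--Rockafellar for bounded costs, truncation $c_n=\min\{c,n\}$ plus monotone convergence and tightness for general $c$, and the double $c$-transform for part~$(ii)$), which is essentially the route of Villani's earlier \emph{Topics in Optimal Transportation} rather than the cyclical-monotonicity proof given for Theorem~5.10 in the 2009 book.

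Two steps in your sketch are genuine gaps rather than routine omissions. First, the Fenchel--Rockafellar step: you correctly flag that the topological dual of $C_b(\calX\times\calY)$ is not the space of Borel measures when $\calX,\calY$ are merely Polish, but you do not say how to get past this, and this is exactly where the work lies. One must either first prove the duality on compact spaces (where Riesz representation applies) and exhaust a Polish space by compact sets carrying most of the mass of $Q$ and $P$, or else decompose the dual functional into its countably additive and purely finitely additive parts and show the latter can be discarded without decreasing the value, using $c\geq0$ and tightness. Saying ``a minimax argument'' does not discharge this. Second, part~$(ii)(a)$ for the dual problem: the double $c$-transform improves any \emph{near}-optimal admissible pair and shows the supremum in \eqref{eq:kantodual} may be restricted to $c$-conjugate pairs, but it does not produce a maximizer. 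To get an actual solution $(\fq^*,\fp^*)$ you need to extract a limit from a maximizing sequence of $c$-conjugate pairs (after normalizing by additive constants), which requires some equicontinuity or pointwise-compactness control that your hypotheses do not obviously supply for an unbounded continuous $c$; this is precisely why Villani's 2009 proof switches to constructing $\fq^*$ directly from a $c$-cyclically monotone support of the optimal plan via the R\"uschendorf formula. Also note that under the stated hypothesis $\E[{x\sim Q}\,{y\sim P}]{c(x,y)}<\infty$ the transforms $\fp'$, $\fq'$ can a priori take the value $+\infty$ or $-\infty$ off a set of full measure, so the integrability claim at the end needs the same care. The weak-duality half and the truncation/lower-semicontinuity limit on the primal side are fine as written.
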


\medskip

\begin{corollary}[{\cite[Particular~case~5.16]{villani-2009}}]
  \label{th:kantomore}
  Under the same conditions as Theorem~\ref{th:kantorovich}.$ii$, when
  $\calX=\calY$ and when the cost function $c$ is a distance, that is,
  satisfies \eqref{eq:metricdistance}, the dual optimization problem
  \eqref{eq:kantodual} can be rewritten as
  \[
      \max_{f\in\mathrm{Lip1}} \E[Q]{f(x)}-\E[P]{f(x)}~,
  \]
  where $\mathrm{Lip1}$ is the set of real-valued $1$-Lipschitz
  continuous functions on $\calX$.
\end{corollary}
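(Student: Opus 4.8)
The plan is to prove the claimed identity by squeezing the quantity $\max_{f\in\mathrm{Lip1}}\E[Q]{f(x)}-\E[P]{f(x)}$ between two copies of the dual value~\eqref{eq:kantodual} specialized to $c=d$. The easy inequality is ``$\leq$'': for any $1$-Lipschitz $f$, symmetry together with $|f(x)-f(y)|\leq d(x,y)$ gives $f(x)-f(y)\leq d(x,y)$ for all $x,y$, so the diagonal pair $(f,f)$ is admissible in $\calQ_c$ and attains the same objective. Hence every candidate on the left is feasible on the right, and $\max_{f\in\mathrm{Lip1}}\leq\sup_{\calQ_c}$.

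The heart of the argument is the reverse inequality ``$\geq$'', for which I would collapse an arbitrary feasible pair $(\fq,\fp)\in\calQ_c$ onto a single $1$-Lipschitz function via the $c$-transform
\[
  \tilde f(x)~=~\inf_{y\in\calX}\ \fp(y)+d(x,y)~.
\]
I would then check three properties, drawing in turn on the triangular inequality, the symmetry axiom, and $d(x,x)=0$. First, $\tilde f$ is $1$-Lipschitz: for any $x,x'$ the bound $d(x,y)\leq d(x,x')+d(x',y)$ gives, after taking the infimum over $y$, the estimate $\tilde f(x)\leq\tilde f(x')+d(x,x')$, and symmetry upgrades this to $|\tilde f(x)-\tilde f(x')|\leq d(x,x')$. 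Second, $\tilde f\geq\fq$ pointwise, since the feasibility constraint $\fq(x)-\fp(y)\leq d(x,y)$ rearranges to $\fp(y)+d(x,y)\geq\fq(x)$ for every $y$. Third, $\tilde f\leq\fp$ pointwise, by taking $y=x$ in the infimum and using $d(x,x)=0$. These two pointwise bounds yield $\E[Q]{\tilde f(x)}\geq\E[Q]{\fq(x)}$ and $\E[P]{\tilde f(x)}\leq\E[P]{\fp(x)}$, so the single-function objective for $\tilde f$ dominates the pair objective for $(\fq,\fp)$. Taking the supremum over feasible pairs gives $\max_{f\in\mathrm{Lip1}}\geq\sup_{\calQ_c}$, and combining the two inequalities gives equality. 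That the supremum is attained, justifying the $\max$, follows because Theorem~\ref{th:kantorovich}.$ii$(a) supplies an optimal dual pair whose $c$-transform is a $1$-Lipschitz maximizer.

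The step I expect to require the most care is not the three axiom checks, which are short, but the integrability bookkeeping that makes the expectations well defined. A $1$-Lipschitz function obeys $|\tilde f(x)|\leq|\tilde f(x_0)|+d(x,x_0)$, so $\tilde f$ is both $Q$- and $P$-integrable as soon as $Q$ and $P$ have finite first moments. I would extract these from the standing hypothesis $\E[x\sim Q,\,y\sim P]{d(x,y)}<\infty$ of Theorem~\ref{th:kantorovich}.$ii$ by a Fubini argument: for $P$-almost every $y_0$ the map $x\mapsto d(x,y_0)$ is $Q$-integrable and, symmetrically, $x_0\mapsto d(x_0,y)$ is $P$-integrable, so choosing such a point as the origin secures finite first moments for both measures. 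With integrability in hand, the pointwise inequalities transfer to the expectations and the argument closes.
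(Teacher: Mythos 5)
Your argument is correct. The paper itself gives no proof of this corollary---it simply defers to Villani's Particular Case 5.16---and your proof is precisely the standard argument used there: the trivial inclusion of diagonal pairs $(f,f)$ for $f\in\mathrm{Lip1}$ in one direction, and in the other the collapse of an arbitrary admissible pair $(\fq,\fp)\in\calQ_c$ onto the single $1$-Lipschitz function $\tilde f(x)=\inf_y \fp(y)+d(x,y)$, using the triangular inequality for the Lipschitz bound, the constraint for $\tilde f\geq\fq$, and $d(x,x)=0$ for $\tilde f\leq\fp$. Your integrability remark (finite first moments of $Q$ and $P$ extracted from $\E[\stacktwo{x\sim Q}{y\sim P}]{d(x,y)}<\infty$, so that $1$-Lipschitz functions are integrable and the diagonal pairs are genuinely admissible) is exactly the bookkeeping the paper's Section~\ref{sec:setup} anticipates, and the attainment of the maximum via the $c$-transform of the optimal dual pair from Theorem~\ref{th:kantorovich}.$ii$(a) is sound.
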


\smallskip
Thanks to Theorem~\ref{th:kantorovich}, we can write the $p$-th power
of the $p$-Wasserstein distance in variational form
\begin{equation}
  \label{eq:wpdual}
    \forall Q,P\in\calM^p \qquad
    W_p(Q,P)^p = \sup_{(\fq,\fp)\in\calQ_c} \E[Q]{\fq(x)}-\E[P]{\fp(x)}~,
\end{equation}
where $\calQ_c$ is defined as in Theorem~\ref{th:kantorovich} for the
cost $c(x,y)=d(x,y)^p$. Thanks to Corollary~\ref{th:kantomore}, we can
also obtain a simplified expression in IPM form for the
$1$-Wasserstein distance.
\begin{equation}
  \label{eq:emdual}
  \forall Q,P\in\calM^1 \qquad
  W_1(Q,P) = \sup_{f\in\mathrm{Lip1}} \E[Q]{f(x)}-\E[P]{f(x)}~.
\end{equation}

Let us conclude this presentation of the Wassertein distance by
mentioning that the definition~\eqref{eq:wasserstein} immediately
implies several distance properties: zero when both
distributions are equal~(\ref{eq:metricdistance}.$\mzero$), strictly
positive when they are different~(\ref{eq:metricdistance}.$i$), and
symmetric~(\ref{eq:metricdistance}.$ii$). Property~\ref{th:ipmproperties}
gives the triangular inequality (\ref{eq:metricdistance}.$iii$) for
the case $p=1$.  In the general case, the triangular inequality can
also be established using the Minkowsky inequality
\cite[Chapter~6]{villani-2009}.

\subsection{Energy Distance and Maximum Mean Discrepancy}
\label{sec:ed-mmd}

The Energy Distance (ED) \cite{szekely-2002} between the probability
distributions $Q$ and $P$ defined on the Euclidean space $\R^d$ is
the square root\footnotemark~of \footnotetext{We take the square root
  because this is the quantity that behaves like a distance.}
\begin{equation}
  \label{eq:energydistance}
  \calE(Q,P)^2 ~\stackrel{\Delta}{=}~
  2\E[\stacktwo{x\sim{Q}}{y\sim{P}}]{\|x-y\|}
  - \E[\stacktwo{x\sim{Q}}{x'\!\sim{Q}}]{\|x-x'\|}
  - \E[\stacktwo{y\sim{P}}{y'\!\sim{P}}]{\|y-y'\|}~,
\end{equation}
where, as usual, $\|\cdot\|$ denotes the Euclidean distance.

Let $\hat{q}$ and $\hat{p}$ represent the characteristic functions
of the distribution $Q$ and $P$ respectively. Thanks to a neat
Fourier transform argument \cite{szekely-2002,szekely-rizzo-2013},
\begin{equation}
  \label{eq:szekely}
  \calE(Q,P)^2 = \frac{1}{c_d} \int_{\R^d}
  \frac{|\hat{q}(t)-\hat{p}(t)|^2}{\|t\|^{d+1}} dt \quad
  \text{with~ $c_d=\frac{\pi^{\tfrac{d+1}{2}}}{\Gamma(\tfrac{d+1}{2})}$}~.
\end{equation}
Since there is a one-to-one mapping between distributions and
characteristic functions, this relation establishes an isomorphism
between the space of probability distributions equipped with the ED
distance and the space of the characteristic functions equipped with
the weighted $L_2$ norm given in the right-hand side
of~\eqref{eq:szekely}. As a consequence, $\calE(Q,P)$ satisfies
the properties \eqref{eq:metricdistance} of a distance.

Since the squared ED is expressed with a simple combination of
expectations, it is easy to design a stochastic minimization algorithm
that relies only on two oracles producing samples from each
distribution~\cite{bouchacourt-2016,bellemare-2017}.  This makes the
energy distance a computationally attractive criterion for training
the implicit models discussed in Section~\ref{sec:implicitmodeling}.

\medskip
\paragraph{Generalized ED~}
It is therefore natural to ask whether we can meaningfully
generalize~\eqref{eq:energydistance} by replacing the Euclidean
distance $\|x-y\|$ with a symmetric function $d(x,y)$.
\begin{equation}
  \label{eq:genergydistance}
  \calE_d(Q,P)^2 ~=~
  2\E[\stacktwo{x\sim{Q}}{y\sim{P}}]{d(x,y)}
  - \E[\stacktwo{x\sim{Q}}{x'\!\sim{Q}}]{d(x,x')}
  - \E[\stacktwo{y\sim{P}}{y'\!\sim{P}}]{d(y,y')}~.
\end{equation}
The right-hand side of this expression is well defined when
$Q,P\in\calM^1$. It is obviously
symmetric~(\ref{eq:metricdistance}.$ii$) and trivially
zero~(\ref{eq:metricdistance}.$\mzero$) when both distributions are
equal. The first part of the following theorem gives the necessary and
sufficient conditions on $d(x,y)$ to ensure that the right-hand side
of \eqref{eq:genergydistance} is nonnegative and therefore can be the
square of $\calE_d(Q,P)\in\R_+$.  We shall see later that the
triangular inequality~(\ref{eq:metricdistance}.$iii$) comes for free
with this condition (Corollary~\ref{th:eddistance}). The second part
of the theorem gives the necessary and sufficient condition for
satisfying the separation property~(\ref{eq:metricdistance}.$i$).

\goodbreak

\begin{theorem}[\cite{zinger-1992}]
  \label{th:zinger}
  The right-hand side of definition~\eqref{eq:genergydistance} is:
  \begin{itemize}
  \item[$i$)] nonnegative for all $P,Q$ in $\calM^1$ if and only
    if the symmetric function $d$ is a
    \emph{negative~definite~kernel}, that is, 
    \begin{multline}
      \label{eq:ndkernel}
      \forall n\in\mathbb{N} ~~
      \forall x_1\dots x_n\in\calX ~~
      \forall c_1\dots c_n\in\R \quad \\
      \sum_{i=1}^n c_i=0 ~~ \Longrightarrow
      ~ \sum_{i=1}^n \sum_{j=1}^n d(x_i,x_j) c_i c_j  \leq  0~.
    \end{multline}
    
  \item[$ii$)] strictly positive for all $P\neq Q$ in $\calM^1$ if and
    only if the function~$d$ is a
    \emph{strongly~negative~definite~kernel}, that is, a negative
    definite kernel such that, for any probability measure
    $\mu\in\calM^1$ and any $\mu$-integrable real-valued function $h$
    such that $\E[\mu]{h(x)}=0$,
    \[
       \E[\stacktwo{x\sim\mu}{y\sim\mu}]{\,d(x,y)h(x)h(y)\,}=0
         ~~\Longrightarrow~~  h(x)=0 ~~\text{$\mu$-almost everywhere.}
    \]
  \end{itemize}
\end{theorem}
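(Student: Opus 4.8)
The plan is to route everything through the signed measure $\nu=Q-P$, which has total mass $\nu(\calX)=0$. Using the symmetry of $d$ and expanding the product $d\nu\,d\nu=(dQ-dP)(dQ-dP)$, one checks directly that
\[
  \calE_d(Q,P)^2 ~=~ -\int\!\!\int d(x,y)\,d\nu(x)\,d\nu(y)~,
\]
and every integral here is finite because $Q,P\in\calM^1$ guarantees $\E[x\sim Q,y\sim P]{d(x,y)}<\infty$ together with the two diagonal analogues. Both parts of the theorem thus reduce to studying the quadratic form $B(\nu)=\int\!\int d(x,y)\,d\nu(x)\,d\nu(y)$ as $\nu$ ranges over the mass-zero signed measures expressible as $Q-P$ with $Q,P\in\calM^1$. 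Note that any finite signed measure of total mass zero is, up to a positive scalar, of this form: its Jordan decomposition $\nu=\nu^+-\nu^-$ has $\nu^+(\calX)=\nu^-(\calX)=m$, and when $m>0$ we may take $Q=\nu^+/m$ and $P=\nu^-/m$.

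For part $(i)$, the forward implication is immediate by specialization to atomic measures. Given points $x_1,\dots,x_n$ and reals $c_1,\dots,c_n$ with $\sum_i c_i=0$, split them into their positive and negative parts, set $m=\sum_{c_i>0}c_i=-\sum_{c_i<0}c_i$, and (assuming $m>0$, the case $m=0$ being trivial) form the finitely supported probability measures $Q$ and $P$ carrying the positive and negative weights normalized by $m$; these lie in $\calM^1$ since $d$ is finite. Then $\nu=m(Q-P)=\sum_i c_i\delta_{x_i}$ and $\sum_{i,j}d(x_i,x_j)c_ic_j=B(\nu)=-m^2\calE_d(Q,P)^2\le 0$, which is exactly negative definiteness. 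For the converse I would approximate: draw i.i.d.\ samples from $Q$ and from $P$, form the empirical measures $Q_n,P_n$, and invoke the discrete inequality to get $\calE_d(Q_n,P_n)^2\ge 0$ for every $n$. Letting $n\to\infty$, the law of large numbers for the relevant $V$-statistics gives $\calE_d(Q_n,P_n)^2\to\calE_d(Q,P)^2$, whence $\calE_d(Q,P)^2\ge0$.

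For part $(ii)$, fix a dominating measure, e.g.\ $\mu=\tfrac12(Q+P)\in\calM^1$, and write $d\nu=h\,d\mu$ with $h=\tfrac{dQ}{d\mu}-\tfrac{dP}{d\mu}$ and $\E[\mu]{h}=\nu(\calX)=0$; then $\calE_d(Q,P)^2=-\E[x\sim\mu,y\sim\mu]{d(x,y)h(x)h(y)}$. If $d$ is strongly negative definite and $\calE_d(Q,P)^2=0$, the displayed quadratic form vanishes with $\E[\mu]{h}=0$, forcing $h=0$ $\mu$-almost everywhere, i.e.\ $Q=P$; this is strict positivity. Conversely, if strong negative definiteness fails there exist $\mu\in\calM^1$ and a $\mu$-integrable $h$ with $\E[\mu]{h}=0$, $h\neq0$ on a set of positive measure, yet $\E[x\sim\mu,y\sim\mu]{d(x,y)h(x)h(y)}=0$; decomposing $h=h^+-h^-$ and normalizing by $m=\int h^+\,d\mu=\int h^-\,d\mu>0$ produces $Q=h^+\mu/m$ and $P=h^-\mu/m$ in $\calM^1$, with $Q\neq P$ and $\calE_d(Q,P)^2=0$, contradicting strict positivity.

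The main obstacle is the converse of $(i)$: upgrading negative definiteness on finite point sets to nonnegativity of the continuous quadratic form $B$ for arbitrary $\calM^1$ measures. The delicate point is that $d$ may be unbounded, so the passage to the limit $\calE_d(Q_n,P_n)^2\to\calE_d(Q,P)^2$ must be justified by the finiteness of first moments ($\calM^1$) together with the continuity of $d$, which control the diagonal and cross terms of the $V$-statistic; a secondary bookkeeping issue is checking that the probability measures built in the two converse constructions genuinely lie in $\calM^1$ and that all the double integrals are absolutely convergent.
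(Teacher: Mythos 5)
Your proof is correct and follows essentially the same route as the paper's: your quadratic form $B(\nu)$ on mass-zero signed measures is exactly the paper's $S(\mu,h)$ after writing $d\nu=h\,d\mu$ with $\mu=\tfrac12(Q{+}P)$, your atomic specialization and Jordan-decomposition constructions correspond to the paper's observations (a) and (b), and the converse of $(i)$ is handled in both cases by passing to empirical measures and invoking the law of large numbers for the $V$-statistic (the paper cites Khinchin and its own Theorem~\ref{th:edvstat} for precisely the limit you flag as the delicate step). No substantive difference in approach or gap to report.
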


\begin{remark}
  The definition of a strongly negative kernel is best explained by
  considering how its meaning would change if we were only considering
  probability measures $\mu$ with finite support $\{x_1\dots x_n\}$.
  This amounts to requiring that \eqref{eq:ndkernel} is an equality
  only if all the $c_i$s are zero. However, this weaker property is
  not sufficient to ensure that the separation
  property~(\ref{eq:metricdistance}.$i$) holds.
\end{remark}

\begin{remark}
  \label{rem:euclidianstronglynegative}
  The relation~\eqref{eq:szekely} therefore means that the Euclidean
  distance on $\R^d$ is a strongly negative definite kernel. In fact,
  it can be shown that $d(x,y)=\|x-y\|^\beta$ is a strongly negative
  definite kernel for $0<\beta<2$ \cite{szekely-rizzo-2013}. When $\beta=2$,
  it is easy to see that $\calE_d(Q,P)$ is simply the distance
  between the distribution means and therefore cannot
  satisfy the separation property (\ref{eq:metricdistance}.$i$).
\end{remark}

\begin{proofx}{Proof of Theorem~\ref{th:zinger}}
  Let $E(Q,P)$ be the right-hand side of \eqref{eq:genergydistance}
  and let $S(\mu,h)$ be the quantity $\E[x,y\sim \mu]{\,d(x,y)h(x)h(y)\,}$
  that appears in clause ($ii$). Observe:
  \smallskip
  \begin{itemize}[nosep]
  \item[a)] Let $Q,P\in\calM^1$ have respective density functions $q(x)$
    and $p(x)$ with respect to measure $\mu=(Q+P)/2$.
    Function $h=q-p$ then satisfies $\E[\mu]{h}{=}0$, and
    \[
    E(Q,P)
    =\E[\stacktwo{x\sim\mu}{y\sim\mu}]{\,
        \big(q(x)p(y)+q(y)p(x)-q(x)q(y)-p(x)p(y)\big)\,d(x,y)\,}
    = -S(\mu,h)~. 
    \]
  \item[b)] With $\mu\in\calM^1$, any $h$ such that $\mu\{h{=}0\}<1$
    (\ie., non-$\mu$-almost-surely-zero) and
    $\E[\mu]{h}{=}0$ can be written as a difference of two
    nonnegative functions $h{=}\tilde{q}-\tilde{p}$ such that
    $\E[\mu]{\tilde{q}}{=}\E[\mu]{\tilde{p}}{=}\rho^{-1}>0$.  Then,
    $Q=\rho\,\tilde{q}\,\mu$ and $P=\rho\,\tilde{p}\,\mu$
    belong to $\calM^1$, and
    \[ E(Q,P)=-\rho\,S(\mu,h)~. \]
  \end{itemize}
  We can then prove the theorem:
  \begin{itemize}[nosep]
  \item[$i$)]From these observations, if $E(Q,P)\geq0$ for all $P,Q$,
    then $S(\mu,h)\leq0$ for all $\mu$ and $h$ such that
    $\E[\mu]{h(x)}=0$, implying~\eqref{eq:ndkernel}. Conversely,
    assume there are $Q,P\in\calM^1$ such that $E(Q,P)<0$. Using
    the weak law of large numbers~\cite{khinchin-1929} (see also
    Theorem~\ref{th:edvstat} later in this document,) we can find
    finite support distributions $Q_n,P_n$ such that
    $E(Q_n,P_n)<0$. Proceeding as in observation~(a)
    then contradicts~\eqref{eq:ndkernel} because $\mu=(Q_n+P_n)/2$
    has also finite support.
  \item[$ii$)] By contraposition, suppose there is $\mu$ and $h$ such
    that $\mu\{h{=}0\}{<}1$, $\E[\mu]{h(x)}=0$, and
    $S(\mu,h)=0$. Observation (b) gives $P\neq Q$ such that
    $E(Q,P)=0$. Conversely, suppose $E(Q,P)=0$. Observation (a) gives
    $\mu$ and $h=q-p$ such that $S(\mu,h)=0$. Since $h$ must be zero,
    $Q=P$. \qedsymbol
  \end{itemize}
\end{proofx}

Requiring that $d$ be a negative definite kernel is a quite strong
assumption. For instance, a classical result by
Schoenberg~\cite{schoenberg-1938} establishes that a squared distance
is a negative definite kernel if and only if the whole metric space
induced by this distance is isometric to a subset of a Hilbert space
and therefore has a Euclidean geometry:

\begin{theorem}[{Schoenberg, \cite{schoenberg-1938}}]
  The metric space $(\calX,d)$ is isometric to a subset of a Hilbert space
  if and only if $d^2$ is a negative definite kernel.
\end{theorem}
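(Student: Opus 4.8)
The plan is to prove both implications, with the forward direction reducing to a short algebraic identity and the reverse direction requiring an explicit construction of the embedding.

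For the forward implication, I would assume there is an isometry $\phi:\calX\to H$ into a Hilbert space $H$, so that $d(x,y)^2=\|\phi(x)-\phi(y)\|^2=\|\phi(x)\|^2-2\langle\phi(x),\phi(y)\rangle+\|\phi(y)\|^2$. Given points $x_1,\dots,x_n$ and reals $c_1,\dots,c_n$ with $\sum_i c_i=0$, I would substitute this expansion into $\sum_{i,j}d(x_i,x_j)^2\,c_ic_j$. The two ``diagonal'' contributions $\|\phi(x_i)\|^2$ and $\|\phi(x_j)\|^2$ each factor out a sum $\sum_j c_j=0$ and therefore vanish, leaving exactly $-2\langle\sum_i c_i\phi(x_i),\sum_j c_j\phi(x_j)\rangle=-2\|\sum_i c_i\phi(x_i)\|^2\leq0$. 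This is precisely condition \eqref{eq:ndkernel} applied to $d^2$.

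For the reverse implication, I would fix an arbitrary base point $x_0\in\calX$ and define the symmetric kernel $K(x,y)=\tfrac12\big(d(x,x_0)^2+d(y,x_0)^2-d(x,y)^2\big)$. The crucial step is to show that $K$ is positive semidefinite in the \emph{unconstrained} sense, i.e.\ $\sum_{i,j}K(x_i,x_j)c_ic_j\geq0$ for arbitrary reals $c_1,\dots,c_n$ with no restriction on their sum. I would extract this from the negative definiteness of $d^2$ by an augmentation trick: set $c_0=-\sum_{i\geq1}c_i$ so that $\sum_{i\geq0}c_i=0$, and apply \eqref{eq:ndkernel} to the enlarged family $x_0,x_1,\dots,x_n$. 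Using $d(x_0,x_0)=0$ and symmetry, expanding the resulting inequality $\sum_{i,j\geq0}d(x_i,x_j)^2c_ic_j\leq0$ rearranges exactly into $\sum_{i,j\geq1}K(x_i,x_j)c_ic_j\geq0$, so $K$ is a genuine positive semidefinite kernel.

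Once $K$ is known to be positive semidefinite, I would invoke the standard Moore--Aronszajn construction: form the vector space of finitely supported formal combinations $\sum_i c_i\delta_{x_i}$, equip it with the bilinear form determined by $\langle\delta_x,\delta_y\rangle=K(x,y)$, quotient out the null space of vectors of zero seminorm, and complete to obtain a Hilbert space $H$. Setting $\phi(x)$ to be the image of $\delta_x$, a direct computation gives $\|\phi(x)-\phi(y)\|^2=K(x,x)-2K(x,y)+K(y,y)$; substituting $K(x,x)=d(x,x_0)^2$, $K(y,y)=d(y,x_0)^2$, and the definition of $K(x,y)$ collapses this to $d(x,y)^2$, so $\phi$ is the desired isometry. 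I expect the main obstacle to be the positive-semidefiniteness verification---getting the augmentation bookkeeping right so that the base-point and cross terms recombine correctly---whereas the Hilbert space construction itself is routine, modulo the usual care with the quotient-and-completion step (and no separability hypothesis is needed, since the statement only asks for a subset of \emph{some} Hilbert space).
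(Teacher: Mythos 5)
Your argument is correct. Note, however, that the paper does not actually prove this theorem --- it is stated as a classical result with a citation to Schoenberg (1938) --- so there is no in-paper proof to compare against. That said, your proof is the standard one, and its central algebraic step is already present in the paper in a different guise: the augmentation trick you use to show that $K(x,y)=\tfrac12\big(d(x,x_0)^2+d(y,x_0)^2-d(x,y)^2\big)$ is positive semidefinite is exactly the identity proved in Proposition~\ref{th:negdefposdef}, applied with $d^2$ in place of $d$ (your $K$ is the triangular gap kernel~\eqref{eq:triangulargap} of $d^2$). The forward direction (expanding $\|\phi(x_i)-\phi(x_j)\|^2$ and killing the diagonal terms with $\sum_i c_i=0$) and the Moore--Aronszajn quotient-and-completion step are both sound. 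One small point worth making explicit: the map $\phi$ is automatically injective because $d$ satisfies the separation property (\ref{eq:metricdistance}.$i$), so $\|\phi(x)-\phi(y)\|=d(x,y)>0$ whenever $x\neq y$; this is what lets you call $\phi$ an isometry onto a subset rather than merely a distance-preserving map of a pseudometric quotient. You are also right that no separability hypothesis is needed for this direction, even though the paper's standing assumption is that $\calX$ is Polish.
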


Requiring $d$ to be negative definite (not
necessarily a squared distance anymore) has a similar impact on the
geometry of the space $\calM^1$ equipped with the Energy Distance
(Theorem~\ref{th:edmmd}).
Let $x_0$ be an arbitrary origin point and define the
symmetric \emph{triangular gap} kernel $K_d$ as
\begin{equation}
  \label{eq:triangulargap}
   K_d(x,y) ~\stackrel{\Delta}{=}~
    \tfrac12\left( d(x,x_0)+d(y,x_0)-d(x,y) \right)~.
\end{equation}

\begin{proposition}
  \label{th:negdefposdef}
  The function $d$ is a negative definite kernel if and only if
  $K_d$ is a positive definite kernel, that is,
  \vspace*{-\abovedisplayskip}
  \[
  \forall n\in\mathbb{N} ~~
  \forall x_1\dots x_n\in\calX ~~
  \forall c_1\dots c_n\in\R ~~
  \sum_{i=1}^n \sum_{j=1}^n c_i c_j K_d(x_i,x_j) \geq 0~.
  \]
\end{proposition}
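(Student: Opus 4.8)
The plan is to reduce both implications to a single algebraic identity linking the two quadratic forms, and then read off the two directions from it. Fix points $x_1,\dots,x_n\in\calX$ and reals $c_1,\dots,c_n$, and write $S=\sum_i c_i$. Expanding definition~\eqref{eq:triangulargap} and using the symmetry of $d$, I would first establish
\[
  \sum_{i,j} c_i c_j K_d(x_i,x_j)
  ~=~ S\sum_i c_i\,d(x_i,x_0) ~-~ \tfrac12\sum_{i,j} c_i c_j\,d(x_i,x_j)~,
\]
which holds because the two pieces $\tfrac12 d(x_i,x_0)$ and $\tfrac12 d(x_j,x_0)$ in $K_d$ each contribute $\tfrac12 S\sum_i c_i\,d(x_i,x_0)$ once the free summation index is carried out, while the $-\tfrac12 d(x_i,x_j)$ piece survives untouched.

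For the direction ``$d$ negative definite $\Rightarrow$ $K_d$ positive definite,'' the crucial trick is to augment the point set with the origin so as to manufacture a zero-sum family. Given arbitrary $c_1,\dots,c_n$, I would consider the $n{+}1$ points $x_0,x_1,\dots,x_n$ equipped with coefficients $c_0=-S,c_1,\dots,c_n$, whose sum is $-S+S=0$. Applying the negative-definiteness condition~\eqref{eq:ndkernel} to this augmented family, using $d(x_0,x_0)=0$ to kill the diagonal term and the symmetry of $d$ to collect the two cross terms, yields
\[
  \sum_{i,j\geq1} c_i c_j\,d(x_i,x_j) ~\leq~ 2S\sum_{i\geq1} c_i\,d(x_i,x_0)~.
\]
Substituting this bound into the identity above exactly cancels the $S\sum_i c_i\,d(x_i,x_0)$ term, leaving $\sum_{i,j} c_i c_j K_d(x_i,x_j)\geq0$ for the original (unconstrained) coefficients, which is positive definiteness.

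The converse falls straight out of the same identity. When $K_d$ is positive definite, I restrict attention to coefficients with $S=0$; the first term on the right vanishes and the identity collapses to $\sum_{i,j} c_i c_j K_d(x_i,x_j)=-\tfrac12\sum_{i,j} c_i c_j\,d(x_i,x_j)$, whose nonnegativity is precisely condition~\eqref{eq:ndkernel}. The only delicate point in the whole argument is the forward direction: one must remember to adjoin the origin with the compensating weight $-S$ rather than merely reuse the given points, and to dispose correctly of the vanishing diagonal contribution $c_0^2\,d(x_0,x_0)$; every other step is routine bookkeeping.
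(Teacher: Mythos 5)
Your proof is correct and follows essentially the same route as the paper: the paper's one-line proof rests on the identity $2\sum_{i,j\ge1}c_ic_jK_d(x_i,x_j)=-\sum_{i,j\ge0}c_ic_jd(x_i,x_j)$ with the adjoined origin carrying $c_0=-\sum_i c_i$, which is exactly your expansion-plus-augmentation argument written as a single equation. Your explicit attention to the vanishing diagonal term $c_0^2\,d(x_0,x_0)$ is a detail the paper leaves implicit, but the substance is identical.
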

\begin{proof} The proposition directly results from the identity
   \[
     2 \sum_{i=1}^n \sum_{j=1}^n c_i c_j K_d(x_i,x_j)
     = - \sum_{i=0}^n \sum_{j=0}^n c_i c_j d(x_i,x_j)~,
   \]
  where $x_0$ is the chosen origin point and $c_0=-\sum_{i=1}^{n}c_i$.    
\end{proof}

Positive definite kernels in the machine learning literature have been
extensively studied in the context of the so-called
\emph{kernel~trick}~\cite{schoelkopf-smola-2002}.  In particular, it
is well known that the theory of the Reproducing Kernel Hilbert Spaces
(RKHS)~\cite{aronszajn-1950,aizerman-1964} establishes that there is a
unique Hilbert space $\calH$, called the RKHS, that contains all the
functions
\[
  \Phi_x:y\in\calX\mapsto{K_d(x,y)}
\]
and satisfies the \emph{reproducing property} 
\begin{equation}
  \label{eq:reproducing}
  \forall x\in\calX ~~
  \forall f\in\calH \quad
  \DP{f}{\Phi_x} = f(x)~.
\end{equation}
We can then relate $\calE_d(Q,P)$ to the RKHS norm.

\begin{theorem}[{\cite{sejdinovic-2013} \cite[Chapter 21]{rachev-2013}}]
  \label{th:edmmd}
  Let $d$ be a negative definite kernel and let $\calH$ be the RKHS
  associated with the corresponding positive definite triangular gap
  kernel~\eqref{eq:triangulargap}.  We have then
  \[
  \forall Q,P\in\calM^1 \quad
  \calE_d(Q,P) ~=~ \|\: \E[x\sim Q]{\Phi_x} - \E[y\sim P]{\Phi_y} \:\|\subH ~.
  \]
\end{theorem}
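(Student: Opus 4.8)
The plan is to read the right-hand side of the claimed identity as the squared maximum mean discrepancy attached to the positive definite kernel $K_d$, and to reduce everything to the reproducing property~\eqref{eq:reproducing}. Write $\Phi_x = K_d(x,\cdot)\in\calH$ and introduce the mean embeddings $\mu_Q = \E[x\sim Q]{\Phi_x}$ and $\mu_P = \E[y\sim P]{\Phi_y}$, so that the quantity to analyze is $\|\mu_Q-\mu_P\|\subH$.

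First I would check that $\mu_Q$ and $\mu_P$ are genuine elements of $\calH$, since the statement compares their norms. By the reproducing property, $\|\Phi_x\|\subH^2 = \DP{\Phi_x}{\Phi_x} = K_d(x,x) = d(x,x_0)$ (using $d(x,x)=0$), hence $\|\Phi_x\|\subH = \sqrt{d(x,x_0)}$. Because $Q\in\calM^1$ is equivalent to $\E[x\sim Q]{d(x,x_0)}<\infty$, Jensen's inequality gives $\E[x\sim Q]{\|\Phi_x\|\subH}\le\sqrt{\E[x\sim Q]{d(x,x_0)}}<\infty$, which makes the Bochner integral $\mu_Q=\E[x\sim Q]{\Phi_x}$ converge in $\calH$; the same holds for $\mu_P$.

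Next I would expand the squared norm. Using bilinearity of the inner product, the reproducing identity $\DP{\Phi_x}{\Phi_y}=K_d(x,y)$, and the fact that the continuous inner product commutes with the Bochner integrals just shown to exist --- equivalently Fubini on the double integrals, which is licit because $Q,P\in\calM^1$ keeps every double expectation of $d$ finite --- one obtains
\[
  \|\mu_Q-\mu_P\|\subH^2 = \E[\stacktwo{x\sim Q}{x'\sim Q}]{K_d(x,x')} - 2\,\E[\stacktwo{x\sim Q}{y\sim P}]{K_d(x,y)} + \E[\stacktwo{y\sim P}{y'\sim P}]{K_d(y,y')}.
\]
Finally I would substitute the definition~\eqref{eq:triangulargap} of $K_d$ into each of the three terms. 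Every term splits into contributions carrying the origin $x_0$ and contributions made only of pairwise distances; the key structural fact is that the origin contributions cancel exactly across the three terms, and the pairwise contributions reassemble, with the normalization fixed by the $\tfrac12$ in~\eqref{eq:triangulargap}, into the right-hand side of~\eqref{eq:genergydistance}, namely $\calE_d(Q,P)^2$. Taking square roots yields the theorem.

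The only genuinely delicate steps are the well-definedness of the mean embeddings and the interchange of expectation with the inner product; once $\calM^1$ integrability is in hand these reduce to Bochner-integral and Fubini bookkeeping, while the subsequent cancellation of the origin terms is purely algebraic. I would be most careful that no $\infty-\infty$ arises when rearranging the pairwise expectations, which is exactly what the $\calM^1$ hypothesis prevents.
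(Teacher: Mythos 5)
Your argument follows essentially the same route as the paper's proof, only run in the opposite direction: the paper starts from $\calE_d(Q,P)^2$, converts the distances into triangular gap kernels, and folds the expectations into RKHS inner products, whereas you expand $\|\E[Q]{\Phi_x}-\E[P]{\Phi_y}\|\subH^2$ by the reproducing property and convert $K_d$ back into $d$. Your preliminary verification that the mean embeddings exist as Bochner integrals (via $\|\Phi_x\|\subH=\sqrt{K_d(x,x)}=\sqrt{d(x,x_0)}$ and Jensen, using $Q,P\in\calM^1$) is a welcome addition that the paper leaves implicit, and it is exactly the right justification for commuting the expectations with the inner product.

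However, the one step you do not carry out explicitly is the one that fails. Substituting \eqref{eq:triangulargap} into
\[
\E[\stacktwo{x\sim Q}{x'\sim Q}]{K_d(x,x')}-2\,\E[\stacktwo{x\sim Q}{y\sim P}]{K_d(x,y)}+\E[\stacktwo{y\sim P}{y'\sim P}]{K_d(y,y')}
\]
does make the origin terms cancel, but the surviving pairwise terms are
$\E[\stacktwo{x\sim Q}{y\sim P}]{d(x,y)}-\tfrac12\E[\stacktwo{x\sim Q}{x'\sim Q}]{d(x,x')}-\tfrac12\E[\stacktwo{y\sim P}{y'\sim P}]{d(y,y')}$, which is $\tfrac12\,\calE_d(Q,P)^2$, not $\calE_d(Q,P)^2$. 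A two-Dirac sanity check confirms this: $\calE_d(\delta_a,\delta_b)^2=2d(a,b)$ while $\|\Phi_a-\Phi_b\|\subH^2=K_d(a,a)+K_d(b,b)-2K_d(a,b)=d(a,b)$. So the computation you outline actually proves $\calE_d(Q,P)=\sqrt2\,\|\E[Q]{\Phi_x}-\E[P]{\Phi_y}\|\subH$. To be fair, the paper's own proof commits the same slip: its second displayed equality replaces $d(x,y)+d(x',y')-d(x,x')-d(y,y')$ by $K_d(x,x')+K_d(y,y')-K_d(x,y)-K_d(x',y')$, although the former equals \emph{twice} the latter. The mismatch is therefore between the statement and the chosen normalization of $K_d$ (dropping the $\tfrac12$ in \eqref{eq:triangulargap} would make the identity exact, as would a $\sqrt2$ in the statement). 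But since you explicitly appeal to ``the normalization fixed by the $\tfrac12$'' as what closes the bookkeeping, that is precisely the claim you must verify, and as written it does not hold.
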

\begin{proof} We can write directly
  \begin{eqnarray*}
    \calE_d(Q,P)^2
    &=& \E[\stacktwo{x,x'\sim Q}{y,y'\sim P}]{
      d(x,y) + d(x',y') - d(x,x') - d(y,y') } \\
    &=& \E[\stacktwo{x,x'\sim Q}{y,y'\sim P}]{
      K_d(x,x') + K_d(y,y') - K_d(x,y) - K_d(x',y') } \\
    &=& \DP{\E[Q]{\Phi_x}}{\E[Q]{\Phi_x}}+\DP{\E[P]{\Phi_y}}{\E[P]{\Phi_y}}
       -2\DP{\E[Q]{\Phi_x}}{\E[P]{\Phi_y}} \\
    &=& \| \E[x\sim Q]{\Phi_x} - \E[y\sim P]{\Phi_y} \|\subH^2 ~,
  \end{eqnarray*}
  where the first equality results from \eqref{eq:triangulargap} and
  where the second equality results from the identities
  $\DP{\Phi_x}{\Phi_y}=K_d(x,y)$ and
  $\E[x,y]{\DP{\Phi_x}{\Phi_y}}=\DP{\E[x]{\Phi_x}}{\E[y]{\Phi_y}}$.
\end{proof}
\begin{remark}
  In the context of this theorem, the relation~\eqref{eq:szekely} is
  simply an analytic expression of the RKHS norm associated with the
  triangular gap kernel of the Euclidean distance.
\end{remark}
\begin{corollary}
  \label{th:eddistance}
  If $d$ is a negative definite kernel, then $\calE_d$ is a
  pseudodistance, that is, it satisfies all the
  properties~\eqref{eq:metricdistance} of a distance except maybe the
  separation property~(\ref{eq:metricdistance}.$i$).
\end{corollary}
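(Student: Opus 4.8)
The plan is to reduce every clause of \eqref{eq:metricdistance} except the separation property to the corresponding elementary property of a Hilbert-space norm, by exploiting the representation of $\calE_d$ already obtained in Theorem~\ref{th:edmmd}. First I would record that the hypothesis applies: since $d$ is negative definite, Proposition~\ref{th:negdefposdef} makes the triangular gap kernel $K_d$ of \eqref{eq:triangulargap} positive definite, so the associated RKHS $\calH$ exists and Theorem~\ref{th:edmmd} is available. Writing $m_Q\stackrel{\Delta}{=}\E[x\sim Q]{\Phi_x}$ for the mean embedding in $\calH$ of a distribution $Q\in\calM^1$, that theorem reads
\[
   \calE_d(Q,P) ~=~ \|\, m_Q - m_P \,\|\subH~.
\]
These embeddings are well-defined as Bochner integrals because $\|\Phi_x\|\subH^2 = K_d(x,x) = d(x,x_0)$ and $Q\in\calM^1$ gives $\E[Q]{d(x,x_0)}<\infty$, whence $\E[Q]{\|\Phi_x\|\subH}<\infty$ by Jensen; this integrability is in any case already built into Theorem~\ref{th:edmmd}, so I can simply invoke it.

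From this single identity each property follows at once. Nonnegativity and the zero property~(\ref{eq:metricdistance}.$\mzero$) hold because a norm is nonnegative and because $Q=P$ forces $m_Q=m_P$, so $\calE_d(Q,Q)=0$. Symmetry~(\ref{eq:metricdistance}.$ii$) is immediate from $\|m_Q-m_P\|\subH=\|m_P-m_Q\|\subH$. For the triangular inequality~(\ref{eq:metricdistance}.$iii$), given $Q,P,R\in\calM^1$ I would insert and subtract $m_P$ and apply the triangle inequality of the norm,
\[
   \calE_d(Q,R) = \|(m_Q-m_P)+(m_P-m_R)\|\subH \leq \|m_Q-m_P\|\subH + \|m_P-m_R\|\subH = \calE_d(Q,P)+\calE_d(P,R)~.
\]

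There is essentially no obstacle once Theorem~\ref{th:edmmd} is in hand; the content of the corollary is precisely that the RKHS representation collapses the verification to properties of a norm. The only property that can genuinely fail is separation~(\ref{eq:metricdistance}.$i$), since $m_Q=m_P$ need not imply $Q=P$ unless $d$ is \emph{strongly} negative definite (Theorem~\ref{th:zinger}.$ii$). This is exactly why the conclusion is stated as a pseudodistance rather than a full distance, and the only point worth double-checking along the way is the well-definedness of the mean embeddings noted above.
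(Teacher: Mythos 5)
Your proof is correct and is exactly the argument the paper intends: the corollary is stated without proof as an immediate consequence of Theorem~\ref{th:edmmd}, the point being precisely that the RKHS mean-embedding identity reduces every property of \eqref{eq:metricdistance} except separation to the corresponding property of the Hilbert norm. Your extra remark on the well-definedness of the mean embeddings is a harmless (and reasonable) addition already subsumed by the hypotheses of Theorem~\ref{th:edmmd}.
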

\begin{corollary}
  \label{th:characteristic}
  The following three conditions are then equivalent:
  \begin{itemize}[nosep]
  \item[$i$)] $\calE_d$ satisfies all the
    properties~\eqref{eq:metricdistance} of a distance.
  \item[$ii$)] $d$ is a strongly negative definite kernel.
  \item[$iii$)] the map $P\in\calM^1\mapsto\E[P]{\Phi_x}\in\calH$ is injective
     (characteristic kernel~\cite{gretton-2012}.)
  \end{itemize}
\end{corollary}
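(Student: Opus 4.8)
The plan is to observe that all three conditions are, at bottom, reformulations of the separation property $(\ref{eq:metricdistance}.i)$ for $\calE_d$, and to chain the equivalences through it. Throughout I work under the standing hypothesis of Theorem~\ref{th:edmmd} that $d$ is a negative definite kernel; this entails no loss of generality, since condition $(ii)$ already forces it by definition, while conditions $(i)$ and $(iii)$ presuppose it in order that $\calE_d$ (through Theorem~\ref{th:zinger}.$i$) and the RKHS $\calH$ (through Proposition~\ref{th:negdefposdef}) be well defined in the first place.

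First I would dispose of $(i)\Leftrightarrow(ii)$. Corollary~\ref{th:eddistance} already guarantees that $\calE_d$ is a pseudodistance, so it satisfies every clause of \eqref{eq:metricdistance} except possibly separation. Hence condition $(i)$ is equivalent to the assertion that $\calE_d(Q,P)>0$ whenever $P\neq Q$ in $\calM^1$, i.e. that the right-hand side of \eqref{eq:genergydistance} is strictly positive for all $P\neq Q$. But this is exactly the property characterized in Theorem~\ref{th:zinger}.$ii$ as $d$ being a strongly negative definite kernel, which is precisely condition $(ii)$.

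Next I would establish $(i)\Leftrightarrow(iii)$ by means of the RKHS representation. Theorem~\ref{th:edmmd} gives $\calE_d(Q,P)=\|\E[Q]{\Phi_x}-\E[P]{\Phi_y}\|\subH$, so that $\calE_d(Q,P)=0$ if and only if the mean embeddings $\E[Q]{\Phi_x}$ and $\E[P]{\Phi_y}$ coincide in $\calH$. Since the converse direction $Q=P\Rightarrow\calE_d(Q,P)=0$ is trivial, the separation property holds exactly when equality of mean embeddings forces $Q=P$, that is, exactly when the map $P\mapsto\E[P]{\Phi_x}$ is injective, which is condition $(iii)$. Combining the two chains yields $(i)\Leftrightarrow(ii)\Leftrightarrow(iii)$.

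The only thing to be careful about, rather than a genuine obstacle, is the bookkeeping of well-definedness: one must verify that the negative definiteness needed to invoke Corollary~\ref{th:eddistance}, Theorem~\ref{th:edmmd}, and the RKHS construction is in force under each of the three conditions, and that the statement ``separation fails'' is correctly matched both to ``$\calE_d(Q,P)=0$ for some $P\neq Q$'' (hence to the failure of strict positivity in Theorem~\ref{th:zinger}.$ii$) and to the failure of injectivity of the embedding in Theorem~\ref{th:edmmd}. No additional analytic estimates are required; the corollary is essentially a repackaging of the two preceding theorems around the single pivot of the separation property.
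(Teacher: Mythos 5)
Your proposal is correct and follows exactly the route the paper intends: the paper states this corollary without an explicit proof precisely because it is the direct combination of Corollary~\ref{th:eddistance} with Theorem~\ref{th:zinger}.$ii$ (giving $(i)\Leftrightarrow(ii)$) and with the RKHS identity of Theorem~\ref{th:edmmd} (giving $(i)\Leftrightarrow(iii)$), all pivoting on the separation property. Your bookkeeping of the standing negative-definiteness hypothesis (carried by the word ``then'' in the corollary's statement) is also accurate.
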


\medskip
\paragraph{Maximum Mean Discrepancy~}
Following \cite{gretton-2012}, we can then write $\calE_d$ as an IPM:
\begin{eqnarray}
  \calE_d(Q,P) &~=~& \|\E[Q]{\Phi_x}-\E[P]{\Phi_x}\|\subH\nonumber\\
  &=& \sup_{\|f\|\subH\leq1} \DP{f}{\E[P]{\Phi_x}-\E[Q]{\Phi_x}}  \nonumber\\
  &=& \sup_{\|f\|\subH\leq1} \E[P]{\DP{f}{\Phi_x}}-\E[Q]{\DP{f}{\Phi_x}} \nonumber\\
  &=& \sup_{\|f\|\subH\leq1} \E[P]{f(x)}-\E[Q]{f(x)}~. \label{eq:mmd}
\end{eqnarray}
This last expression \eqref{eq:mmd} is also called the Maximum Mean
Discrepancy (MMD) associated with the positive definite kernel $K_d$
\cite{gretton-2012}. Conversely, for any positive definite kernel $K$,
the reader will easily prove that the symmetric function
\[
   d_K(x,y) = \|\Phi_x-\Phi_y\|\subH^2 = K(x,x)+K(y,y)-2K(x,y)~,
\]
is a negative definite kernel, that $d_{K_d}=d$, and that
\begin{equation}
  \label{eq:dkdkd}
  \|\:\E[Q]{\Phi_x}-\E[P]{\Phi_x}\:\|\subH^2 ~=~ \calE_{d_K}(Q,P)^2~.
\end{equation}
Therefore the ED and MMD formulations are essentially equivalent
\cite{sejdinovic-2013}. Note however that the negative definite
kernel~$d_K$ defined above may not satisfy the triangular inequality
(its square root does.)

\begin{remark}
  Because this equivalence was not immediately recognized, many
  important concepts have been rediscovered with subtle technical
  variations. For instance, the notion of characteristic
  kernel~\cite{gretton-2012} depends subtly on the chosen domain for
  the map $P\mapsto\E[P]{\Phi_x}$ that we want injective.
  Corollary~\ref{th:characteristic} gives a simple necessary and
  sufficient condition when this domain is $\calM^1$ (with respect to
  the distance~$d$). Choosing a different domain leads to
  complications~\cite{sriperumbudur-2011}.
\end{remark}


\section{Energy Distance \vs. $1$-Wasserstein Distance}
\label{sec:emvsdisco}

The dual formulation of the $1$-Wasserstein \eqref{eq:emdual} and the
MMD formulation of the Energy Distance \eqref{eq:mmd} only differ
by the use of a different family of critic functions: for all $Q,P\in\calM^1$,
\begin{align*}
  W_1(Q,P) &= \sup_{f\in\mathrm{Lip1}} \E[Q]{f(x)}-\E[P]{f(x)} ~,\\
  \calE_d(Q,P) &= \sup_{\|f\|\subH\leq1} \E[P]{f(x)}-\E[Q]{f(x)} ~.
\end{align*}
At first sight, requiring that the functions $f$ are $1$-Lipschitz or
are contained in the RKHS unit ball seem to be two slightly different
ways to enforce a smoothness constraint. Nevertheless, a closer
comparison reveals very important differences.

\subsection{Three quantitative properties}
\label{sec:quantprop}

Although both the WD \cite[Theorem~6.9]{villani-2009} and the ED/MMD
\cite[Theorem~3.2]{sriperumbudur-2016} metrize the weak convergence
topology, they may be quantitatively very different and therefore hard
to compare in practical situations.  The following upper bound
provides a clarification.

\begin{proposition}
  \label{th:edlessthanwd}
  Let $\calX$ be equipped with a distance $d$ that
  is also a negative definite kernel.
  Let the $1$-Wasserstein distance
  $W_1$ and the Energy Distance $\calE_d$ be defined as in
  \eqref{eq:wasserstein} and \eqref{eq:genergydistance}.
  \[
      \calE_d(Q,P)^2 \leq 2 W_1(Q,P)~.
  \]
\end{proposition}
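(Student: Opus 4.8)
The plan is to bound $\calE_d(Q,P)^2$ directly from its definition~\eqref{eq:genergydistance}, using an optimal transport coupling together with the triangular inequality, rather than going through the kernel embedding. Since $d$ is a nonnegative continuous cost and $Q,P\in\calM^1$ (so that all expectations below are finite by~\eqref{eq:wpdomain}), Theorem~\ref{th:existenz} supplies an optimal coupling $\pi^*\in\Pi(Q,P)$ with $W_1(Q,P)=\E[(x,y)\sim\pi^*]{d(x,y)}$.

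The first step is to place the three expectations of~\eqref{eq:genergydistance} on a common probability space. Let $(X,Y)$ and $(X',Y')$ be two independent draws from $\pi^*$. Then $X,X'$ are independent samples of $Q$, and $Y,Y'$ are independent samples of $P$, and $X$ is independent of $Y'$, so that
\[
  \calE_d(Q,P)^2 = 2\,\E{d(X,Y')} - \E{d(X,X')} - \E{d(Y,Y')}~,
\]
the crossed term now being evaluated on the \emph{independent} pair $(X,Y')$.

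The crux is to control this crossed term by the transport cost while arranging for the within-distribution terms to cancel. Applying the triangular inequality~(\ref{eq:metricdistance}.$iii$) through the two intermediate points $Y$ and $X'$ gives
\[
  d(X,Y')\leq d(X,Y)+d(Y,Y')
  \quad\text{and}\quad
  d(X,Y')\leq d(X,X')+d(X',Y')~.
\]
Adding these and taking expectations yields
\[
  2\,\E{d(X,Y')} \leq \E{d(X,Y)} + \E{d(X',Y')} + \E{d(X,X')} + \E{d(Y,Y')}~.
\]
Because $(X,Y)$ and $(X',Y')$ are both distributed as $\pi^*$, the first two terms on the right each equal $W_1(Q,P)$. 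Substituting this bound into the expression for $\calE_d(Q,P)^2$ cancels the two terms $\E{d(X,X')}$ and $\E{d(Y,Y')}$ exactly, leaving $\calE_d(Q,P)^2\leq 2\,W_1(Q,P)$.

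I do not expect a genuine obstacle: the only care needed is the bookkeeping of which pairs are coupled and which are independent, and the observation that the two triangle paths must be chosen precisely so that their ``leftover'' terms match the within-distribution terms subtracted in the energy distance. As a sanity check, $Q=\delta_x$ and $P=\delta_y$ give $\calE_d(Q,P)^2=2\,d(x,y)=2\,W_1(Q,P)$, so the constant $2$ is tight. One could instead start from Theorem~\ref{th:edmmd} and combine the identity $\|\Phi_x-\Phi_y\|\subH^2=d(x,y)$ with Jensen's inequality for the Hilbert norm and the concavity of $\sqrt{\cdot}$, but that route requires carefully tracking the constant relating $\calE_d$ to $\|\E[Q]{\Phi_x}-\E[P]{\Phi_x}\|\subH$, which makes the elementary coupling argument more transparent.
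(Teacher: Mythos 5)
Your proof is correct and is essentially the paper's own argument: both take two independent draws from the optimal coupling, rewrite the cross term of the energy distance as an expectation over the independent pair, and use triangle inequalities (which the paper packages as a small quadrilateral lemma bounding the two diagonals by the perimeter) so that the within-distribution terms cancel and the transport-cost terms give $2W_1(Q,P)$. The only difference is cosmetic bookkeeping --- you bound $2d(X,Y')$ via two triangle paths, while the paper bounds $d(x,y')+d(x',y)$ by the perimeter --- and your tightness check on Dirac masses matches the paper's remark.
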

This inequality is tight. It is indeed easy to see
that it becomes an equality when both $P$ and $Q$ are
Dirac distributions. 

The proof relies on an elementary geometrical lemma:
\begin{lemma}
  Let $A,B,C,D$ be four points in $\calX$ forming a quadrilateral.
  The perimeter length $d(A,B)+d(B,C)+d(C,D)+d(D,A)$ is longer than
  the diagonal lenghts $d(A,C)+d(B,D)$.
\end{lemma}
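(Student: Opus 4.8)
The plan is to bound each of the two diagonals separately, using the triangle inequality~(\ref{eq:metricdistance}.$iii$) applied along the two available two-step paths through the remaining vertices, and then to add the resulting inequalities. Nothing beyond the metric axioms~\eqref{eq:metricdistance} is needed.

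First I would treat the diagonal $d(A,C)$. Routing through $B$ gives $d(A,C)\leq d(A,B)+d(B,C)$, and routing through $D$ gives $d(A,C)\leq d(A,D)+d(D,C)$. Adding these two and using the symmetry of $d$~(\ref{eq:metricdistance}.$ii$) to write $d(D,C)=d(C,D)$, I obtain $2\,d(A,C)\leq d(A,B)+d(B,C)+d(C,D)+d(D,A)$; that is, twice this diagonal is bounded by the full perimeter. Next I would do the same for the other diagonal $d(B,D)$: routing through $A$ gives $d(B,D)\leq d(B,A)+d(A,D)$ and routing through $C$ gives $d(B,D)\leq d(B,C)+d(C,D)$, so that again $2\,d(B,D)\leq d(A,B)+d(B,C)+d(C,D)+d(D,A)$.

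Finally, adding the two intermediate bounds and dividing by $2$ yields exactly $d(A,C)+d(B,D)\leq d(A,B)+d(B,C)+d(C,D)+d(D,A)$, which is the claimed inequality. There is essentially no obstacle here; the only point that requires a moment of care is the bookkeeping: for each diagonal one must pair the two \emph{complementary} side-routes, so that when the four triangle inequalities are summed every one of the four sides appears exactly once on the right-hand side, producing the factor of $2$ on the left that is then cancelled. The symmetry axiom is used only to identify oppositely oriented sides, and completeness/separability of $\calX$ play no role in this purely combinatorial argument.
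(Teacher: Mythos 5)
Your proof is correct and is essentially identical to the paper's: both sum the same four triangle inequalities (two routes for each diagonal through the complementary vertices) so that each side appears twice on the right, then divide by two. The only difference is presentational — you group the inequalities by diagonal before combining, while the paper sums all four at once.
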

\begin{proofx}{Proof of the lemma}
  Summing the following triangular inequalities yields the result.
  \begin{align*}
    d(A,C) &\leq d(A,B)+d(B,C)  &
    d(A,C) &\leq d(C,D)+d(D,A) \\
    d(B,D) &\leq d(B,C)+d(C,D)  &
    d(B,D) &\leq d(D,A)+d(A,B) \makebox[0pt][l]{\quad\qquad\qedsymbol}
  \end{align*}
\end{proofx}

\begin{proofx}{Proof of proposition~\ref{th:edlessthanwd}}
  \def\xp{x^\prime}
  \def\yp{y^\prime}
  Let $(x,y)$ and $(\xp,\yp)$ be two independent samples
  of the optimal transport plan $\pi$ with marginals $Q$ and $P$.
  Since they are independent,
  \[
    2\,\E[\stacktwo{x\sim Q}{y\sim P}]{d(x,y)}
    = \E[\stacktwo{~~(x,y)\sim\pi}{(\xp,\yp)\sim\pi}]{d(x,\yp)+d(\xp,y)} ~.
  \]
  Applying the lemma and rearranging
  \begin{multline*}
    2 \E[\stacktwo{x\sim Q}{y\sim P}]{d(x,y)}
    \leq \E[\stacktwo{~~(x,y)\sim\pi}{(\xp,\yp)\sim\pi}]{ d(x,y)+d(y,\yp)+d(\yp,\xp)+d(\xp,x) } \\
    = W_1(Q,P)+\E[\stacktwo{y\sim P}{\yp\!\sim P}]{d(y,\yp)}
        + W_1(Q,P)+\E[\stacktwo{x\sim Q}{\xp\!\sim Q}]{d(x,\xp)}\,. 
  \end{multline*}
  Moving the remaining expectations to the left-hand side gives the result.
  \qedsymbol   
\end{proofx}

In contrast, the following results not only show that $\calE_d$ can be
very significantly smaller than the $1$-Wasserstein distance, but also
show that this happens in the particularly important situation where
one approximates a distribution with a finite sample.

\begin{theorem}
  \label{th:edvstat}
  Let $Q,P\in\calM^1$ be two probability distributions on $\calX$.
  Let $x_1\dots x_n$ be $n$ independent $Q$-distributed random
  variables, and let \mbox{$Q_n=\tfrac1n\sum_{i=1}^n \delta_{x_i}$} be
  the corresponding empirical probability distribution. Let~$\calE_d$
  be defined as in~\eqref{eq:genergydistance} with a kernel satisfying
  $d(x,x)=0$ for all $x$ in $\calX$. Then,
  \[  \E[x_1\dots x_n\sim Q]{\calE_d(Q_n,P)^2}
  = \calE_d(Q,P)^2 + \tfrac{1}{n}\,\E[x,x^\prime\sim Q]{d(x,x^\prime)}~, \]
  and
  \[ \E[x_1\dots x_n\sim Q]{\calE_d(Q_n,Q)^2}
  = \tfrac{1}{n}\,\E[x,x^\prime\sim Q]{d(x,x^\prime)}
  = \mathcal{O}(n^{-1}) ~. \]
\end{theorem}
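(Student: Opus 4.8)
The plan is to expand $\calE_d(Q_n,P)^2$ straight from definition~\eqref{eq:genergydistance} and then push the expectation over the draw $x_1\dots x_n$ through each term by linearity, using the independence of the samples. The entire content of the result is a small boundary effect in the empirical self-interaction term, which is precisely what generates the $\tfrac1n$ correction.

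First I would write out the three pieces of $\calE_d(Q_n,P)^2$. Since $Q_n=\tfrac1n\sum_{i}\delta_{x_i}$, the cross term is $2\,\E[y\sim P]{\tfrac1n\sum_i d(x_i,y)}$, the $P$-self-term $\E[\stacktwo{y\sim P}{y'\sim P}]{d(y,y')}$ is a constant not depending on the sample, and the $Q_n$-self-term is the double average $\tfrac1{n^2}\sum_{i,j}d(x_i,x_j)$. This is where the hypothesis $d(x,x)=0$ enters: the diagonal contributions $i=j$ vanish, so this double sum effectively runs only over the $n(n-1)$ off-diagonal pairs.

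Next I would take the expectation over $x_1\dots x_n\sim Q$ term by term. The cross term and the $P$-self-term are unbiased: each $x_i$ is $Q$-distributed, so the cross term averages to $2\,\E[\stacktwo{x\sim Q}{y\sim P}]{d(x,y)}$ and the $P$-self-term is unchanged. For the self-term, each off-diagonal pair $(x_i,x_j)$, $i\neq j$, is a pair of independent $Q$-samples, so its expectation is $\E[\stacktwo{x\sim Q}{x'\sim Q}]{d(x,x')}$; summing over the $n(n-1)$ such pairs and dividing by $n^2$ produces the factor $\tfrac{n-1}{n}$ in place of $1$. Comparing the resulting expression with the definition of $\calE_d(Q,P)^2$, only the coefficient of the $Q$-self-term has changed, from $-1$ to $-\tfrac{n-1}{n}$, and the gap $1-\tfrac{n-1}{n}=\tfrac1n$ yields exactly the advertised additive term $\tfrac1n\,\E[\stacktwo{x\sim Q}{x'\sim Q}]{d(x,x')}$. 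The second identity then follows by setting $P=Q$: here $\calE_d(Q,Q)^2=0$, and since $Q\in\calM^1$ the surviving expectation is finite, hence $\mathcal{O}(n^{-1})$.

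The step to watch is the diagonal bookkeeping in the self-term: one might expect the empirical self-energy to be an unbiased estimate of $\E[\stacktwo{x\sim Q}{x'\sim Q}]{d(x,x')}$, but the double average over a finite sample includes the zeroed diagonal, so it systematically undercounts the off-diagonal interaction by the factor $(n-1)/n$. Recognizing that the assumption $d(x,x)=0$ is exactly what makes this accounting clean, and that this undercounting is the sole source of the bias, is the only real point; everything else is linearity of expectation.
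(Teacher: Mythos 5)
Your proof is correct and follows essentially the same route as the paper's: expand $\calE_d(Q_n,P)^2$ from the definition, note that the cross term and the $P$-self-term are unbiased, and use $d(x,x)=0$ to reduce the empirical self-term to the $n(n-1)$ off-diagonal pairs, producing the $(1-\tfrac1n)$ factor whose deficit is exactly the stated $\tfrac1n$ correction. The specialization $P=Q$ for the second identity is also how the paper concludes.
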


Therefore the effect of replacing $Q$ by its empirical approximation
disappears quickly, like $\mathcal{O}(1/n)$, when $n$ grows. This
result is not very surprising when one notices that $\calE_d(Q_n,P)$
is a V-statistic \cite{vonmises-1947,serfling-1980}. However it gives
a precise equality with a particularly direct proof.

\begin{proof}
  \def\xp{x^\prime}
  Using the following equalities in the
  definition \eqref{eq:genergydistance} gives the first result.
  \begin{align*}
    \E[x_1\dots x_n\sim Q]{\:\E[\stacktwo{x\sim Q_n}{y\sim P\,~}]{d(x,y)}\:}
    &~=~ \E[x_1\dots x_n\sim Q]{\:\frac1n\sum_{i=1}^{n} \E[y\sim P]{d(x_i,y)}\:} \\
    &\hskip-4em ~=~ \frac1n\sum_{i} \E[\stacktwo{x\sim Q}{y\sim P}]{d(x,y)}
    ~=~ \E[\stacktwo{x\sim Q}{y\sim P}]{d(x,y)}~.\\
    \E[x_1\dots x_n\sim Q]{\:\E[\stacktwo{x\sim Q_n}{\xp\!\sim Q_n}]{d(x,\xp)}\:}
    &~=~ \E[x_1\dots x_n\sim Q]{\:\frac1{n^2}\sum_{i\neq j} d(x_i,x_j)\:} \\
    &\hskip-4em~=~ \frac1{n^2} \sum_{i\neq j} \E[\stacktwo{x\sim Q}{y\sim Q}]{d(x,y)}
    ~=~ \left(1-\frac1n\right) \E[\stacktwo{x\sim Q}{y\sim Q}]{d(x,y)}~.
  \end{align*}
  Taking $Q=P$ then gives the second result.
\end{proof}

Comparable results for the $1$-Wasserstein distance describe a
convergence speed that quickly becomes considerably slower with the
dimension $d>2$ of the sample space $\calX$
\cite{sriperumbudur-2012,dereich-2013,fournier-guillin-2015}.

\begin{theorem}[\cite{fournier-guillin-2015}]
  \label{th:fournier}
  Let $\calX$ be $\R^d$, $d>2$, equipped with the usual Euclidean distance.
  Let $Q\in\pspace{R^d}^2$ and let $Q_n$ be defined as in Theorem~\ref{th:edvstat}.
  Then,
  \[ \E[x_1\dots x_n\sim Q]{\:W_1(Q_n,Q)\:} = \mathcal{O}(n^{-1/d}) ~. \]
\end{theorem}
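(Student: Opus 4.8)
The plan is to prove the upper bound by a dyadic multiresolution coupling, the standard route to empirical Wasserstein rates. I would first treat the case where $Q$ is supported in a cube of side $L$, and then assemble the general case from dyadic annuli, using the finite second moment to tame the tail. Fix the nested dyadic partitions $\mathcal{P}_0,\mathcal{P}_1,\dots$ of the cube, where $\mathcal{P}_j$ consists of the $2^{jd}$ subcubes of side $L2^{-j}$. Transporting mass hierarchically from fine to coarse scales yields the deterministic inequality
\[
  W_1(Q_n,Q) ~\leq~ C_d\,L \sum_{j\geq0} 2^{-j} \sum_{C\in\mathcal{P}_j} |Q_n(C)-Q(C)|~,
\]
for a dimensional constant $C_d$: at each scale one only pays the diameter $L2^{-j}$ of a cube per unit of mass that must be moved to correct the discrepancy $|Q_n(C)-Q(C)|$.

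Taking expectations is then routine. Since $nQ_n(C)$ is binomial with mean $nQ(C)$, Jensen gives $\mathbb{E}|Q_n(C)-Q(C)|\leq\sqrt{Q(C)/n}$, and Cauchy--Schwarz over the $2^{jd}$ cubes of $\mathcal{P}_j$ yields
\[
  \mathbb{E}\Big[\sum_{C\in\mathcal{P}_j}|Q_n(C)-Q(C)|\Big] ~\leq~ \frac{1}{\sqrt n}\sum_{C\in\mathcal{P}_j}\sqrt{Q(C)} ~\leq~ \frac{2^{jd/2}}{\sqrt n}~,
\]
so scale $j$ contributes $O\!\big(L\,2^{j(d/2-1)}/\sqrt n\big)$. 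Here the hypothesis $d>2$ enters for the first time: the exponent $d/2-1$ is positive, so summing up to a cutoff $J$ is dominated by the top scale, while truncating at $J$ leaves a residual of order $L2^{-J}$ (the cost of moving mass within a single finest cube). Balancing $L2^{-J}$ against $L\,2^{J(d/2-1)}/\sqrt n$ forces $2^{J}\sim n^{1/d}$, and both terms become $O(L\,n^{-1/d})$.

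To remove the boundedness assumption I would decompose $\R^d$ into the annuli $A_k=\{x:2^{k-1}\leq\|x\|<2^k\}$, run the bounded estimate on each with scale $L\sim2^k$ and local mass $m=Q(A_k)$, and sum over $k$. The same optimization (now with $2^{J}\sim(mn)^{1/d}$) gives a per-annulus contribution of order $2^k\,Q(A_k)^{1-1/d}\,n^{-1/d}$, so the global rate is $O(n^{-1/d})$ provided $\sum_k 2^k\,Q(A_k)^{1-1/d}<\infty$. This convergence is the delicate step and the main obstacle: writing the summand as $\big(2^{qk}Q(A_k)\big)^{\theta}\,2^{k(1-q\theta)}$ with $\theta=1-1/d$ and applying H\"older with exponents $1/\theta$ and $d$, one sees that a finite $q$-th moment suffices exactly when $q(1-1/d)>1$, i.e.\ $q>d/(d-1)$. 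Since $d/(d-1)<2\Leftrightarrow d>2$, the finite second moment of $Q\in\pspace{\R^d}^2$ covers this borderline precisely in the stated regime, and that is where the integrability of $\|x\|^2$ is consumed. (An alternative route replaces the explicit coupling by the dual formulation~\eqref{eq:emdual} and bounds $\sup_{f\in\mathrm{Lip1}}\big(\E[Q_n]{f}-\E[Q]{f}\big)$ through Dudley's entropy integral, using that the $\epsilon$-covering number of the Lipschitz ball on a bounded set grows like $\exp(c\,\epsilon^{-d})$; it gives the same exponent but obscures the role of the moment condition.)
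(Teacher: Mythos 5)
The paper offers no proof of this statement: it is imported verbatim from Fournier and Guillin \cite{fournier-guillin-2015}, so there is nothing internal to compare against. What you have written is essentially a reconstruction of the proof in that reference. The multiscale inequality over nested dyadic partitions, the bound $\mathbb{E}|Q_n(C)-Q(C)|\leq\sqrt{Q(C)/n}$ followed by Cauchy--Schwarz over the $2^{jd}$ cells, the observation that $d>2$ makes the exponent $d/2-1$ positive so the truncated sum is dominated by the finest scale, the balance $2^{J}\sim n^{1/d}$, and the H\"older step showing that a $q$-th moment with $q>d/(d-1)$ (hence $q=2$ when $d>2$) makes the annulus sum converge --- these are exactly the ingredients of the standard argument, and your accounting of where the hypotheses $d>2$ and $Q\in\calM^2$ are consumed is accurate.

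The one step your sketch elides is the reduction from $\R^d$ to the annuli. The restriction of $Q_n$ to $A_k$ does not carry mass $Q(A_k)$, so $W_1(Q_n,Q)$ is \emph{not} bounded by the sum of the per-annulus transport costs as written: the excess mass $|Q_n(A_k)-Q(A_k)|$ must be shipped across annuli, over distances of order $2^k$ or more. Its expected cost is of order $\sum_k 2^k\,\mathbb{E}|Q_n(A_k)-Q(A_k)|\lesssim n^{-1/2}\sum_k 2^k\sqrt{Q(A_k)}$, and taming that last sum requires another interpolation against the moment condition; this is precisely the origin of the additional $n^{-(q-1)/q}=n^{-1/2}$ term in Fournier--Guillin's Theorem~1, which is then absorbed into $n^{-1/d}$ because $d\geq2$. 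The omission is repairable with the same tools you already deploy, but as stated the inequality ``sum the bounded estimate over annuli'' is not valid, so you should either add this correction term explicitly or run the multiscale inequality globally on $\R^d$ with a partition refined inside each annulus, as the reference does.
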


The following example, inspired by \cite{arora-2017},
illustrates this slow rate and its consequences.

\begin{example}
  \label{ex:arora}
  Let $Q$ be a uniform distribution supported by the unit sphere in
  $\R^d$ equipped with the Euclidean distance. Let $x_1\dots x_n$ be
  $n$ points sampled independently from this distribution and let
  $Q_n$ be the corresponding empirical distribution. Let $x$ be an
  additional point sampled from $Q$. It is well known\footnote{\relax
    The curious reader can pick an expression of
    $F_d(t)=P\{\|x-x_i\|<t\}$ in~\cite{hammersley-1950}, then derive
    an asymptotic bound for $P\{\min_i\|x-x_i\|<t\}=1-(1-F_d(t))^n$.}
  that $\min_i\|x-x_i\|$ remains arbitrarily close to $\sqrt{2}$, say,
  greater than $1.2$, with arbitrarily high probability when
  $d\gg\log(n)$.  Therefore,
  \[  W_1(Q_n,Q)\geq 1.2 \quad\text{when $n\ll\exp(d)$.} \]
  In contrast, observe
  \[ W_1(Q,\delta_0) = W_1(Q_n,\delta_0) = 1 ~. \]
  In other words, as long as $n\ll\exp(d)$, a Dirac distribution in
  zero is closer to the empirical distribution than the actual
  distribution \cite{arora-2017}.
\end{example}

\medskip

Theorem~\ref{th:edvstat} and Example~\ref{ex:arora} therefore show
that $\calE_d(Q_n,Q)$ can be much smaller than $W_1(Q_n,Q)$.  They
also reveal that the statistical properties of the $1$-Wasserstein
distance are very discouraging. Since the argument of
Example~\ref{ex:arora} naturally extends to the $p$-Wasserstein
distance for all $p\geq1$, the problem seems shared by all Wasserstein
distances.

\begin{remark}
  In the more realistic case where the $1$-Lipschitz critic is
  constrained to belong to a parametric family with sufficient
  regularity, the bound of theorem~\ref{th:fournier} can be improved
  to $\mathcal{O}(\sqrt{\log(n)/n})$ with a potentially large constant
  \cite{arora-2017}. On the other hand, constraining the critic too
  severely might prevent it from distinguishing distributions that differ
  in meaningful ways.
\end{remark}

\subsection{WD and ED/MMD in practice}
\label{sec:martin}

Why should we consider the Wasserstein Distance when the Energy
Distance and Maximum Mean Discrepancy offer better statistical
properties (Section~\ref{sec:quantprop}) and more direct learning
algorithms~\cite{dziu-2015,li-2015,bouchacourt-2016}\,?

The most impressive achievement associated with the implicit modeling
approach certainly is the generation of photo-realistic random images
that resemble the images provided as training
data~\cite{denton-2015,radford-2015,karras-2017}. In apparent
contradiction with the statistical results of the previous section,
and with a couple notable exceptions discussed later in this section,
the visual quality of the images generated using models trained by
directly minimizing the MMD \cite{dziu-2015} usually lags behind those
obtained with the WD
\cite{arjovsky-chintala-bottou-2017,gulrajani-2017,karras-2017} and
with the original Generative Adversarial Network
formulation\footnote{\relax Note that it is then important to use the
  $\log(D)$ trick succinctly discussed in the original GAN paper
  \cite{goodfellow-2014}.} \cite{radford-2015}.

Before discussing the two exceptions, it is worth recalling that the
visual quality of the generated images is a peculiar way to benchmark
generative models. This is an incomplete criterion because it does not
ensure that the model generates images that cover all the space
covered by the training data. This is an interesting criterion because
common statistical metrics, such as estimates of the negative
log-likelihood, are generally unable to indicate which models generate
the better-looking images~\cite{theis-2016}. This is a finicky
criterion because, despite efforts to quantify visual quality with
well-defined scores~\cite{salimans-2016}, the evaluation of
the image quality fundamentally remains a beauty contest.
Figure~\ref{fig:generatedlsun} nevertheless shows a clear difference.

\begin{figure}
  \centering
  \includegraphics[width=.40\linewidth]{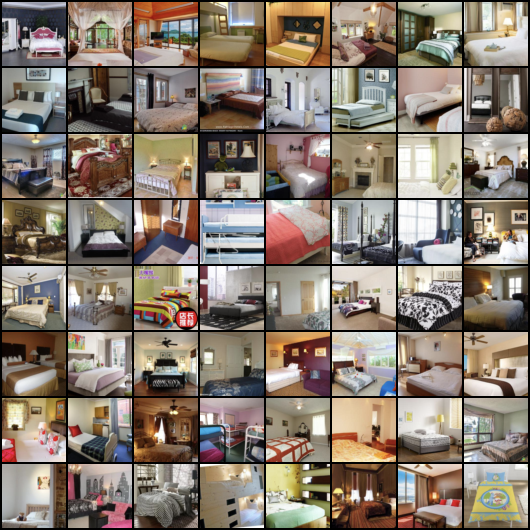}\\
  A sample of 64 training examples
  \par
  \bigskip
  \begin{tabular}{c@{\qquad}c}
  \includegraphics[width=.40\linewidth]{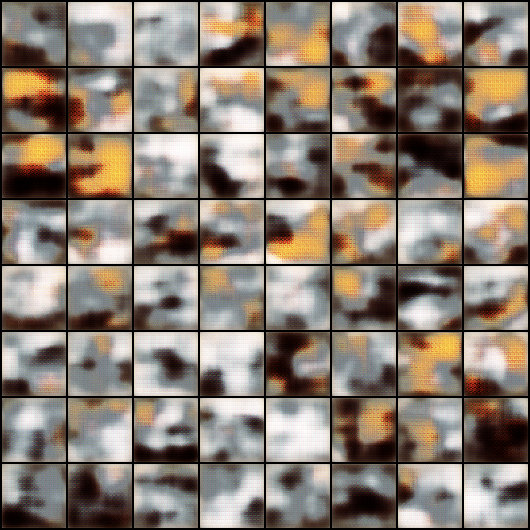} & 
  \includegraphics[width=.40\linewidth]{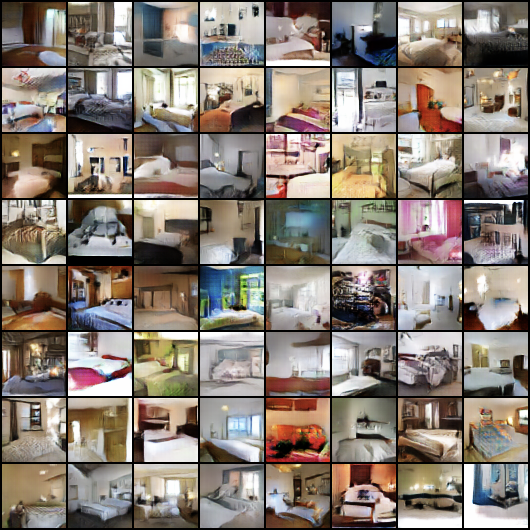} \\
  Generated by the ED trained model &
  Generated by the WD trained model
  \end{tabular}
  \caption{\label{fig:generatedlsun}
    Comparing images generated by a same implicit model trained with
    different criteria. The top square shows a sample of 64 training examples
    represening bedroom pictures. The bottom left square shows the images
    generated by a model trained with ED using the algorithm of \cite{bouchacourt-2016}.
    The bottom right square shows images generated by a model
    trained using the WGAN-GP approach \cite{gulrajani-2017}.}
\end{figure}

\smallskip
A few authors report good image generation results by using
the ED/MMD criterion in a manner that substantially changes
its properties:
\begin{itemize}[topsep=3pt,itemsep=4pt]
\item
  The AE+GMMN approach \cite{li-2015} improves the pure MMD approach by
  training an implicit model that does not directly generate images
  but targets the compact representation computed by a pretrained
  auto-encoder network. This changes a high-dimensional image
  generation problem into a comparatively low-dimensional code
  generation problem with a good notion of distance. There is
  independent evidence that low-dimensional implicit models work
  relatively well with ED/MMD \cite{bouchacourt-2016}.
\item
  The Cram\'{e}rGAN approach \cite{bellemare-2017} minimizes the
  Energy Distance\footnote{See \cite{szekely-2002} for the relation
    between Energy Distance and Cram\'{e}r distance.} computed on the
  representations produced by an adversarially trained $1$-Lipschitz
  continuous \emph{transformation layer} $T_\phi(x)$.  The resulting
  optimization problem
  \[
    \min_\theta ~ \left\{ \max_{T_\phi\in\mathrm{Lip1}}
      ~ \calE(T_\phi\pfd Q, T_\phi\pfd P_\theta ) \right\}\,,
  \]
  can then be re-expressed using the IPM form of the energy distance
  \[
  \min_\theta ~ \left\{
      D(Q,P) ~= \max_{T_\phi\in\mathrm{Lip1}} ~ \max_{\|f\|\subH\leq1} ~
        \E[x\sim Q]{f(T_\phi(x))}-\E[x\sim P_\theta]{f(T_\phi(x))} \right\} ~.
  \]
  The cost $D(Q,P)$ above is a new IPM that relies on critic functions
  of the form $f\circ{T_\phi}$, where~$f$ belongs to the RKHS unit
  ball, and~$T_\phi$ is $1$-Lipschitz continuous. Such hybrid critic
  functions still have smoothness properties comparable to that of the
  Lipschitz-continuous critics of the $1$-Wasserstein
  distance. However, since these critic functions do not usually form
  a RKHS ball, the resulting IPM criterion no longer belongs to the
  ED/MMD family.
\item
  The same hybrid approach gives comparable results in GMMN-C
  \cite{li-2017} where the authors replace autoencoder of GMMN+AE with
  an adversarially trained transformer layer.
\end{itemize}

On the positive side, such hybrid approaches may lead to more
efficient training algorithms than those described in
Section~\ref{sec:adversarialtraining}. The precise parametric
structure of the transformation layer also provides the means to match
what WGAN models achieve by selecting a precise parametric structure
for the critic. Yet, in order to understand these subtle effects, it
remains useful to clarify the similarities and differences between
pure ED/MMD training and pure WD training.

\section{Length spaces}
\label{sec:lengthspaces}

This section gives a concise review of the elementary metric geometry
concepts useful for the rest of our analysis. Readers can safely skip
this section if they are already familiar with metric geometry
textbooks such as \cite{burago-2001}.

\medskip
\paragraph{Rectifiable curves~}

A continuous mapping
$\gamma:t\in[a,b]\subset\R\mapsto\gamma_t\in\calX$ defines a curve
connecting $\gamma_a$ and $\gamma_b$. A curve is said to be
\emph{rectifiable} when its \emph{length}
\begin{equation}
  \label{eq:length}
  L(\gamma,a,b) \stackrel{\Delta}{~=~}
   \sup_{n>1} ~ \sup_{a=t_0<t_1<\dots<t_n=b} ~ \sum_{i=1}^{n} d(\gamma_{t_{i-1}},\gamma_t)
\end{equation}
is finite. Intuitively, thanks to the triangular inequality, dividing
the curve into $n$ segments $[\gamma_{t-1},\gamma_t]$ and summing
their sizes yields a quantity that is greater than
$d(\gamma_a,\gamma_b)$ but smaller than the curvilinear length of the
curve. By construction, $L(\gamma,a,b)\geq d(\gamma_a,\gamma_b)$ and
$L(\gamma,a,c)=L(\gamma,a,b)+L(\gamma,b,c)$ for all $a\leq b\leq c$.

\medskip
\paragraph{Constant speed curves~}

Together with the continuity of $\gamma$, this additivity property
implies that the function $t\in[a,b]\mapsto L(\gamma,a,t)$ is
nondecreasing and continuous \cite[Prop.~2.3.4]{burago-2001}.  Thanks
to the intermediate value theorem, when a curve is rectifiable, for
all $s\in[0,1]$, there is $t_s\in[a,b]$ such that
$L(\gamma,a,t_s)=s\,L(\gamma,a,b)$. Therefore, we can construct a new
curve $\gammab:s\in[0,1]\mapsto\gammab_s=\gamma_{t_s}$ that visits the
same points in the same order as curve $\gamma$ and satisfies the
property $\forall{s\in[0,1]}$, $L(\gammab,0,s)=s L(\gammab,0,1)$. Such
a curve is called a \emph{constant speed curve}.

\medskip
\paragraph{Length spaces~}

It is easy to check that the \emph{distance induced by $d$},
\begin{equation}
  \label{eq:induceddistance}
  \hat{d}:(x,y)\in\calX^2~~\mapsto\hskip-1em
  \inf_{\stacktwo{\gamma:[a,b]\rightarrow\calX}{\text{s.t. $\gamma_a=x$ $\gamma_b=y$}}}
    \hskip-1em L(\gamma,a,b) ~ \in \R_+^*\cup\{\infty\}\,,
\end{equation}
indeed satisfies all the properties \eqref{eq:metricdistance} of a
distance. It is also easy to check that the distance induced by
$\hat{d}$ coincides with $\hat{d}$ \cite[Prop.~2.3.12]{burago-2001}.
For this reason, a distance that satisfies $\hat{d}=d$ is called an
\emph{intrinsic distance}. A Polish metric space equipped with an
intrinsic distance is called an \emph{intrinsic Polish space}.
A metric space $\calX$ equipped with an intrinsic distance $d$ is
called a \emph{length space}.

\medskip
\paragraph{Minimal geodesics~}

A curve $\gamma:[a,b]\rightarrow\calX$ that achieves the infimum in
\eqref{eq:induceddistance} is called a \emph{shortest path} or a
\emph{minimal geodesic} connecting $\gamma_a$ and $\gamma_b$.

When the distance $d$ is intrinsic, the length of a minimal geodesic
$\gamma$ satisfies the relation
$L(\gamma,a,b)=\hat{d}(\gamma_a,\gamma_b)=d(\gamma_a,\gamma_b)$.  When
such a curve exists between any two points $x,y$ such that
$d(x,y)<\infty$, the distance $d$ is called \emph{strictly}
\emph{intrinsic}. A Polish space equipped with a strictly intrinsic
distance is called a \emph{strictly intrinsic Polish space.}

Conversely, a rectifiable curve $\gamma:[a,b]\rightarrow\calX$ of
length $d(\gamma_a,\gamma_b)$ is a minimal geodesic because no curve
joining $\gamma_a$ and $\gamma_b$ can be shorter. If there is such a
curve between any two points $x,y$ such that $d(x,y)<\infty$, then $d$
is a strictly intrinsic distance.

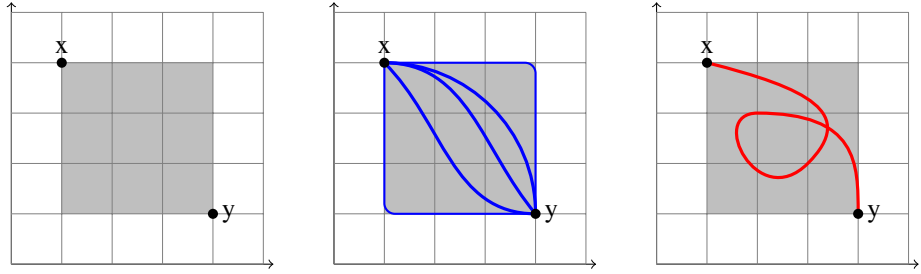
\begin{figure}
  \bigskip
  \centering
    \begin{tikzpicture}[scale=0.67]
      \draw [->] (1,1) -- (1,6.2);
      \draw [->] (1,1) -- (6.2,1);
      \path [fill=lightgray] (2,5) rectangle (5,2);
      \draw [help lines] (1,1) grid (6,6); 
      \path [fill=black] (2,5) circle [radius=0.1];
      \node [above] at (2,5) {x};
      \path [fill=black] (5,2) circle [radius=0.1];
      \node [right] at (5,2) {y};
    \end{tikzpicture}
    \qquad
    \begin{tikzpicture}[scale=0.67]
      \draw [->] (1,1) -- (1,6.2);
      \draw [->] (1,1) -- (6.2,1);
      \path [fill=lightgray] (2,5) rectangle (5,2);
      \draw [help lines] (1,1) grid (6,6); 
      \draw [blue,rounded corners, thick] (2,5) -- (2,2) -- (5,2);
      \draw [blue,rounded corners, thick] (2,5) -- (5,5) -- (5,2);
      \draw [blue,very thick] (2,5) to [out=0,in=130] (5,2);
      \draw [blue,very thick] (2,5) to [out=0,in=90] (5,2);
      \draw [blue,very thick] (2,5) to [out=-45,in=180] (5,2);
      \path [fill=black] (2,5) circle [radius=0.1];
      \node [above] at (2,5) {x};
      \path [fill=black] (5,2) circle [radius=0.1];
      \node [right] at (5,2) {y};
    \end{tikzpicture}
    \qquad
    \begin{tikzpicture}[scale=0.67]
      \draw [->] (1,1) -- (1,6.2);
      \draw [->] (1,1) -- (6.2,1);
      \path [fill=lightgray] (2,5) rectangle (5,2);
      \draw [help lines] (1,1) grid (6,6); 
      \draw [red,very thick] (2,5) 
      .. controls (4,4.5) and (5,4) .. (4,3)
      .. controls (3,2) and (2,4) .. (3,4)
      .. controls (5,4) and (5,3) .. (5,2);      
      \path [fill=black] (2,5) circle [radius=0.1];
      \node [above] at (2,5) {x};
      \path [fill=black] (5,2) circle [radius=0.1];
      \node [right] at (5,2) {y};
    \end{tikzpicture}
  \caption{\label{fig:l1geodesics}
    Consider $\R^2$ equipped with the $L_1$ distance. ~{Left}: all points $z$ in
    the gray area are such that $d(x,z)+d(z,y)=d(x,y)$. ~{Center}: all minimal
    geodesics connecting $x$ and $y$ live in the gray area. ~{Right}: but not all
    curves that live in the gray area are minimal geodesics.}
\end{figure}

\medskip
\paragraph{Characterizing minimal geodesics~}

Let $\gamma:[a,b]\rightarrow\calX$ be a minimal geodesic in a length
space $(\calX,d)$.  Using the triangular inequality and~\eqref{eq:length},
\begin{equation}
  \label{eq:triangularequality}
   \forall a\leq t\leq b \quad
     d(\gamma_a,\gamma_b)
      \leq d(\gamma_a,\gamma_t)+d(\gamma_t,\gamma_b)
       \leq L(\gamma,a,b)=d(\gamma_a,\gamma_b)~.
\end{equation}
This makes clear that every minimal geodesic in a length space is made
of points~$\gamma_t$ for which the triangular inequality is an
equality. However, as shown in Figure~\ref{fig:l1geodesics},
this is not sufficient to ensure that a curve is a minimal geodesic.
One has to consider two intermediate points:

\begin{theorem}
  \label{th:twopoints}
  Let $\gamma:[a,b]\rightarrow\calX$ be a curve joining
  two points $\gamma_a,\gamma_b$ such that $d(\gamma_a,\gamma_b)<\infty$.
  This curve is a minimal geodesic of length~$d(\gamma_a,\gamma_b)$
  if and only if~
  \( \forall ~ a\leq t \leq t' \leq b, \quad
       d(\gamma_a,\gamma_t)+d(\gamma_t,\gamma_{t'})+d(\gamma_{t'},\gamma_b)
        = d(\gamma_a,\gamma_b)\,. \)
\end{theorem}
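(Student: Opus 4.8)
The plan is to reduce the entire statement to the single numerical equality $L(\gamma,a,b)=d(\gamma_a,\gamma_b)$. The discussion preceding the theorem already records both halves of the needed bridge: a rectifiable curve whose length equals $d(\gamma_a,\gamma_b)$ is automatically a minimal geodesic, and conversely a minimal geodesic of length $d(\gamma_a,\gamma_b)$ has exactly that length. So I would prove that the two-intermediate-point condition is equivalent to $L(\gamma,a,b)=d(\gamma_a,\gamma_b)$, handling the two implications separately; rectifiability will fall out of the hypothesis $d(\gamma_a,\gamma_b)<\infty$.

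For necessity, assume $L(\gamma,a,b)=d(\gamma_a,\gamma_b)$. I would first invoke the additivity of arc length, $L(\gamma,a,b)=L(\gamma,a,t)+L(\gamma,t,t')+L(\gamma,t',b)$, together with the elementary bound $L\geq d$ on each of the three subcurves, to obtain
\[
  d(\gamma_a,\gamma_b)\;\geq\;d(\gamma_a,\gamma_t)+d(\gamma_t,\gamma_{t'})+d(\gamma_{t'},\gamma_b)~.
\]
The reverse inequality is just the triangular inequality applied twice. Sandwiching the two forces equality throughout, which is precisely the claimed three-point identity.

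For sufficiency — the more delicate direction — I would exploit the hypothesis by specializing its two free parameters. Writing $f(t)=d(\gamma_a,\gamma_t)$ and taking $t'=b$ collapses the condition to $f(t)+d(\gamma_t,\gamma_b)=d(\gamma_a,\gamma_b)$, i.e. $d(\gamma_t,\gamma_b)=d(\gamma_a,\gamma_b)-f(t)$. Substituting this back into the full two-parameter identity for general $t\leq t'$ then yields the crucial telescoping relation
\[
  d(\gamma_t,\gamma_{t'})\;=\;f(t')-f(t)\qquad(a\leq t\leq t'\leq b)~.
\]
Granted this, any partition $a=t_0<\dots<t_n=b$ gives $\sum_{i}d(\gamma_{t_{i-1}},\gamma_{t_i})=\sum_{i}\big(f(t_i)-f(t_{i-1})\big)=f(b)-f(a)=d(\gamma_a,\gamma_b)$, so every partition sum equals $d(\gamma_a,\gamma_b)$ and the supremum defining $L(\gamma,a,b)$ equals $d(\gamma_a,\gamma_b)$ as well.

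The step I expect to be the real crux is extracting the telescoping identity: it is exactly here that one sees why a single intermediate point does not suffice (the warning of Figure~\ref{fig:l1geodesics}), since only the second parameter $t'$ lets one pin down the \emph{interior} distances $d(\gamma_t,\gamma_{t'})$ rather than merely the distances to the endpoint $\gamma_b$. Everything else reduces to routine use of the triangular inequality and of the additivity of length.
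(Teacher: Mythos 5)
Your proof is correct, and while the necessity direction coincides with the paper's (two applications of the triangular inequality sandwiched against the length), your sufficiency argument takes a genuinely different and arguably cleaner route. The paper proves $L(\gamma,a,b)=d(\gamma_a,\gamma_b)$ by induction on the number of partition points, introducing the auxiliary quantities $h_n$ and showing $h_n=d(\gamma_a,\gamma_b)$ for all $n$. You instead specialize the two-parameter hypothesis at $t'=b$ to get $d(\gamma_t,\gamma_b)=d(\gamma_a,\gamma_b)-f(t)$ with $f(t)=d(\gamma_a,\gamma_t)$, substitute back to obtain the exact interior identity $d(\gamma_t,\gamma_{t'})=f(t')-f(t)$, and then observe that \emph{every} partition sum in \eqref{eq:length} telescopes to $f(b)-f(a)=d(\gamma_a,\gamma_b)$, so the supremum is attained trivially. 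This buys a shorter proof with no induction, and it isolates exactly where the second intermediate point is needed (to control the interior distances $d(\gamma_t,\gamma_{t'})$ rather than only the distances to an endpoint), which is precisely the failure mode illustrated by Figure~\ref{fig:l1geodesics}. As a bonus, your identity $d(\gamma_t,\gamma_{t'})=f(t')-f(t)$ immediately gives the reparametrization data one needs for Corollary~\ref{th:csg}, since $f$ is then the arc-length function. The only point worth making explicit is the use of $d(\gamma_b,\gamma_b)=0$ when setting $t'=b$, which is axiom (\ref{eq:metricdistance}.$\mzero$); with that noted, the argument is complete.
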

\begin{corollary}
  \label{th:csg}
  Let $\gamma:[0,1]\rightarrow\calX$ be a curve joining two points
  $\gamma_0,\gamma_1\in\calX$ such that $d(\gamma_0,\gamma_1)<\infty$.
  The following three assertions are equivalent:
  \begin{itemize}[nosep]
  \item[a)] The curve~$\gamma$ is a constant speed minimal geodesic
    of length $d(\gamma_0,\gamma_1)$.
  \item[b)] \( \forall ~ t,t'\in[0,1], \quad
    d(\gamma_t,\gamma_{t'}) = |t-t'|\,d(\gamma_0,\gamma_1)\,. \)
  \item[c)] \( \forall ~ t,t'\in[0,1], \quad
    d(\gamma_t,\gamma_{t'}) \leq |t-t'|\,d(\gamma_0,\gamma_1)\,. \)
  \end{itemize}
\end{corollary}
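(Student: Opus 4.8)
The plan is to prove the corollary by establishing the cyclic chain of implications $(a)\Rightarrow(b)\Rightarrow(c)\Rightarrow(a)$, leveraging Theorem~\ref{th:twopoints} at the two ends of the chain. The implication $(b)\Rightarrow(c)$ is immediate, since an equality trivially implies the corresponding inequality, so the real work lies in the first and last implications.

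For $(a)\Rightarrow(b)$, I would start from the assumption that $\gamma$ is a constant speed minimal geodesic of length $d(\gamma_0,\gamma_1)$. By the definition of constant speed recalled in the length-space section, $L(\gamma,0,t)=t\,L(\gamma,0,1)=t\,d(\gamma_0,\gamma_1)$ for all $t$, and more generally $L(\gamma,t,t')=|t-t'|\,d(\gamma_0,\gamma_1)$ by the additivity of length. Since $\gamma$ restricted to any subinterval $[t,t']$ is itself a curve joining its endpoints, we have $d(\gamma_t,\gamma_{t'})\leq L(\gamma,t,t')=|t-t'|\,d(\gamma_0,\gamma_1)$. For the reverse inequality I would invoke the minimal-geodesic property through Theorem~\ref{th:twopoints}: the equality $d(\gamma_0,\gamma_t)+d(\gamma_t,\gamma_{t'})+d(\gamma_{t'},\gamma_1)=d(\gamma_0,\gamma_1)$ combined with the already-known values $d(\gamma_0,\gamma_t)=t\,d(\gamma_0,\gamma_1)$ and $d(\gamma_{t'},\gamma_1)=(1-t')\,d(\gamma_0,\gamma_1)$ forces $d(\gamma_t,\gamma_{t'})=(t'-t)\,d(\gamma_0,\gamma_1)$, giving (b).

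For $(c)\Rightarrow(a)$, which I expect to be the crux, I would verify the characterization of Theorem~\ref{th:twopoints}. Given any $0\leq t\leq t'\leq 1$, applying the hypothesis (c) to the three consecutive pairs $(0,t)$, $(t,t')$, $(t',1)$ and summing yields
\[
  d(\gamma_0,\gamma_t)+d(\gamma_t,\gamma_{t'})+d(\gamma_{t'},\gamma_1)
   \leq \big(t+(t'-t)+(1-t')\big)\,d(\gamma_0,\gamma_1)
   = d(\gamma_0,\gamma_1)~.
\]
The matching lower bound is just the triangular inequality applied twice, so the sum equals $d(\gamma_0,\gamma_1)$ exactly. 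Theorem~\ref{th:twopoints} then certifies that $\gamma$ is a minimal geodesic of length $d(\gamma_0,\gamma_1)$; taking $t=0$ in (b)-style reasoning (or directly from the now-established equality case) shows $L(\gamma,0,t')=t'\,d(\gamma_0,\gamma_1)$, which is precisely the constant speed condition, completing (a).

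The main obstacle I anticipate is bookkeeping rather than conceptual: in the $(a)\Rightarrow(b)$ step one must cleanly translate between the length functional $L(\gamma,\cdot,\cdot)$ and the pointwise distances $d(\gamma_t,\gamma_{t'})$, taking care that the constant speed reparametrization hypothesis delivers the endpoint distances $t\,d(\gamma_0,\gamma_1)$ and $(1-t')\,d(\gamma_0,\gamma_1)$ needed to feed into Theorem~\ref{th:twopoints}. The reverse direction $(c)\Rightarrow(a)$ is the cleaner half, since the telescoping sum aligns perfectly with the two-intermediate-point characterization, and the only subtlety is confirming that the derived equality also yields the constant speed property and not merely the minimal geodesic property.
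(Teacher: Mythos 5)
Your proposal is correct and follows essentially the same route as the paper: the corollary is reduced to Theorem~\ref{th:twopoints}, with the telescoping of (c) over the pairs $(0,t)$, $(t,t')$, $(t',1)$ matched against the double triangular inequality to force the two-intermediate-point equality, and the equality case then pinning each summand to its upper bound, which yields (b) and the constant speed property. The only heavy lifting --- the induction over partitions showing that the two-point condition implies $L(\gamma,a,b)=d(\gamma_a,\gamma_b)$ --- is correctly delegated to the theorem rather than redone, which is exactly how the paper organizes the argument.
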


\begin{proof}
  The necessity ($\Rightarrow$) is easily proven by rewriting 
  \eqref{eq:triangularequality} with two points $t$ and $t'$ instead
  of just one.  The sufficiency ($\Leftarrow$) is proven by induction. Let
  \[
    h_n = \sup_{a=t_0\leq t_1 \leq \dots \leq t_n \leq b} ~
    \sum_{i=1}^{n} d(\gamma_{t_{i-1}},\gamma_{t_i})~ + ~ d(\gamma_{t_n},\gamma_b)~.
  \]
  The hypothesis implies that $h_2=d(\gamma_a,\gamma_b)$.
  We now assuming that the induction hypothesis $h_n=d(\gamma_a,\gamma_b)$ is true for some $n\geq2$.
  For all partition $a=t_0\leq t_1\dots t_n\leq b$,
  using twice the triangular inequality and the induction hypothesis,
  \[
    d(\gamma_a,\gamma_b) \leq d(\gamma_a,\gamma_{t_n})+d(\gamma_{t_n},\gamma_b)
    \leq \sum_{i=1}^{n} d(\gamma_{t_{i-1}},\gamma_{t_i}) + d(\gamma_{t_n},\gamma_b) 
    \leq h_n = d(\gamma_a,\gamma_b)\,.
  \]
  Therefore $\sum_{i=1}^{n} d(\gamma_{t_{i-1}},\gamma_{t_i})=d(\gamma_a,\gamma_{t_n})$.
  Then, for any $t_{n+1}\in[t_n,b]$,
  \[
    \sum_{i=1}^{n+1} d(\gamma_{t_{i-1}},\gamma_{t_i}) + d(\gamma_{t_{n+1}},\gamma_b) =
    d(\gamma_a,\gamma_{t_n}) + d(\gamma_{t_n},\gamma_{t_{n+1}}) + d(\gamma_{t_{n+1}},\gamma_b) 
    = d(\gamma_a,\gamma_b)\,.
  \]
  Since this is true for all partitions, $h_{n+1}=d(\gamma_a,\gamma_b)$.
  We just have proved by induction that $h_n=d(\gamma_a,\gamma_b)$ for all $n$.
  Therefore $L(\gamma,a,b)=\sup_n h_n = d(\gamma_a,\gamma_b)$.
\end{proof}

\section{Minimal geodesics in probability space}
\label{sec:geodesics}

We now assume that $\calX$ is a strictly intrinsic Polish space
and we also assume that its distance $d$ is never infinite. Therefore
any pair of points in $\calX$ is connected by at least one minimal
geodesic. When the space~$\calM$ of probability distributions is
equipped with one of the probability distances discussed in
section~\ref{sec:adversarialnets}, it often becomes a length space
itself and inherits some of the geometrical properties of
$\calX$. Since this process depends critically on how the probability
distance compares different distributions, understanding the geodesic
structure of $\calM$ reveals fundamental differences between
probability distances.

This approach is in fact quite different from the celebrated work of
Amari on \emph{Information Geometry} \cite{amari-2007}. We seek here
to understand the geometry of the space of all probability measures
equipped with different distances. Information Geometry characterizes
the Riemannian geometry of a parametric family of probability measures
under the Kullback-Leibler distance. This difference is obviously
related to the contrast between \emph{relying on good distances}
versus \emph{relying on good model families} discussed in
Section~\ref{sec:comparingprobas}. Since we are particularly
interested in relatively simple models that have a physical or causal
interpretation but cannot truly represent the actual data
distribution, we cannot restrict our geometrical insights to what
happens within the model family.

\subsection{Mixture geodesics}
\label{sec:mixturegeodesics}

For any two distributions $P_0,P_1\in\calM$, the mixture distributions
\begin{equation}
  \label{eq:mixture}
  \forall t\in[0,1] \quad P_t = (1{-}t) P_0 + t P_1
\end{equation}
form a curve in the space of distributions $\calM$.

\begin{theorem}
  \label{th:mixtureipm}
  Let $\calM$ be equipped with a distance $D$ that belongs to the IPM
  family~\eqref{eq:ipmform}. Any mixture curve \eqref{eq:mixture}
  joining two distributions $P_0,P_1\in\calM$ such that
  $D(P_0,P_1)<\infty$ is a constant speed minimal geodesic,
  making $D$ a strictly intrinsic distance.
\end{theorem}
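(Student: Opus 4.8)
The plan is to reduce the claim to Corollary~\ref{th:csg}, which characterizes constant speed minimal geodesics. Viewing $\calM$ equipped with $D$ as the ambient metric space, it suffices to show that the mixture curve $t\mapsto P_t$ from \eqref{eq:mixture} satisfies assertion (b) of that corollary, namely $D(P_t,P_{t'})=|t-t'|\,D(P_0,P_1)$ for all $t,t'\in[0,1]$. Everything then follows: assertion (b) forces the curve to be $D(P_0,P_1)$-Lipschitz, hence continuous and a genuine curve, and the implication (b)$\Rightarrow$(a) of the corollary identifies it as a constant speed minimal geodesic of length $D(P_0,P_1)$.

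First I would exploit the linearity of the mixture in $t$. For any critic $f\in\calQ$, the quantity $\E[P_t]{f(x)}=(1-t)\E[P_0]{f(x)}+t\E[P_1]{f(x)}$ is affine in $t$, so
\[
  \E[P_t]{f(x)}-\E[P_{t'}]{f(x)} = (t'-t)\big(\E[P_0]{f(x)}-\E[P_1]{f(x)}\big)~.
\]
Substituting this into the IPM definition \eqref{eq:ipmform} and taking the supremum over $\calQ$ yields $D(P_t,P_{t'})=\sup_{f\in\calQ}(t'-t)\big(\E[P_0]{f(x)}-\E[P_1]{f(x)}\big)$.

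The one point that needs care is pulling the scalar factor out of the supremum. When $t'\geq t$ the factor $t'-t$ is nonnegative and comes straight out, giving $(t'-t)\,D(P_0,P_1)$. When $t'<t$ the factor is negative, so I would instead invoke the defining property of an IPM that $\calQ$ is closed under negation: replacing $f$ by $-f$ flips the sign of the bracketed difference, so the supremum of $(t'-t)(\cdots)$ over $\calQ$ equals $|t'-t|\,\sup_{f\in\calQ}(\E[P_0]{f(x)}-\E[P_1]{f(x)})=|t'-t|\,D(P_0,P_1)$. This establishes assertion (b) of Corollary~\ref{th:csg}.

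Finally, having verified (b), the corollary gives that the mixture curve is a constant speed minimal geodesic of length $D(P_0,P_1)$. Since $P_0$ and $P_1$ were arbitrary with $D(P_0,P_1)<\infty$, every pair of distributions at finite $D$-distance is joined by a minimal geodesic, which is exactly the definition of a strictly intrinsic distance. The main obstacle, such as it is, is purely bookkeeping: handling the sign of $t'-t$ through the negation-closure of $\calQ$, and noting that the Lipschitz estimate furnished by (b) is what legitimizes treating $t\mapsto P_t$ as a continuous curve in the first place.
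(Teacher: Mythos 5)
Your proof is correct and follows essentially the same route as the paper: both compute $D(P_t,P_{t'})$ directly from the IPM definition using the affineness of $t\mapsto\E[P_t]{f(x)}$, pull out the factor $|t-t'|$ via the closure of $\calQ$ under negation, and then invoke Corollary~\ref{th:csg} to conclude. Your explicit remarks about the sign of $t'-t$ and about the Lipschitz estimate guaranteeing continuity of the curve are slightly more careful than the paper's one-line justification, but the argument is the same.
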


\begin{proof}
  The proof relies on Corollary~\ref{th:csg}: for all $t,\tp\in[0,1]$,
  \begin{align*}
    D(P_t,P_\tp)
    &= \sup_{f\in\calQ}\left\{ \: \E[(1{-}t)P_0+tP_1]{f(x)}-\E[(1{-}\tp)P_0+\tp P_1]{f(x)} \: \right\} \\
    &= \sup_{f\in\calQ}\left\{ \: -(t{-}\tp)\E[P_0]{f(x)}+(t{-}\tp)\E[P_1](f(x)) \: \right\} \\
    &= |t{-}\tp| \sup_{f\in\calQ}\left\{ \: \E[P_0]{f(x)} - \E[P_1]{f(x)} \right\}~.
  \end{align*}
  where the last equality relies on the fact that if $f\in\calQ$, then $-f\in\calQ$.
  By Corollary~\ref{th:csg}, the mixture curve is a constant speed minimal geodesic.
  Since this is true for any $P_0,P_1\in\calM$ such that $D(P_0,P_1)<\infty$,
  the distance $D$ is strictly intrinsic.
\end{proof}

\begin{remark}
  \label{rem:polishproperties}
  Although Theorem~\ref{th:mixtureipm} makes $(\calM,D)$ a length
  space, it does not alone make it a strictly intrinsic Polish space.
  One also needs to establish the completeness and
  separability\footnote{\relax For instance the set of probability
    measures on $\R$ equipped with the total variation
    distance~\eqref{eq:totalvariation} is not separable because any
    dense subset needs one element in each of the disjoint balls
    $B_x=\{\,P{\in}\pspace{\R}:D_{TV}(P,\delta_x){<}1/2\,\}$.}
  properties of a Polish space. Fortunately, these properties are true
  for both $(\calM^1,W_1)$ and $(\calM^1,\calE_d)$ when the ground
  space is Polish.\footnote{\relax For the Wasserstein distance, see
    \cite[Theorem~6.18]{villani-2009}.  For the Energy distance, both
    properties can be derived from Theorem~\ref{th:edmmd} after
    recaling that $\Phi_\calX\subset\calH$ is both complete and
    separable because it is isometric to $\calX$ which is Polish.}
\end{remark}

Since both the $1$-Wasserstein distance $W_1$ and the Energy Distance
or MMD $\calE_d$ belong to the IPM family, $\calM$ equipped with
either distance is a strictly intrinsic Polish space. Any two
probability measures are connected by at least one minimal geodesic,
the mixture geodesic. We shall see later that the $1$-Wasserstein
distance admits many more minimal geodesics. However, in the case of
ED/MMD distances, mixture geodesics are the only minimal geodesics.

\begin{theorem}
  \label{th:mmdgeodesics}
  Let $K$ be a characteristic kernel and let $\calM$ be equipped with
  the MMD distance $\calE_{d_K}$. Then any two probability measures
  $P_0,P_1\in\calM$ such that $\calE_{d_K}(P_0,P_1)<\infty$ are joined
  by exactly one constant speed minimal geodesic, the mixture
  geodesic~\eqref{eq:mixture}.
\end{theorem}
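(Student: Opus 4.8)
The existence half of the statement is already in hand: since $\calE_{d_K}$ is an IPM~\eqref{eq:mmd}, Theorem~\ref{th:mixtureipm} tells us that the mixture curve~\eqref{eq:mixture} is a constant speed minimal geodesic. The entire content of the theorem is therefore \emph{uniqueness}, and the plan is to transport the problem into the RKHS~$\calH$, where the Hilbert structure rigidly pins down geodesics. I may assume $P_0\neq P_1$, since otherwise $\calE_{d_K}(P_0,P_1)=0$, and any constant speed minimal geodesic has length $0$, hence constant once we know the embedding below is injective. I also work in $\calM^1$: the hypothesis $\calE_{d_K}(P_0,P_1)<\infty$ with $P_0\neq P_1$ places both endpoints in the domain of $\calE_{d_K}$, and every point $\gamma_t$ on a finite-length geodesic is at finite distance from $P_0$, hence lies in $\calM^1$ as well.

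The main tool is the \emph{mean embedding} $\Psi:P\in\calM^1\mapsto\E[P]{\Phi_x}\in\calH$. By Theorem~\ref{th:edmmd} this map is an isometry, $\calE_{d_K}(Q,P)=\|\Psi(Q)-\Psi(P)\|\subH$, and since $K$ is characteristic, Corollary~\ref{th:characteristic} guarantees that $\Psi$ is \emph{injective}. Moreover $\Psi$ is affine: by linearity of expectation the mixture $P_t=(1{-}t)P_0+tP_1$ satisfies $\Psi(P_t)=(1{-}t)\Psi(P_0)+t\Psi(P_1)$, so the image of the mixture curve is exactly the line segment joining $\Psi(P_0)$ and $\Psi(P_1)$ in $\calH$.

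Now let $\gamma:[0,1]\to\calM^1$ be \emph{any} constant speed minimal geodesic from $P_0$ to $P_1$. By Corollary~\ref{th:csg}(b) we have $\calE_{d_K}(\gamma_t,\gamma_{t'})=|t{-}t'|\,\calE_{d_K}(P_0,P_1)$ for all $t,t'$. Writing $c_t=\Psi(\gamma_t)$ and applying the isometry, the curve $c$ in $\calH$ satisfies $\|c_t-c_{t'}\|\subH=|t{-}t'|\,L$, where $L=\|c_0-c_1\|\subH>0$. The key step is then the elementary Hilbert-space fact that such a curve must be the straight segment. Setting $v=c_1-c_0$ and $w=c_t-c_0$, the relations $\|w\|\subH=tL$, $\|v-w\|\subH=(1{-}t)L$ and $\|v\|\subH=L$, expanded through $\|v-w\|\subH^2=\|v\|\subH^2-2\DP{v}{w}+\|w\|\subH^2$, yield $\DP{v}{w}=tL^2=\|v\|\subH\,\|w\|\subH$. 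This is precisely the equality case of the Cauchy--Schwarz inequality, so $w=tv$, that is $c_t=(1{-}t)c_0+tc_1=\Psi(P_t)$. Since $\Psi$ is injective, $\gamma_t=P_t$ for every $t$, and $\gamma$ is the mixture geodesic.

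The only genuine obstacle is isolating this Hilbert-space rigidity cleanly, and the remark worth making is \emph{why} it holds for MMD but not for $W_1$: it is the strict convexity of the Hilbert norm (captured here by the Cauchy--Schwarz equality condition) that collapses every minimal geodesic onto a single segment. For the $1$-Wasserstein distance there is no such embedding into a strictly convex space, and the weaker triangle-equality characterization of Theorem~\ref{th:twopoints} leaves room for many distinct geodesics.
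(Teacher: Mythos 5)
Your proof is correct and follows essentially the same route as the paper: existence via Theorem~\ref{th:mixtureipm}, then uniqueness by pushing an arbitrary constant speed minimal geodesic through the mean embedding of Theorem~\ref{th:edmmd} and concluding with the injectivity supplied by Corollary~\ref{th:characteristic}. The only difference is cosmetic: you spell out the Cauchy--Schwarz equality argument showing that a point at distances $tD$ and $(1{-}t)D$ from the endpoints of a segment of length $D$ in a Hilbert space is unique, a fact the paper simply asserts when it declares that the set~\eqref{eq:condh} contains a single element.
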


Note that $\calE_{d_K}$ is also the ED for the strongly negative definite kernel $d_K$.

\begin{proof}
  \def\ce{\calE_{d_K}}
  Theorem~\ref{th:mixtureipm} already shows that any two
  measures $P_0,P_1\in\calM$ are connected by the mixture geodesic $P_t$.
  We only need to show that it is unique.
  For any $t\in[0,1]$, the measure $P_t$ belongs to the set
  \begin{equation}
    \label{eq:condp}
    \big\{\: P\in\calM: \ce(P_0,P)=t D
          \text{~and~} \ce(P,P_1)=(1{-}t) D \:\big\}\subset\calM
  \end{equation}
  where $D=\ce(P_0,P_1)$. Thanks to Theorem~\ref{th:edmmd},
  $\E[P_t]{\Phi_x}$ must belong to the set
  \begin{equation}
    \label{eq:condh}
    \big\{\: \Psi\in\calH: \|\E[P_0]{\Phi_x}-\Psi\|\subH = tD
    \text{~and~} \|\Psi-\E[P_1]{\Phi_x}\|\subH = (1{-}t)D \:\big\}\subset\calH~.
  \end{equation}
  with $D=\|\E[P_0]{\Phi_x}-\E[P_1]{\Phi_x}\|\subH$.
  Since there is only one point $\Psi$ that satisfies these conditions in $\calH$,
  and since Corollary~\ref{th:characteristic} says that the map $P\mapsto\E[P]{\Phi_x}$
  is injective, there can only be one $P$ satisfying \eqref{eq:condp} and this
  must be $P_t$. Therefore the mixture geodesic is the only one.
\end{proof}

\subsection{Displacement geodesics}
\label{sec:displacementgeodesics}

\paragraph{Displacement geodesics in the Euclidean case~}

Let us first assume that $\calX$ is a Euclidean space and $\calM$ is
equipped with the $p$-Wasserstein distance $W_p$.  Let
$P_0,P_1\in\calM^p$ be two distributions with optimal transport plan
$\pi$. The \emph{displacement curve} joining $P_0$ to $P_1$ is formed
by the distributions
\begin{equation}
  \label{eq:euclideandisplacement}
  \forall t\in[0,1] \quad P_t = \big((1{-}t)x+ty\big)\pfd\pi(x,y) ~.
\end{equation}
Intuitively, whenever the optimal transport plan specifies that a
grain of probability mass must be transported from $x$ to $y$ in
$\calX$, we follow the shortest path connecting $x$ and $y$, that
is, in a Euclidean space, a straight line, but we drop the grain after
performing a fraction $t$ of the journey.

\begin{proposition}
  \label{th:euclideandisplacementgeodesic}
  Let $\calX$ be a Euclidean space and let $\calM$ be equipped with
  the $p$-Wasserstein distance \eqref{eq:wasserstein} for some
  $p\geq1$. Any displacement curve~\eqref{eq:euclideandisplacement}
  joining two distributions $P_0,P_1$ such that $W_p(P_0,P_1)<\infty$
  is a constant speed minimal geodesic, making $W_p$ a strictly
  intrinsic distance.
\end{proposition}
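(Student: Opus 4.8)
The plan is to reduce everything to Corollary~\ref{th:csg}. Taking $d = W_p$ and $\gamma_t = P_t$, condition~(c) of that corollary tells us that it suffices to establish the single inequality
\[
  W_p(P_t, P_{t'}) ~\leq~ |t-t'|\, W_p(P_0,P_1) \qquad \forall\, t,t'\in[0,1]~,
\]
since this immediately upgrades to the full claim that the displacement curve is a constant speed minimal geodesic of length $W_p(P_0,P_1)$. As a preliminary, I would note that $P_t\in\calM^p$ whenever $P_0,P_1\in\calM^p$, by convexity of $\|\cdot\|^p$, so that $W_p(P_t,P_{t'})$ is well defined.

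Because $W_p^{\,p}$ is itself an infimum over transport plans, proving the inequality only requires exhibiting \emph{one} coupling of $P_t$ and $P_{t'}$ whose cost is at most $|t-t'|^p\,W_p(P_0,P_1)^p$. First, Theorem~\ref{th:existenz} guarantees that an optimal plan $\pi$ between $P_0$ and $P_1$ exists (with finite cost, since $W_p(P_0,P_1)<\infty$ and $P_0,P_1\in\calM^p$). Writing $T_s(x,y) = (1{-}s)x + sy$, the displacement curve is exactly $P_s = T_s\pfd\pi$, and I would take as candidate coupling the joint law $\sigma = (T_t, T_{t'})\pfd\pi$, whose two marginals are by construction $P_t$ and $P_{t'}$, so that $\sigma\in\Pi(P_t,P_{t'})$.

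The crux is then the cost computation, and it is here that the Euclidean hypothesis does all the work. We have the algebraic identity $T_t(x,y) - T_{t'}(x,y) = (t-t')(y-x)$, so by positive homogeneity of the Euclidean norm $d(T_t(x,y),T_{t'}(x,y)) = |t-t'|\,d(x,y)$ pointwise. Integrating against $\pi$ gives
\[
  \E[(x,y)\sim\pi]{d(T_t(x,y),T_{t'}(x,y))^p}
  = |t-t'|^p\,\E[(x,y)\sim\pi]{d(x,y)^p}
  = |t-t'|^p\,W_p(P_0,P_1)^p~,
\]
and since $\sigma$ is only one admissible coupling, taking the infimum and then the $p$-th root yields the desired inequality. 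Applying Corollary~\ref{th:csg} finishes the geodesic claim; and since the argument applies verbatim to every pair $P_0,P_1$ with $W_p(P_0,P_1)<\infty$, the distance is strictly intrinsic.

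I expect the only real subtlety to be conceptual rather than technical: everything hinges on the pointwise identity $d(T_t,T_{t'}) = |t-t'|\,d(x,y)$, which relies on both translation invariance and positive homogeneity of the Euclidean distance. In a general strictly intrinsic space one would have to replace the affine interpolation $T_s$ by a measurable selection of constant-speed minimal geodesics joining $x$ to $y$, and the genuine difficulty would migrate to proving that such a selection is measurable (so that $P_s$ is well defined as a push-forward). The Euclidean case sidesteps this entirely, since the geodesic is the explicit straight line.
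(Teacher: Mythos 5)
Your proposal is correct and follows essentially the same route as the paper: exhibit the coupling $\big(T_t,T_{t'}\big)\pfd\pi_{01}$ between $P_t$ and $P_{t'}$, use the pointwise identity $\|T_t(x,y)-T_{t'}(x,y)\|=|t-t'|\,\|x-y\|$ to bound its cost by $|t-t'|^p\,W_p(P_0,P_1)^p$, and conclude via Corollary~\ref{th:csg}. Your added remarks on the existence of the optimal plan, on $P_t\in\calM^p$, and on where the argument breaks in a general length space are sensible but do not change the substance.
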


\begin{proof}
  Let $\pi_{01}$ be the optimal transport plan between $P_0$ and $P_1$.
  For all $t,\tp\in[0,1]$, define a tentative transport plan $\pi_{t\tp}$
  between $P_t$ and $P_\tp$ as
  \[
     \pi_{t\tp} = \big(\,(1{-}t)x+ty,\,(1{-}\tp)x+\tp y\,\big)\pfd\pi_{01}(x,y) ~\in~\Pi(P_t,P_\tp)~.
  \]
  Then
  \begin{align*}
    W_p(P_t,P_\tp)^p &\leq \E[(x,y)\sim\pi_{t\tp}]{\:\|x-y\|^p\:} \\
    &= \E[(x,y)\sim\pi]{\:\|(1{-}t)x+ty-(1{-}\tp)x-\tp y\|^p\:} \\
    &= |t-\tp|^p \: \E[(x,y)\sim\pi]{\:\|x-y\|\:} = |t-\tp|^p\:W_p(P_0,P_1)^p~.
  \end{align*}
  By Corollary~\ref{th:csg}, the displacement curve is a constant
  speed minimal geodesic.  Since this is true for any
  $P_0,P_1\in\calM$ such that $W_p(P_0,P_1)<\infty$, the distance
  $W_p$ is strictly intrinsic.
\end{proof}

When $p>1$, it is a well-known fact that the displacement geodesics
are the only geodesics of $\calM$ equipped with the $W_p$ distance.

\begin{proposition}
  \label{th:euclideanwpgeodesics}
  The displacement geodesics \eqref{eq:euclideandisplacement} are the only
  constant speed minimal geodesics of $\calM$ equipped with
  the $p$-Wasserstein distance $W_p$ with $p>1$.
\end{proposition}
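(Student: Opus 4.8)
The plan is to show that every constant-speed minimal geodesic $(P_t)_{t\in[0,1]}$ of $(\calM,W_p)$ coincides with a displacement curve \eqref{eq:euclideandisplacement} built from some optimal transport plan between $P_0$ and $P_1$. The decisive ingredient, and the reason the statement holds only for $p>1$, is the strict convexity of $r\mapsto r^p$, which surfaces through the equality case of Minkowski's inequality in $L^p$. First I would use Corollary~\ref{th:csg} to translate ``constant speed minimal geodesic'' into the metric identities $W_p(P_s,P_t)=|s{-}t|\,W_p(P_0,P_1)$ valid for all $s,t\in[0,1]$, so that the problem becomes one purely about optimal plans and their transport costs.

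The core is a three-point argument at a fixed time $t\in(0,1)$. Invoking Theorem~\ref{th:existenz}, I pick optimal plans between $P_0,P_t$ and between $P_t,P_1$ and glue them along their common marginal $P_t$ (the gluing lemma, see \cite{villani-2009}) into a single measure $\sigma$ on $\calX^3$ with coordinates $(x,z,y)$ for which $z\sim P_t$. Writing $W=W_p(P_0,P_1)$, optimality and the constant-speed identities give $\big\|\,\|x{-}z\|\,\big\|_{L^p(\sigma)}=tW$ and $\big\|\,\|z{-}y\|\,\big\|_{L^p(\sigma)}=(1{-}t)W$, whence
\[ W ~\le~ \big\|\,\|x{-}y\|\,\big\|_{L^p(\sigma)} ~\le~ \big\|\,\|x{-}z\|+\|z{-}y\|\,\big\|_{L^p(\sigma)} ~\le~ \big\|\,\|x{-}z\|\,\big\|_{L^p(\sigma)} + \big\|\,\|z{-}y\|\,\big\|_{L^p(\sigma)} ~=~ tW+(1{-}t)W ~=~ W~. \]
All three inequalities are therefore equalities. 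The first forces the $(x,y)$-marginal $\pi$ of $\sigma$ to be an optimal plan; the second (the pointwise triangle inequality in $\calX$) forces $z$ to lie on the segment $[x,y]$ $\sigma$-almost surely; the third is Minkowski's inequality, whose equality case for $p>1$ forces $\|x{-}z\|$ and $\|z{-}y\|$ to be proportional $\sigma$-almost surely. Solving the proportionality together with the collinearity and the known norms yields $z=(1{-}t)x+ty$ a.s., hence $P_t=\big((1{-}t)x+ty\big)\pfd\pi$.

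It remains to produce a single optimal plan valid simultaneously for all $t$. I would run the same gluing along a finite partition $0=t_0<\dots<t_n=1$: gluing consecutive optimal plans produces a measure $\sigma^{(n)}$ on $\calX^{n+1}$ whose coordinates satisfy, by the same collinearity computation applied to each triple $(X_0,X_i,X_n)$ (the intermediate marginals are themselves optimal by the chain above), the relation $X_i=(1{-}t_i)X_0+t_iX_n$; so the single plan $\pi^{(n)}$, namely the $(X_0,X_n)$-marginal, realizes the displacement form at \emph{every} partition time. Refining along dyadic partitions and extracting a weak limit $\pi$ from the weakly compact set $\Pi_{\mathrm{opt}}(P_0,P_1)$ of optimal plans, and using the continuity of $t\mapsto P_t$ in $W_p$ together with the weak continuity of pushforward by the map $(1{-}t)x+ty$, I obtain one optimal $\pi$ with $P_t=\big((1{-}t)x+ty\big)\pfd\pi$ for all $t$, i.e. a displacement geodesic \eqref{eq:euclideandisplacement}.

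The main obstacle is the equality analysis of the displayed chain: one must combine the triangle-equality (which only locates $z$ somewhere on the segment) with the $L^p$ Minkowski-equality (which fixes the ratio along the segment) to pin $z$ exactly at the displacement position. This is precisely where $p>1$ is indispensable, since for $p=1$ Minkowski's inequality is an identity and imposes no proportionality whatsoever, which is exactly why $W_1$ admits many additional geodesics, such as the mixture geodesics of Theorem~\ref{th:mixtureipm}. A secondary technical point is the passage from ``one plan per $t$'' to ``one plan for all $t$,'' which I handle by the partition-gluing and weak-compactness argument above.
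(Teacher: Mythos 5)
Your proposal is correct and follows essentially the same route as the paper: glue the optimal plans for $(P_0,P_t)$ and $(P_t,P_1)$ along $P_t$, sandwich $W_p(P_0,P_1)$ in a chain of inequalities, and use the equality case of Minkowski's inequality (valid only for $p>1$) together with the pointwise triangle equality to pin the intermediate point at $(1{-}t)x+ty$ almost surely. Your third paragraph, which extracts a single optimal plan valid for all $t$ simultaneously via partition refinement and weak compactness, is a legitimate extra care that the paper's proof silently glosses over; it is a welcome refinement rather than a divergence in method.
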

This is a good opportunity to introduce a very useful lemma.
\begin{lemma}[Gluing]
  \label{th:gluing}
  Let $\calX_i$, $i=1,2,3$ be Polish metric spaces.
  Let probability measures $\mu_{12}\in\pspace{\calX_1{\times}\calX_2}$ and
  $\mu_{23}\in\pspace{\calX_2{\times}\calX_3}$ have the same marginal distribution $\mu_2$ on $\calX_2$.
  Then there exists $\mu\in\pspace{\calX_1{\times}\calX_2\times\calX_3}$ such that
  $(x,y)\pfd\mu(x,y,z)=\mu_{12}$ and $(y,z)\pfd\mu(x,y,z)=\mu_{23}$.
\end{lemma}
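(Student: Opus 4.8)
The plan is to construct $\mu$ by disintegrating both given measures over their shared $\calX_2$-marginal $\mu_2$ and then reassembling the pieces so that the $\calX_1$ and $\calX_3$ coordinates become conditionally independent given the $\calX_2$ coordinate. Because each $\calX_i$ is Polish, the disintegration theorem for regular conditional probabilities applies, furnishing measurable families of probability kernels $y\mapsto\kappa^{12}_y$ on $\calX_1$ and $y\mapsto\kappa^{23}_y$ on $\calX_3$ such that
\[
   \mu_{12}(dx,dy)=\kappa^{12}_y(dx)\,\mu_2(dy)
   \quad\text{and}\quad
   \mu_{23}(dy,dz)=\kappa^{23}_y(dz)\,\mu_2(dy)~.
\]

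Next I would define $\mu$ on $\calX_1{\times}\calX_2{\times}\calX_3$ by the formula
\[
   \mu(dx,dy,dz) ~=~ \kappa^{12}_y(dx)\,\kappa^{23}_y(dz)\,\mu_2(dy)~,
\]
which is a genuine probability measure because each kernel integrates to one while $\mu_2$ is itself a probability measure; the measurability of the integrand in $y$ is exactly what disintegration provides. Operationally, to sample from $\mu$ one draws $y\sim\mu_2$ and then draws $x\sim\kappa^{12}_y$ and $z\sim\kappa^{23}_y$ independently.

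Finally I would verify the two marginal conditions. Projecting onto the $(x,y)$ coordinates integrates out $z$; since $\int_{\calX_3}\kappa^{23}_y(dz)=1$, the result collapses to $\kappa^{12}_y(dx)\,\mu_2(dy)=\mu_{12}$, and symmetrically the projection onto $(y,z)$ returns $\mu_{23}$. Each check is a one-line application of Tonelli's theorem, confirming that $(x,y)\pfd\mu=\mu_{12}$ and $(y,z)\pfd\mu=\mu_{23}$ as required.

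The only genuine obstacle is securing the existence of the regular conditional kernels $\kappa^{12}_y$ and $\kappa^{23}_y$ together with their joint measurability in $y$. This is precisely where the Polish hypothesis is indispensable, since regular conditional probabilities can fail to exist on arbitrary measurable spaces; once they are in hand, everything else reduces to routine bookkeeping with Tonelli's theorem.
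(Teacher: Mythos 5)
Your proposal is correct and is essentially the argument the paper sketches: the paper's proof note says the construction is ``simply $P(x,y,z)=P(x|y)P(z|y)P(y)$'' and that the Polish hypothesis is needed to handle the existence of the conditional probabilities, which is exactly the disintegration-plus-conditional-independence construction you carry out in detail. Your write-up correctly identifies the existence and measurability of the regular conditional kernels as the one nontrivial point, which is where the paper defers to its cited reference.
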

\begin{proofx}{Proof notes for Lemma~\ref{th:gluing}}
  At first sight, this is simply $P(x,y,z)=P(x|y)P(z|y)P(y)$ with
  $\mu_{12}{=}P(x,y)$, $\mu_{23}{=}P(y,z)$. Significant technical
  difficulties arise when $P(y)=0$. This is where one needs the
  topological properties of a Polish space \cite{berti-2015}.
\end{proofx}

\begin{proofx}{Proof of Proposition~\ref{th:euclideanwpgeodesics}}
  Let $t\in[0,1]\mapsto P_t$ be a constant speed minimal geodesic.
  Any point $P_t$ must satisfy the equality
  \[
  W_p(P_0,P_t)+W_p(P_t,P_1)=W_p(P_0,P_1)
  \]
  Let $\pi_0$ and $\pi_1$ be the optimal transport plans
  associated with $W_p(P_0,P_t)$ and $W_p(P_t,P_1)$ and construct
  $\pi_3\in\pspace{\calX^3}$ by gluing them. Then we must have
  \begin{multline*}
    \left(\E[(x,y,z)\sim\pi_3]{\,\|x-y\|^p\,}\right)^{1/p} +
    \left(\E[(x,y,z)\sim\pi_3]{\,\|y-z\|^p\,}\right)^{1/p} \\
    ~=~ W_p(P_0,P_1)
    ~\leq~ \left(\E[(x,y,z)\sim\pi_3]{\,\|x-z\|^p\,}\right)^{\tfrac1p}~.
  \end{multline*}
  Thanks to the properties of the Minkowski's inequality, this can
  only happen for $p>1$ if there exists $\lambda\in[0,1]$ such that,
  $\pi_3$-almost surely, $\|x{-}y\|{=}\lambda\|x{-}z\|$ and
  $\|y{-}z\|{=}(1{-}\lambda)\|x{-}z\|$.  This constant can only be $t$
  because $W_P(P_0,P_t)=tW_p(P_0,P_1)$ on a constant speed minimal
  geodesic. Therefore $y=tx+(1{-}t)y$, $\pi_3$-almost surely.
  Therefore $P_t=y\pfd\pi(x,y,z)$ describes a displacement curve
  as defined in~\eqref{eq:euclideandisplacement}.  \qedsymbol
\end{proofx}

Note however that the displacement geodesics are not the only minimal
geodesics of the $1$-Wasserstein distance $W_1$. Since $W_1$ is an IPM
\eqref{eq:emdual}, we know that the mixture geodesics are also minimal
geodesics (Theorem~\ref{th:mixtureipm}).  There are in fact many more
geodesics. Intuitively, whenever the optimal transport plan from $P_0$
to $P_1$ transports a grain of probability from $x$ to $y$, we can
drop the grain after a fraction $t$ of the journey (displacement
geodesics), we can randomly decide whether to transport the grain as
planned (mixture geodesics), we can also smear the grain of
probability along the shortest path connecting $x$ to $y$, and we can
do all of the above using different $t$ in different parts of the
space.

\medskip
\paragraph{Displacement geodesics in the general case~}

The rest of this section reformulates these results to the more
general situation where $\calX$ is a strictly intrinsic Polish space.
Rather than following the random curve approach described in
\cite[Chapter~7]{villani-2009}, we chose a more elementary approach
because we also want to characterize the many geodesics of $W_1$. Our
definition is equivalent for $p{>}1$ and subtly weaker for~$p{=}1$.

The main difficulties are that we may no longer have a single shortest
path connecting two points $x,y\in\calX$, and that we may not be able
to use the push-forward formulation \eqref{eq:euclideandisplacement}
because the function that returns the point located at position $t$
along a constant speed minimal geodesic joining $x$ to $y$ may not
satisfy the necessary measurability requirements.

\begin{definition}[Displacement geodesic]
  \label{def:displacementgeodesics}
  Let $\calX$ be a strictly intrinsic Polish metric space and let
  $\calM^p$ be equipped with the $p$-Wasserstein distance $W_p$. The
  curve $t\in[0,1]\mapsto P_t\in\calM^p$ is called a displacement
  geodesic if, for all $0{\leq}t{\leq}\tp{\leq}1$, there is a
  distribution $\pi_4\in\pspace{\calX^4}$ such that
  \begin{itemize}[nosep]
  \item[$i)$]
    The four marginals of $\pi_4$ are respectively
    equal to $P_0$, $P_t$, $P_\tp$, $P_1$.
  \item[$ii)$]
    The pairwise marginal $(x,z)\pfd\pi_4(x,u,v,z)$ is
    an optimal transport plan
    \[ W_p(P_0,P_1)^p = \E[(x,u,v,z)\sim\pi_4]{d(x,z)^p}~. \]
  \item[$iii)$]
    The following relations hold $\pi_4(x,u,v,z)$-almost surely\/:
    \[
    d(x,u)=t\,d(x,z) \,,\quad
    d(u,v)=(\tp-t)\,d(x,z) \,,\quad
    d(v,z)=(1{-}\tp)\,d(x,z)~.
    \]
  \end{itemize}
\end{definition}

\begin{proposition}
  \label{th:displacementgeodesics}
  Definition~\ref{def:displacementgeodesics} indeed implies that $P_t$
  is a constant speed minimal geodesic of length
  $W_p(P_0,P_1)$. Furthermore, for all $0\leq t\leq\tp\leq1$, all
  the pairwise marginals of $\pi_4$ are optimal transport plans
  between their marginals.
\end{proposition}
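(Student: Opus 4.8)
The plan is to derive part~1 directly from Corollary~\ref{th:csg} and then bootstrap part~2 from the constant-speed property it yields. First I would observe that, by clause~($i$), the pairwise marginal $(u,v)\pfd\pi_4(x,u,v,z)$ lies in $\Pi(P_t,P_\tp)$, so it upper-bounds the Wasserstein cost:
\[
  W_p(P_t,P_\tp)^p \leq \E[(x,u,v,z)\sim\pi_4]{d(u,v)^p}~.
\]
Substituting the almost-sure identity $d(u,v)=(\tp-t)\,d(x,z)$ from clause~($iii$) and then the optimality identity $\E[(x,u,v,z)\sim\pi_4]{d(x,z)^p}=W_p(P_0,P_1)^p$ from clause~($ii$) yields $W_p(P_t,P_\tp)\leq|\tp-t|\,W_p(P_0,P_1)$. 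This is exactly condition~(c) of Corollary~\ref{th:csg}, applied with the ground metric $W_p$ on $\calM^p$; since that Lipschitz bound already forces $t\mapsto P_t$ to be continuous, it is a genuine curve and the corollary promotes the inequality to the equality $W_p(P_t,P_\tp)=|\tp-t|\,W_p(P_0,P_1)$, establishing that $P_t$ is a constant speed minimal geodesic of length $W_p(P_0,P_1)$.

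For part~2 I would use this constant-speed identity to evaluate the cost of each pairwise marginal and compare it against the corresponding Wasserstein distance. Clause~($iii$) handles three chains directly: the $(x,u)$, $(u,v)$, and $(v,z)$ marginals have costs equal to $t^p$, $(\tp-t)^p$, and $(1{-}\tp)^p$ times $W_p(P_0,P_1)^p$, which by the constant-speed property equal $W_p(P_0,P_t)^p$, $W_p(P_t,P_\tp)^p$, and $W_p(P_\tp,P_1)^p$ respectively; since each is a valid plan between its marginals, attaining the Wasserstein value makes it optimal. The $(x,z)$ marginal is optimal by hypothesis~($ii$).

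The two remaining marginals, $(x,v)$ and $(u,z)$, are not constrained directly by clause~($iii$), and recovering them is the one nonroutine step. The key observation is that summing the three identities of clause~($iii$) saturates the triangle inequality along $x\to u\to v\to z$, since $d(x,u)+d(u,v)+d(v,z)=d(x,z)$ holds $\pi_4$-almost surely. I would then squeeze: $d(x,v)\leq d(x,u)+d(u,v)=\tp\,d(x,z)$ while $d(x,v)\geq d(x,z)-d(v,z)=\tp\,d(x,z)$, forcing $d(x,v)=\tp\,d(x,z)$ almost surely, and symmetrically $d(u,z)=(1{-}t)\,d(x,z)$. Feeding these into the cost integral and invoking the constant-speed property once more shows that the $(x,v)$ and $(u,z)$ marginals attain $W_p(P_0,P_\tp)^p$ and $W_p(P_t,P_1)^p$, hence are optimal, completing part~2. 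The main obstacle is purely this betweenness argument, together with the care needed to keep every distance identity qualified as holding $\pi_4$-almost surely; everything else is a direct substitution into the Wasserstein cost functional.
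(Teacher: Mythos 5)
Your proof is correct. Part~1 coincides with the paper's argument: bound $W_p(P_t,P_\tp)^p$ by the cost of the $(u,v)$ marginal, substitute clauses ($ii$) and ($iii$), and invoke Corollary~\ref{th:csg}(c); your added remark that the Lipschitz bound guarantees continuity of $t\mapsto P_t$ is a point the paper leaves implicit. For part~2 you take a genuinely different route. The paper writes the chain $\tp\,W_p(P_0,P_1)=W_p(P_0,P_\tp)\leq(\E[\pi_4]{d(x,v)^p})^{1/p}\leq(\E[\pi_4]{(d(x,u)+d(u,v))^p})^{1/p}\leq\cdots=\tp\,W_p(P_0,P_1)$, using Minkowski's inequality in the middle, and concludes optimality of the $(x,v)$ marginal from the fact that the two ends agree. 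You instead observe that clause~($iii$) saturates the triangle inequality along $x\to u\to v\to z$ and squeeze $d(x,v)$ between $d(x,u)+d(u,v)=\tp\,d(x,z)$ and $d(x,z)-d(v,z)=\tp\,d(x,z)$, obtaining the pointwise identity $d(x,v)=\tp\,d(x,z)$ $\pi_4$-almost surely before integrating. Your version is more elementary (no Minkowski, hence no case distinction at $p=1$ where Minkowski's equality case degenerates) and yields strictly more information, namely the almost-sure proportionality of all six pairwise distances to $d(x,z)$, which is exactly what one needs to see that the restriction of the curve to $[t,\tp]$ is again a displacement geodesic. The paper's version buys a template that it reuses verbatim in the proofs of Theorems~\ref{th:wpgeodesics} and~\ref{th:w1geodesics}, where the Minkowski equality case is the essential tool rather than an avoidable detour.
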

\begin{proof}
  For all $0\leq t\leq\tp\leq1$, we have
  \begin{align*}
    W_p(P_t,P_\tp)^p
    &\leq \E[(x,u,v,z)\sim\pi_4]{d(u,v)^p}\\
    &= (t{-}\tp)^p\,\E[(x,u,v,z)\sim\pi_4]{d(x,z)^p}
    = (t{-}\tp)^p\,W_p(P_0,P_1)^p\,.
  \end{align*}
  By Corollary~\ref{th:csg}, the curve $P_t$ is a
  constant speed minimal geodesic.
  We can then write
  \begin{align*}
    \tp\,W_p(P_0,P_1) &= W_p(P_0,P_\tp)
       \leq \left( \E[(x,u,v,z)\sim\pi_4]{d(x,v)^p} \right)^{1/p}\\
      &\leq \left( \E[(x,u,v,z)\sim\pi_4]{(d(x,u)+d(u,v))^p} \right)^{1/p}\\
      &\leq \left( \E[(x,u,v,z)\sim\pi_4]{d(x,u)^p} \right)^{1/p}
            + \left( \E[(x,u,v,z)\sim\pi_4]{d(u,v)^p} \right)^{1/p}\\
      &\leq t\,\left( \E[(x,u,v,z)\sim\pi_4]{d(x,z)^p} \right)^{1/p}
            + (\tp{-}t)\,\left( \E[(x,u,v,z)\sim\pi_4]{d(x,z)^p} \right)^{1/p}\\
      &= \tp\,W_p(P_0,P_1)~,
  \end{align*}
  where the third inequality is Minkowski's inequality.  Since both
  ends of this chain of inequalities are equal, these inequalities
  must be equalities, implying that $(x,v)\pfd\pi_4$ is an optimal
  transport plan between $P_0$ and $P_\tp$.  We can do likewise for
  all pairwise marginals of $\pi_4$.
\end{proof}

The proposition above does not establish that a displacement geodesic
always exists. As far as we know, this cannot be established without
making an additional assumption such as the local compacity of the
intrinsic Polish space~$\calX$. Since it is often easy to directly
define a displacement geodesic as shown in
\eqref{eq:euclideandisplacement}, we omit the lengthy general proof.

\begin{theorem}
  \label{th:wpgeodesics}
  Let $\calX$ be a strictly intrinsic Polish metric space and let
  $P_0,P_1$ be two distributions of $\calM^p$ equipped with the
  $p$-Wasserstein with $p>1$. The only constant speed minimal
  geodesics of length $W_p(P_0,P_1)$ joining $P_0$ and $P_1$ are the
  displacement geodesics.
\end{theorem}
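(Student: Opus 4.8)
The plan is to establish the converse of Proposition~\ref{th:displacementgeodesics}: that every constant speed minimal geodesic $t\in[0,1]\mapsto P_t$ of length $W_p(P_0,P_1)$ satisfies Definition~\ref{def:displacementgeodesics}, and is therefore a displacement geodesic. The argument parallels the Euclidean Proposition~\ref{th:euclideanwpgeodesics}, but the explicit conclusion $y=(1{-}t)x+tz$ is no longer available on a general strictly intrinsic Polish space; it will be replaced by the purely metric relations of clause $(iii)$, which I would extract from an equality case of Minkowski's inequality. We may assume $W_p(P_0,P_1)>0$, the opposite case being trivial, and we fix $0\leq t\leq\tp\leq1$.

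First I would construct the candidate $\pi_4$ by \emph{gluing}. Let $\pi_{0t}$, $\pi_{t\tp}$, $\pi_{\tp1}$ be optimal transport plans between the consecutive pairs $(P_0,P_t)$, $(P_t,P_\tp)$, $(P_\tp,P_1)$. Applying Lemma~\ref{th:gluing} twice --- first gluing $\pi_{0t}$ and $\pi_{t\tp}$ along their shared marginal $P_t$, then gluing the result with $\pi_{\tp1}$ along $P_\tp$ --- yields a measure $\pi_4\in\pspace{\calX^4}$ with marginals $P_0,P_t,P_\tp,P_1$ whose consecutive pairwise marginals are $(x,u)\pfd\pi_4=\pi_{0t}$, $(u,v)\pfd\pi_4=\pi_{t\tp}$, and $(v,z)\pfd\pi_4=\pi_{\tp1}$. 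This already gives clause $(i)$.

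The heart of the proof is then a chain of (in)equalities. Since $(x,z)\pfd\pi_4\in\Pi(P_0,P_1)$, we have $W_p(P_0,P_1)\leq\|d(x,z)\|_{L_p(\pi_4)}$. On the other hand, the pointwise triangular inequality $d(x,z)\leq d(x,u)+d(u,v)+d(v,z)$ and Minkowski's inequality give
\begin{multline*}
  \|d(x,z)\|_{L_p(\pi_4)} \leq \|d(x,u)\|_{L_p(\pi_4)}+\|d(u,v)\|_{L_p(\pi_4)}+\|d(v,z)\|_{L_p(\pi_4)}\\
  = W_p(P_0,P_t)+W_p(P_t,P_\tp)+W_p(P_\tp,P_1),
\end{multline*}
where the equality uses the optimality of the glued plans. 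By Corollary~\ref{th:csg}, the constant speed property makes this sum equal to $\big(t+(\tp{-}t)+(1{-}\tp)\big)W_p(P_0,P_1)=W_p(P_0,P_1)$, so every inequality is in fact an equality. From this I would read off the two remaining clauses. Equality of $\|d(x,z)\|_{L_p(\pi_4)}$ with $W_p(P_0,P_1)$ is clause $(ii)$. Equality in the triangular step forces $d(x,z)=d(x,u)+d(u,v)+d(v,z)$ almost surely, while the equality case of Minkowski's inequality (valid because $p>1$) forces $d(x,u),d(u,v),d(v,z)$ to be proportional almost surely; matching their $L_p$ norms, which equal $t$, $\tp{-}t$, and $1{-}\tp$ times $W_p(P_0,P_1)$, pins the proportionality constants and yields clause $(iii)$. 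Hence $\pi_4$ satisfies Definition~\ref{def:displacementgeodesics}.

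I expect the main obstacle to be twofold. The technical one is the careful iterated application of Lemma~\ref{th:gluing} on a Polish space, where the shared marginals may be atomic or singular and the measurable selection of a conditional kernel is exactly the delicate point flagged in the gluing lemma's proof notes. The conceptual one is that $p>1$ is essential precisely at the Minkowski step: for $p=1$ its equality case does not force proportionality, which is why $W_1$ admits the additional mixture and ``smearing'' geodesics discussed after Proposition~\ref{th:euclideanwpgeodesics}. Unlike the Euclidean setting, I would deliberately avoid trying to identify $u$ and $v$ with specific points of a single shortest path from $x$ to $z$, since several minimal geodesics may exist between two points; the metric relations of clause $(iii)$ are all that survives and all that is needed.
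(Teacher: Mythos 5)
Your proposal is correct and follows essentially the same route as the paper's proof: glue the three optimal transport plans between consecutive pairs into a $\pi_4$, run the triangle-plus-Minkowski chain of inequalities, observe that the constant speed property forces every inequality to be an equality, and extract clauses $(ii)$ and $(iii)$ from the equality cases (with $p>1$ needed exactly where you say, at the Minkowski step). The only cosmetic difference is that you invoke Corollary~\ref{th:csg} directly where the paper cites Theorem~\ref{th:twopoints} to close the chain; these are interchangeable here.
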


\begin{proof}
  Let $P_t$ be a constant speed minimal geodesic of length $W_p(P_0,P_1)$.
  By Theorem~\ref{th:twopoints}, for all $0\leq t\leq\tp\leq1$,
  \[
     W_p(P_0,P_t)+W_p(P_t,P_\tp)+W_p(P_\tp,P_1) = W_p(P_0,P_1)~.
  \]
  Let $\pi_4$ be constructed by gluing optimal transport plans
  associated with the three distances appearing on the left
  hand side of the above equality. We can then write
  \begin{align*}
    W_p(P_0,P_1) &\leq \left(\E[(x,u,v,z)\sim\pi_4]{d(x,z)^p}\right)^{1/p}\\
    &\leq \left(\E[(x,u,v,z)\sim\pi_4]{(d(x,u)+d(u,v)+d(v,z))^p}\right)^{1/p}\\
    &\leq \left(\E[(x,u,v,z)\sim\pi_4]{d(x,u)^p}\right)^{1/p}
    + \left(\E[(x,u,v,z)\sim\pi_4]{d(u,v)^p}\right)^{1/p}\\
    & \hskip8em + \left(\E[(x,u,v,z)\sim\pi_4]{d(v,z)^p}\right)^{1/p}\\
    &= W_p(P_0,P_t)+W_p(P_t,P_\tp)+W_p(P_\tp,P_1) = W_p(P_0,P_1)~.
  \end{align*}
  Since this chain of inequalities has the same value in both ends,
  all these inequalities must be equalities. The first one means that
  $\pi_4$ is an optimal transport plan for $W_p(P_0,P_1)$. The second
  one means that $(d(x,u)+d(u,v)+d(v,z)=d(x,z)$, $\pi_4$-almost
  surely. When $p>1$, the third one, Minkowski's inequality can only
  be an inequality if there are scalars
  $\lambda_1{+}\lambda_2{+}\lambda_3{=}1$ such that, $\pi_4$-almost
  surely, $d(x,u)=\lambda_1d(x,z)$, $d(x,u)=\lambda_2d(x,z)$, and
  $d(v,z)=\lambda_3d(x,z)$. Since $P_t$ must satisfy
  Corollary~\ref{th:csg}, these scalars can only be $\lambda_1=t$,
  $\lambda_2=\tp{-}t$, and $\lambda_3=1{-}\tp$.
\end{proof}

\paragraph{Minimal geodesics for the $1$-Wasserstein distance~}
\medskip

We can characterize the many minimal geodesics of
the $1$-Wasserstein distance using a comparable strategy.

\begin{theorem}
  \label{th:w1geodesics}
  Let $\calX$ be a strictly intrinsic Polish space and
  let $\calM^1$ be equipped with the distance~$W_1$.
  A curve $t\in[a,b]\mapsto P_t\in\calM^1$ joining
  $P_a$ and $P_b$ is a minimal geodesic of length
  $W_1(P_a,P_b)$ if and only if, for all $a\leq t\leq\tp\leq b$,
  there is a distribution $\pi_4\in\pspace{\calX^4}$ such that
  \begin{itemize}[nosep]
  \item[$i)$]
    The four marginals of $\pi_4$ are respectively
    equal to $P_a$, $P_t$, $P_\tp$, $P_b$.
  \item[$ii)$]
    The pairwise marginal $(x,z)\pfd\pi_4(x,u,v,z)$ is
    an optimal transport plan
    \[ W_p(P_a,P_b) = \E[(x,u,v,z)\sim\pi_4]{d(x,z)}~. \]
  \item[$iii)$]
    The following relation holds $\pi_4(x,u,v,z)$-almost surely\/:
    \[
       d(x,u)+d(u,v)+d(v,z) = d(x,z)~.
    \]
  \end{itemize}
\end{theorem}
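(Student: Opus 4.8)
The plan is to reduce everything to the two-point criterion of Theorem~\ref{th:twopoints}: a curve $t\mapsto P_t$ joining $P_a$ and $P_b$ is a minimal geodesic of length $W_1(P_a,P_b)$ if and only if
\[
  W_1(P_a,P_t)+W_1(P_t,P_\tp)+W_1(P_\tp,P_b) = W_1(P_a,P_b)
\]
for all $a\leq t\leq\tp\leq b$. Each direction of the theorem then amounts to translating this scalar identity to and from the existence of the coupling $\pi_4$. Note at the outset that, because $p=1$, the transport cost is \emph{linear} in the distance, so no Minkowski inequality is available; this is exactly why I expect to recover only the additive relation $(iii)$ rather than the stronger proportionality that characterizes $W_p$ for $p>1$ (Theorem~\ref{th:wpgeodesics}).

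For the sufficiency ($\Leftarrow$), I would assume that for each pair $t\leq\tp$ a measure $\pi_4$ satisfying $(i)$--$(iii)$ is given. The three pairwise marginals $(x,u)\pfd\pi_4$, $(u,v)\pfd\pi_4$, $(v,z)\pfd\pi_4$ are transport plans between the relevant distributions, hence each dominates the corresponding $W_1$. Summing these bounds, then invoking the almost-sure identity $(iii)$ and the optimality $(ii)$,
\begin{multline*}
  W_1(P_a,P_t)+W_1(P_t,P_\tp)+W_1(P_\tp,P_b)
  \leq \E[\pi_4]{d(x,u)+d(u,v)+d(v,z)} \\
  = \E[\pi_4]{d(x,z)} = W_1(P_a,P_b).
\end{multline*}
The reverse inequality is the triangular inequality for $W_1$, so equality holds and Theorem~\ref{th:twopoints} certifies that $P_t$ is a minimal geodesic of length $W_1(P_a,P_b)$.

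For the necessity ($\Rightarrow$), I would assume $P_t$ is such a minimal geodesic, so the scalar identity above holds. Fixing $t\leq\tp$, I would pick optimal transport plans realizing $W_1(P_a,P_t)$, $W_1(P_t,P_\tp)$, $W_1(P_\tp,P_b)$ and assemble a single $\pi_4\in\pspace{\calX^4}$ by applying the Gluing Lemma~\ref{th:gluing} twice, first along the shared $P_t$-marginal and then along the shared $P_\tp$-marginal; property $(i)$ then holds by construction. To extract $(ii)$ and $(iii)$ I would exhibit the chain
\begin{multline*}
  W_1(P_a,P_b) \leq \E[\pi_4]{d(x,z)}
  \leq \E[\pi_4]{d(x,u)+d(u,v)+d(v,z)} \\
  = W_1(P_a,P_t)+W_1(P_t,P_\tp)+W_1(P_\tp,P_b) = W_1(P_a,P_b),
\end{multline*}
whose first step uses that $(x,z)\pfd\pi_4$ transports $P_a$ to $P_b$, whose second step is the pointwise triangular inequality, and whose first equality sums the three optimal costs. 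As both ends agree, each inequality is an equality: the first gives optimality $(ii)$, while the second, combined with the everywhere-valid bound $d(x,z)\leq d(x,u)+d(u,v)+d(v,z)$, forces equality $\pi_4$-almost surely, which is $(iii)$.

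The step I expect to be the main obstacle is the gluing construction in the necessity direction: Lemma~\ref{th:gluing} only glues two measures sharing a single marginal, so the four-way coupling $\pi_4$ must be assembled by chaining two applications, each resting on the delicate disintegration argument that Lemma~\ref{th:gluing} packages for Polish spaces. Everything else is a collapse-of-inequalities argument in the spirit of Proposition~\ref{th:displacementgeodesics} and Theorem~\ref{th:wpgeodesics}; the one essential difference is that the passage to $(iii)$ uses only nonnegativity of $d(x,u)+d(u,v)+d(v,z)-d(x,z)$ and hence yields an additive relation, rather than the proportionality that Minkowski's inequality would force when $p>1$.
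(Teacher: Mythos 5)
Your proof is correct and follows essentially the same route as the paper's: both directions reduce to the two-point criterion of Theorem~\ref{th:twopoints}, with the necessity direction gluing the three optimal transport plans into $\pi_4$ and collapsing the resulting chain of inequalities, and the sufficiency direction running the same chain in reverse. Your handling of the sufficiency step is in fact slightly more explicit than the paper's (which asserts the final equality directly rather than noting that each pairwise marginal merely dominates the corresponding $W_1$ and closing the loop with the triangle inequality), but the argument is the same.
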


It is interesting to compare this condition to
Theorem~\ref{th:twopoints}.  Instead of telling us that two
successive triangular inequalities in the probability space $(\calM^1,W_1)$
must be an equality, this result tells us that the same holds
almost-surely in the sample space $(\calX,d)$.  In particular, this
means that $x$, $u$, $v$, and $z$ must be aligned along a geodesic of
$\calX$. In the case of a mixture geodesic, $u$ and $v$ coincide with
$x$ or $z$. In the case of a displacement geodesic, $u$ and $v$ must
be located at precise positions along a constant speed geodesic
joining $x$ to $z$. But there are many other ways to fulfil these
conditions.

\begin{proof}
  When $P_t$ is a minimal geodesic, Theorem~\ref{th:twopoints} states
  \[
     \forall a\leq t\leq\tp\leq b \quad
     W_1(P_a,P_t)+W_1(P_t,P_\tp)+W_1(P_\tp,P_b) = W_1(P_a,P_b)~.
  \]
  Let $\pi_4$ be constructed by gluing optimal transport plans
  associated with the three distances appearing on the left
  hand side of the above equality. We can then write
  \begin{align*}
    \E[(x,u,v,z)\sim\pi_4]{d(x,z)} 
    &\leq \E[(x,u,v,z)\sim\pi_4]{d(x,u)+d(u,v)+d(v,z)} \\
    &= W_1(P_a,P_b) \leq \E[(x,u,v,z)\sim\pi_4]{d(x,z)}~. 
  \end{align*}
  Since this chain of equalities has the same value on both ends,
  all these inequalities must be equalities. The first one means
  that $d(x,u)+d(u,v)+d(v,z) = d(x,z)$, $\pi_4$-almost surely.
  The second one means that $(x,z)\pfd\pi_4$ is an optimal transport plan.

  Conversely, assume $P_t$ satisfies the conditions listed in the proposition.
  We can then write, for all $a\leq t\leq\tp\leq b$,
  \begin{align*}
    W_1(P_a,P_b) &= \E[(x,u,v,z)\sim\pi_4]{d(x,z)} \\
    &= \E[(x,u,v,z)\sim\pi_4]{d(x,u)+d(u,v)+d(v,z)} \\
    &= W_1(P_a,P_t)+W_1(P_t,P_\tp)+W_1(P_\tp,P_b) ~,
  \end{align*}
  and we conclude using Theorem~\ref{th:twopoints}.
\end{proof}


\section{Unsupervised learning and geodesic structures}
\label{sec:convexity}

We have seen in the previous section that
the geometry of the space $\calM$ of probability distributions
changes considerably with our choice of a probability distance.
Critical aspects of these possible geometries can be
understood from the characterization of the shortest paths
between any two distributions:
\begin{itemize}
\item
  With the Energy Distance~$\calE_{d}$ or the Maximum Mean
  Discrepancy~$\calE_{d_K}$, the sole shortest path is the mixture
  geodesic~(Theorem~\ref{th:mmdgeodesics}.)
\item
  With the $p$-Wasserstein distance $W_p$, for $p>1$, the sole
  shortest paths are displacement
  geodesics~(Theorem~\ref{th:wpgeodesics}.)
\item
  With the $1$-Wasserstein distance $W_1$, there are many shortest
  paths, including the mixture geodesic, all the displacement geodesics,
  and all kinds of hybrid curves~(Theorem~\ref{th:w1geodesics}.)
\end{itemize}

The purpose of this section is to investigate the consequences of
these geometrical differences on unsupervised learning problems. In
the following discussion, $Q\in\calM$ represents the data distribution
which is only known through the training examples, and
$\calF\subset\calM$ represent the family of parametric models
$P_\theta\in\calM$ considered by our learning algorithm.

Minimal geodesics in length spaces can sometimes be compared to line
segments in Euclidean spaces because both represent shortest paths
between two points. This association provides the means to extend the
familiar Euclidean notion of convexity to length spaces.  This section
investigates the geometry of implicit modeling learning problems
through the lens of this generalized notion of convexity.

\subsection{Convexity \emph{\`{a}-la-carte}}

We now assume that $\calM$ is a strictly intrinsic Polish space
equipped with a distance $D$. Let $\calC$ be a family of smooth
constant speed curves in $\calM$. Although these curves need not be
minimal geodesics, the focus of this section is limited to three
families of curves defined in Section~\ref{sec:geodesics}:
\begin{itemize}[nosep]
\item the family $\calC_g(D)$ of all minimal geodesics in $(\calM,D)$,
\item the family $\calC_d(W_p)$ of the displacement geodesics in $(\calM^p,W_p)$,
\item the family $\calC_m$ of the mixture curves in $\calM$.
\end{itemize}

\begin{definition}
  Let $\calM$ be a strictly intrinsic Polish space.  A closed subset
  $\calF\subset\calM$ is called convex with respect to the family of
  curves $\calC$ when, for all $P_0,P_1\in\calM$, the set~$\calC$
  contains a curve $t\in[0,1]\mapsto
  P_t\in\calX$ connecting $P_0$ and $P_1$ whose graph is contained in
  $\calF$, that is, $P_t\in\calF$ for all $t\in[0,1]$.
\end{definition}

\begin{definition}
  Let $\calM$ be a strictly intrinsic Polish space.  A real-valued
  function $f$ defined on $\calM$ is called convex with respect to the
  family of constant speed curves $\calC$ when, for every curve
  $t\in[0,1]\mapsto P_t\in\calM$ in $\calC$, the function
  $t\in[0,1]\mapsto f(P_t)\in\R$ is convex.
\end{definition}

For brevity we also say that $\calF$ or $f$ is \emph{geodesically convex}
when $\calC=\calC_g(D)$, \emph{mixture convex} when $\calC=\calC_m$,
and \emph{displacement convex} when $\calC=\calC_d(W_p)$.

\begin{theorem}[Convex optimization \emph{\`{a}-la-carte}]
  \label{th:convexopt}
  Let $\calM$ be a strictly intrinsic Polish space equipped with a
  distance $D$. Let the closed subset $\calF\subset\calM$ and the cost
  function $f:\calM\mapsto\R$ be both convex with respect to a same
  family $\calC$ of constant speed curves.  Then, for all $M\geq\min_\calF(f)$,
  \begin{itemize}[nosep]
  \item[$i)$] the level set $L(f,\calF,M)=\{P\in\calF:f(P)\leq M\}$ is
    connected,
  \item[$ii)$] for all $P_0\in\calF$ such that $f(P_0)>M$ and all
    $\epsilon>0$, there exists $P\in\calF$ such that
    $D(P,P_0)=\mathcal{O}(\epsilon)$ and $f(P)\leq
    f(P_0)-\epsilon(f(P_0){-}M)$.
  \end{itemize}
\end{theorem}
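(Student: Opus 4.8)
The key observation is that geodesic convexity collapses everything to the ordinary one-dimensional convexity of $t\mapsto f(P_t)$ along a connecting curve, after which the classical facts about convex functions of one real variable apply verbatim. I would handle both clauses with this single device. For clause $i)$, I would take any two distributions $P_0,P_1\in L(f,\calF,M)$. Since $\calF$ is convex with respect to $\calC$, there is a curve $t\in[0,1]\mapsto P_t\in\calF$ in $\calC$ joining them, and since $f$ is convex with respect to $\calC$, the real function $g(t)=f(P_t)$ is convex on $[0,1]$. From $g(0)\le M$, $g(1)\le M$, and convexity we obtain $g(t)\le(1{-}t)g(0)+t\,g(1)\le M$ for all $t\in[0,1]$, so the entire curve lies inside $L(f,\calF,M)$. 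Because a constant speed curve is continuous, this produces a continuous path inside $L(f,\calF,M)$ joining $P_0$ to $P_1$; as $P_0,P_1$ were arbitrary, the level set is path-connected, hence connected.

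For clause $ii)$, I would fix $P_0\in\calF$ with $f(P_0)>M$ and select a point $P_1\in\calF$ with $f(P_1)\le M$, which exists precisely because $M\ge\min_\calF(f)$. Convexity of $\calF$ again furnishes a curve $t\mapsto P_t\in\calF$ in $\calC$ from $P_0$ to $P_1$, and convexity of $f$ makes $g(t)=f(P_t)$ convex. Evaluating the convexity inequality at $t=\epsilon$ (for $\epsilon\le1$) gives
\[
  f(P_\epsilon)=g(\epsilon)\le(1{-}\epsilon)\,g(0)+\epsilon\,g(1)
    \le f(P_0)-\epsilon\big(f(P_0)-M\big),
\]
so $P=P_\epsilon$ already realizes the required decrease. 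It then remains only to bound the displacement: because the curve has constant speed, $D(P_\epsilon,P_0)$ is at most the length of its initial arc, which equals $\epsilon$ times the total length $\ell$ of the curve, whence $D(P_\epsilon,P_0)\le\epsilon\,\ell=\mathcal{O}(\epsilon)$. (For $\epsilon\ge1$ the statement is vacuous or handled by taking $P=P_1$, since then $f(P_0)-\epsilon(f(P_0){-}M)\ge M\ge f(P_1)$; the interesting regime is $\epsilon\to0$.)

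The argument is conceptually routine, so the only delicate points are bookkeeping. First, one must ensure that a point $P_1\in\calF$ with $f(P_1)\le M$ genuinely exists; this is exactly what $M\ge\min_\calF(f)$ provides, assuming the minimum is attained (as the notation suggests), or equivalently that the sublevel set is nonempty. Second, and more importantly, one must convert the parameter bound obtained at $t=\epsilon$ into a true distance bound, and this is the single place where the constant speed hypothesis on the curves in $\calC$ is essential: without it, moving a parameter increment $\epsilon$ could move an arbitrarily large metric distance, and $D(P_\epsilon,P_0)=\mathcal{O}(\epsilon)$ would fail. I would also emphasize that the constant hidden in $\mathcal{O}(\epsilon)$ is the length $\ell$ of the chosen connecting curve (hence at least $D(P_0,P_1)$), so it depends on the target $P_1$ but is independent of $\epsilon$, which is all that the conclusion requires.
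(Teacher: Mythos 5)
Your proposal is correct and follows essentially the same route as the paper's proof: connect the two points (or $P_0$ to a near-minimizer $P_1$) by a curve of $\calC$ lying in $\calF$, use convexity of $t\mapsto f(P_t)$ to keep the curve in the level set for $(i)$ and to get the decrease at $t=\epsilon$ for $(ii)$, and use the constant speed property to bound $D(P_0,P_\epsilon)$ by $\epsilon$ times the curve length. Your remarks on the existence of $P_1$ and on the hidden constant being the curve length (rather than $D(P_0,P_1)$ as the paper loosely writes) are slightly more careful than the original but do not change the argument.
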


This result essentially means that it is possible to optimize the cost
function $f$ over $\calF$ with a descent algorithm. Result $(i)$ means
that all minima are global minima, and result $(ii)$ means that any
neighborhood of a suboptimal distribution $P_0$ contains a
distribution $P$ with a sufficiently smaller cost to ensure that the
descent will continue.

\begin{proof}
  $(i)$:~ Let $P_0,P_1\in L(f,\calF,M)$. Since they both belong to
  $\calF$, $\calC$ contains a curve $t\in[0,1]\mapsto P_t\in\calF$
  joining $P_0$ and $P_1$.  For all $t\in[0,1]$, we know that
  $P_t\in\calF$ and, since $t\mapsto f(P_t)$ is a convex function, we
  can write $f(P_t)\leq(1-t)f(P_0)+tf(P_1)\leq M$.  Therefore $P_t\in
  L(f,\calF,M)$. Since this holds for all $P_0,P_1$, $L(f,\calF,M)$ is
  connected.

  $(ii)$:~ Let $P_1\in L(f,\calF,M)$. Since $\calF$ is convex
  with respect to $\calC$, $\calC$ contains a constant speed curve
  $t\in[0,1]\mapsto P_t\in\calF$ joining $P_0$ and $P_1$.
  Since this is a constant speed curve,
  \( d(P_0,P_\epsilon) \leq \epsilon D(P_0,P_1)~, \)
  and since $t\mapsto f(P_t)$ is convex,
  \( f(P_\epsilon) \leq (1-\epsilon)f(P_0)+\epsilon f(P_1)\),
  implies \( f(P_\epsilon\leq f(P_0) -\epsilon(f(P_0)-M)\,.\)
\end{proof}

One particularly striking aspect of this result is that it does not
depend on the parametrization of the family $\calF$. Whether the cost
function $C(\theta)=f(G_\theta\pfd\mu_z)$ is convex or not is
irrelevant: as long as the family $\calF$ and the cost function $f$
are convex with respect to a well-chosen set of curves, the level sets
of the cost function $C(\theta)$ will be connected, and there will be
a nonincreasing path connecting any starting point $\theta_0$ to a
global optimum~$\theta^*$.

\smallskip
It is therefore important to understand how the definition of $\calC$
makes it easy or hard to ensure that both the model family $\calF$
and the training criterion $f$ are convex with respect to~$\calC$.

\subsection{The convexity of implicit model families}

We are particularly interested in the case of implicit models
(Section~\ref{sec:implicitmodeling}) in which the
distributions~$P_\theta$ are expressed by pushing the samples
$z\in\mathcal{Z}$ of a known source distribution
$\mu_z\in\pspace{\mathcal{Z}}$ through a parametrized generator
function $G_\theta(z)\in\calX$. This push-forward operation defines a
deterministic coupling between the distributions~$\mu_z$
and~$P_\theta$ because the function~$G_\theta$ maps every source
sample~$z\in\calZ$ to a single point $G_\theta(z)$ in $\calX$. In
contrast, a stochastic coupling distribution
$\pi_\theta\in\Pi(\mu_z,P_\theta)\subset\pspace{\calZ\times\calX}$
would be allowed to distribute a source sample~$z\in\calZ$ to several
locations in $\calX$, according to the conditional distribution
$\pi_\theta(x|z)$.

The deterministic nature of this coupling makes it very hard to
achieve mixture convexity using smooth generator functions~$G_\theta$.

\begin{example}
  \label{ex:nomixture}
  Let the distributions $P_0,P_1\in\calF$ associated with parameters
  $\theta_0$ and $\theta_1$ have disjoint supports separated by a
  distance greater than $D{>}0$. Is there a continuous path
  $t\in[0,1]\mapsto\theta_t$ in parameter space such that
  $G_{\theta_t}\pfd\mu_z$ is the mixture $P_t=(1{-}t)P_0+P_1$~?
  
  If we assume there is such a path, we can write
  \[
    \mu_z\{ G_{\theta_0}(z) \in \text{supp}(P_0) \} = 1
  \]
  and, for any $\epsilon{>}0$,
  \[
    \mu_z\{ G_{\theta_\epsilon}(z) \in \text{supp}(P_1) \} = \epsilon>0~.
  \]
  Therefore, for all $\epsilon{>}0$, there exists $z{\in}\calZ$ such that
  $d(G_{\theta_0}(z),G_{\theta_\epsilon}(z)){\geq}D$. Clearly such a
  generator function is not compatible with the smoothness
  requirements of an efficient learning algorithm.
\end{example}

In contrast, keeping the source sample $z$ constant, a small change of
the parameter $\theta$ causes a small displacement of the generated
sample~$G_\theta(z)$ in the space $\calX$. Therefore we can expect
that such an implicit model family has a particular affinity for
displacement geodesics.

\smallskip

It is difficult to fully assess the consequences of the
quasi-impossibility to achieve mixture convexity with implicit models.
For instance, although the Energy Distance $\calE_d(Q,P_\theta)$ is a
mixture convex function (see Proposition~\ref{th:mixconvipm} in the
next section), we cannot expect that a family $\calF$ of implicit
models will be mixture convex.

\begin{figure}
  \centering
  \includegraphics[width=0.48\linewidth]{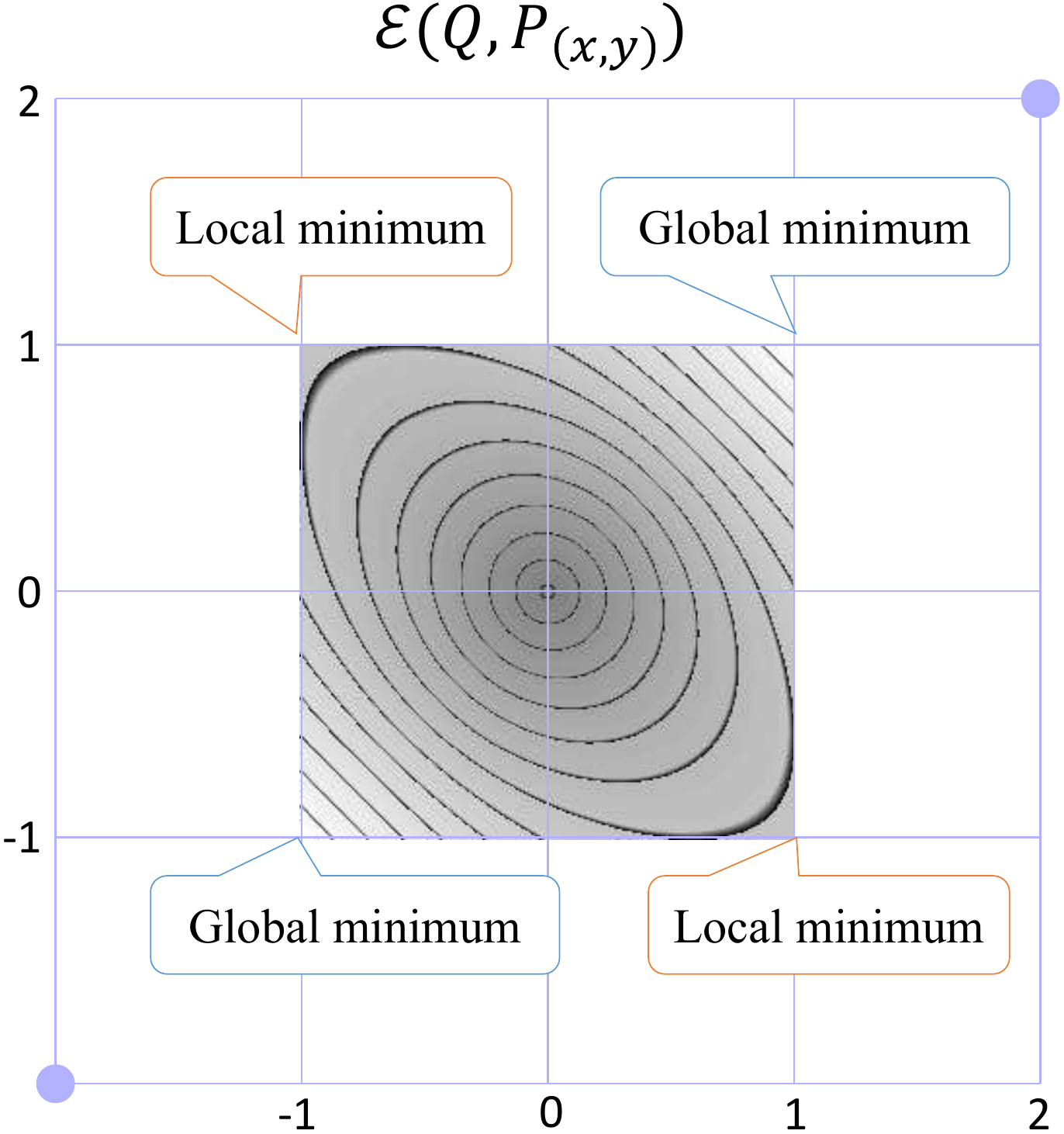}
  \quad
  \includegraphics[width=0.48\linewidth]{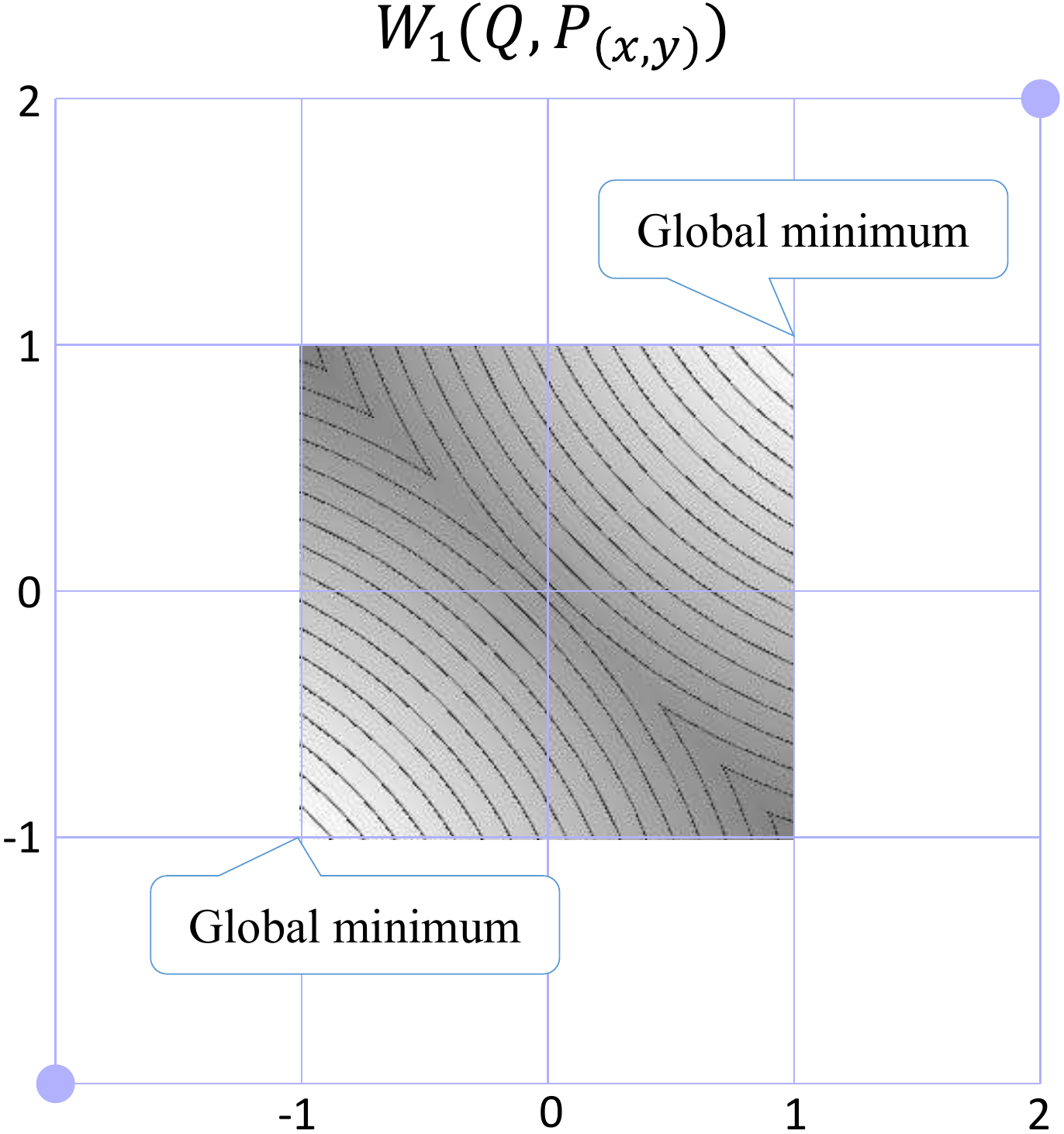}
  \caption{\label{fig:counter}
    Level sets for the problems described in Example~\ref{ex:counter}.}
\end{figure}

\begin{example}
  \label{ex:counter}
  Let $\mu_z$ be the uniform distribution on $\{-1,+1\}$.
  Let the parameter $\theta$ be constrained to the square $[-1,1]^2\subset\R^2$
  and let the generator function be
  \[
       G_\theta: z\in\{-1,1\}\mapsto G_\theta(z)=z\theta~.
  \]
  The corresponding model family is
  \[
  \calF = \big\{ P_{\theta} = \tfrac12(\delta_\theta{+}\delta_{-\theta}) :
    \theta\in[-1,1]\times[-1,1] \big\}~.
  \]
  It is easy to see that this model family is displacement convex but
  not mixture convex. Figure~\ref{fig:counter} shows the level sets
  for both criteria $\calE(Q,P_\theta)$ and $W_1(Q,P_\theta)$ for the
  target distribution $Q=P_{(2,2)}\notin\calF$.  Both criteria have
  the same global minima in $(1,1)$ and $(-1,-1)$.  However the energy
  distance has spurious local minima in $(-1,1)$ and $(1,-1)$ with a
  relatively high value of the cost function.
\end{example}

Constructing such an example in $\R^2$ is nontrivial. Whether such
situations arise commonly in higher dimension is not known. However we
empirically observe that the optimization of a MMD criterion on
high-dimensional image data often stops with unsatisfactory
results~(Section~\ref{sec:martin}).

\subsection{The convexity of distances}

Let $Q\in\calM$ be the target distribution for our learning algorithm.
This could be the true data distribution or the empirical training set
distribution. The learning algorithm minimizes the cost function
\begin{equation}
  \label{eq:cdopt}
  \min_{P_\theta\in\calF} C(\theta) ~\stackrel{\Delta}{=}~ D(Q,P_\theta)\,.
\end{equation}
The cost function itself is therefore a distance. Since such a
distance function is always convex in a Euclidean space, we can ask
whether a distance in a strictly intrinsic Polish space is
geodesically convex. This is not always the
case. Figure~\ref{fig:convexl1} gives a simple counter-example in
$\R^2$ equipped with the $L_1$ distance.

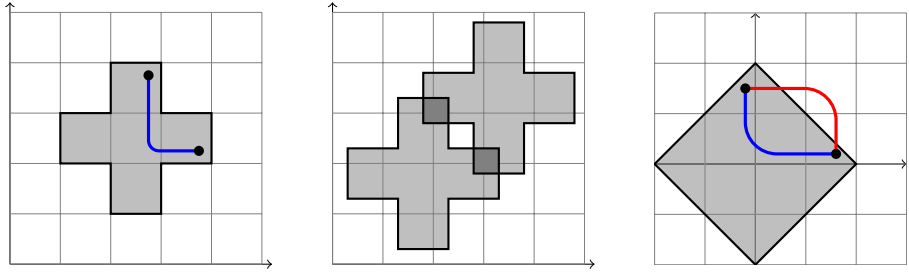
\begin{figure}
  \bigskip
  \centering
  \def\cross{%
    (3,2) -- (4,2) -- (4,3) -- (5,3) -- 
    (5,4) -- (4,4) -- (4,5) -- (3,5) -- 
    (3,4) -- (2,4) -- (2,3) -- (3,3) -- cycle}
  \begin{tikzpicture}[scale=0.67]
    \draw [->] (1,1) -- (1,6.2);
    \draw [->] (1,1) -- (6.2,1);
    \path [fill=lightgray] \cross;
    \draw [help lines] (1,1) grid (6,6);
    \draw [thick] \cross;
    \draw [rounded corners,very thick,blue] (3.75,4.75) |- (4.75,3.25);
    \path [fill=black] (3.75,4.75) circle [radius=0.1];
    \path [fill=black] (4.75,3.25) circle [radius=0.1];
  \end{tikzpicture}
  \qquad
  \begin{tikzpicture}[scale=0.67]
    \def\sha{[xshift=-0.7cm,yshift=-0.7cm]}
    \def\shb{[xshift=0.8cm,yshift=0.8cm]}
    \draw [->] (1,1) -- (1,6.2);
    \draw [->] (1,1) -- (6.2,1);
    \path \sha [fill=lightgray] \cross;
    \path \shb [fill=lightgray] \cross;
    \begin{scope}
      \clip \sha \cross;
      \path \shb [fill=gray] \cross;
    \end{scope}
    \draw [help lines] (1,1) grid (6,6);
    \draw \sha [thick] \cross;
    \draw \shb [thick] \cross;
  \end{tikzpicture}
  \qquad
  \begin{tikzpicture}[scale=0.67]
    \def\ball{(-2,0) -- (0,2) -- (2,0) -- (0,-2) -- cycle}
    \clip (-2,-2) rectangle (3,3);
    \path [fill=lightgray] \ball;
    \draw [->] (-2,0) -- (3,0);
    \draw [->] (0,-2) -- (0,3);
    \draw [help lines] (-2,-2) grid (3,3);
    \draw [thick] \ball;
    \node (n1) at (1.6,0.2) {};
    \node (n2) at (-0.2,1.5) {};
    \draw [rounded corners=12,very thick,blue] (n1.base) -| node[near end](bnode){} (n2.base);
    \draw [rounded corners=12,very thick,red] (n1.base) |- node[midway](rnode){} (n2.base);
    \path [fill=black] (n1) circle [radius=0.1];
    \path [fill=black] (n2) circle [radius=0.1];
  \end{tikzpicture}
  \caption{\label{fig:convexl1} Geodesic convexity often differs from
    Euclidean convexity in important ways. There are many different
    minimal geodesics connecting any two points in $\R^2$ equipped
    with the $L_1$ distance (see also
    Figure~\ref{fig:l1geodesics}). The cross-shaped subset of $\R_2$
    shown in the left plot is geodesically convex. The center plot
    shows that the intersection of two geodesically convex sets is not
    necessarily convex or even connected. The right plot shows that
    two points located inside the unit ball can be connected by a
    minimal geodesic that does not stay in the unit ball. This means
    that the $L_1$ distance itself is not convex because its
    restriction to that minimal geodesic is not convex.}
\end{figure}

Yet we can give a positive answer for the mixture convexity of IPM distances.

\begin{proposition}
  \label{th:mixconvipm}
  Let $\calM$ be equipped with a distance $D$ that belongs to the IPM family~\eqref{eq:ipmform}.
  Then $D$ is mixture convex.
\end{proposition}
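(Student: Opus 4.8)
The plan is to exploit the fact that an IPM is, by its very definition~\eqref{eq:ipmform}, a supremum of quantities that are \emph{affine} along mixture curves, and then to invoke the elementary observation that a pointwise supremum of affine functions is convex. Concretely, I would fix an arbitrary target distribution $Q\in\calM$ and consider the function $f:P\mapsto D(Q,P)$, so that mixture convexity of $D$ reduces to showing that $t\mapsto f(P_t)$ is convex on $[0,1]$ for every mixture curve $P_t=(1{-}t)P_0+tP_1$ with $P_0,P_1\in\calM$. Spelling out the IPM definition gives
\[
  f(P_t) = D(Q,P_t)
  = \sup_{g\in\calQ}\Big\{ \E[Q]{g(x)} - \E[P_t]{g(x)} \Big\}~.
\]

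The key step is to notice that, for each fixed critic $g\in\calQ$, linearity of the expectation in the measure yields $\E[P_t]{g(x)} = (1{-}t)\,\E[P_0]{g(x)} + t\,\E[P_1]{g(x)}$, so that the bracketed quantity $t\mapsto \E[Q]{g(x)}-\E[P_t]{g(x)}$ is an affine function of $t$. Consequently $t\mapsto f(P_t)$ is a pointwise supremum, taken over $g\in\calQ$, of affine functions of $t$, and is therefore convex on $[0,1]$. Since $Q$, $P_0$, and $P_1$ are arbitrary, this establishes that $D$ is mixture convex. The very same computation, applied when both arguments follow mixture curves simultaneously, shows that $D$ is in fact jointly mixture convex.

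There is essentially no serious obstacle here: the entire argument rests on the standard ``supremum of affine functions is convex'' principle, and the only point that deserves a word of care is the bookkeeping for the value $+\infty$ when $D(Q,P_t)$ fails to be finite. This is harmless, since a supremum of affine functions remains convex as a map into $\R\cup\{+\infty\}$. It is worth emphasizing that this convexity is special to the mixture (IPM) structure and does not extend to arbitrary geodesic families: the counterexample of Figure~\ref{fig:convexl1} shows that the analogous statement can fail for geodesic convexity.
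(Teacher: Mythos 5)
Your proof is correct and follows essentially the same route as the paper's: both arguments rest on the linearity of $P\mapsto\E[P]{g}$ along a mixture curve, so that $D(Q,P_t)$ is a supremum over $g\in\calQ$ of affine functions of $t$ and hence convex. The paper phrases this as the inequality $\sup(A+B)\leq\sup A+\sup B$ followed by a remark that subsegments of mixture curves are again mixture curves, whereas you invoke the ``supremum of affine functions is convex'' principle directly; the two are interchangeable.
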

\begin{proof}
  Let $t\in[0,1]\mapsto P_t=(1{-}t)P_0+tP_1$ be a mixture curve.
  Theorem~\ref{th:mixtureipm} tells us that such mixtures are minimal geodesics.
  For any target distribution $Q$ we can write
  \begin{align*} 
    D(Q,P_t) &= {\sup}_{f\in\calQ}\left\{\: \E[Q]{f(x)} - \E[P_t]{f(x)} \:\right\} \\
    &= {\sup}_{f\in\calQ}\left\{\: (1{-}t)\,\big(\E[Q]{f(x)}-\E[P_0]{f(x)}\big)
                               + t\,\big(\E[Q]{f(x)}-\E[P_1]{f(x)}\big)\:\right\} \\
    &\leq (1{-}t)\,D(Q,P_0) + t\,D(Q,P_1)~.
  \end{align*}                             
  The same holds for any segment $t\in[t_1,t_2]\subset[0,1]$ because
  such segments are also mixture curves up to an affine
  reparametrization. Therefore $t\mapsto D(Q,P_t)$ is convex.
\end{proof}

Therefore, when $D$ is an IPM distance, and when $\calF$ is a mixture
convex family of generative models, Theorem~\ref{th:convexopt} tells
us that a simple descent algorithm can find the global minimum
of~\eqref{eq:cdopt}. As discussed in Example~\ref{ex:nomixture}, it is
very hard to achieve mixture convexity with a family of implicit
models. But this could be achieved with nonparametric techniques.

\smallskip

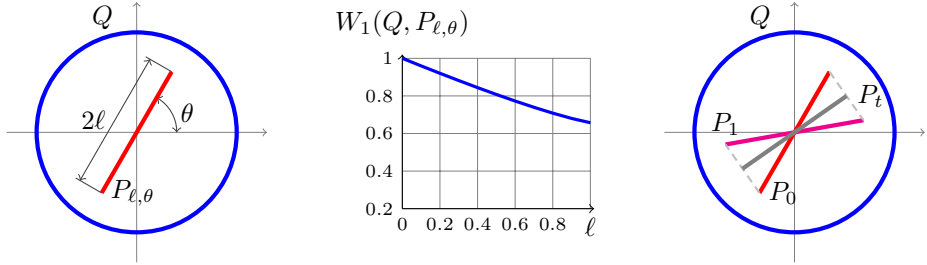
\begin{figure}
  \begin{tikzpicture}[scale=1.33]
    \draw [->,help lines] (-1.3,0) -- (1.3,0);
    \draw [->,help lines] (0,-1.3) -- (0,1.3);
    \draw [ultra thick,blue] (0,0) circle (1);
	\node [above] at (110:1) {$Q$};
	\draw [<->,darkgray] (0:0.4) arc(0:60:0.4) 
	    node[right,black,midway]{$\theta$};
	\begin{scope}[rotate=60]
	 \node [right] at (-0.7,0) {$P_{\ell,\theta}$};
	 \draw [darkgray] (-0.7,0) -- +(0,.3);
	 \draw [darkgray] (0.7,0) -- +(0,.3);
	 \draw [<->,darkgray] (-0.7,0.25) -- node[left,black]{$2\ell$} (0.7,0.25);
	 \draw [ultra thick,red] (-0.7,0) -- (0.7,0);
	\end{scope}
  \end{tikzpicture}
  \qquad
  \begin{tikzpicture}[scale=.5]
    \draw [help lines] (0,1) grid (5,5);
    \draw [->] (0,1) -- (5.1,1);
    \draw [->] (0,1) -- (0,5.1);
    \begin{scope}[scale=5]
      \draw [very thick,blue] 
      (0,1) -- (0.1,0.9598) -- (0.2,0.9204) -- 
      (0.3,0.8818) -- (0.4,0.8442) -- (0.5,0.8078) -- 
      (0.6,0.7727) -- (0.7,0.7394) -- (0.8,0.7084) -- 
      (0.9,0.6803) -- (1,0.6573);
    \end{scope}
    \node [above] at (0,5.3) {$W_1(Q,P_{\ell,\theta})$};
    \node [below] at (5,1) {$\ell$};
    \foreach \x in {0.2,0.4,0.6,0.8,1}
    \node[left] at (0,\x*5) {$\scriptstyle\x$};
    \foreach \x in {0,0.2,0.4,0.6,0.8}
    \node[below] at (\x*5,1) {$\scriptstyle\x$};
    \node at (0,-0.2) {~};
  \end{tikzpicture}
  \qquad
  \begin{tikzpicture}[scale=1.33]
    \draw [->,help lines] (-1.3,0) -- (1.3,0);
    \draw [->,help lines] (0,-1.3) -- (0,1.3);
    \draw [ultra thick,blue] (0,0) circle (1);
    \node [above] at (110:1) {$Q$};
    \begin{scope}[rotate=60]
      \node [right] at (-0.7,0) {$P_0$};
      \draw [ultra thick,red] (-0.7,0) -- (0.7,0);
    \end{scope}
    \begin{scope}[rotate=10]
      \node [above] at (-0.7,0) {$P_1$};
      \draw [ultra thick,magenta] (-0.7,0) -- (0.7,0);
    \end{scope}
    \draw[lightgray,densely dashed,thick] (60:0.7) -- (10:0.7);
    \draw[lightgray,densely dashed,thick] (60:-0.7) -- (10:-0.7);	
    \draw[ultra thick,gray] 
    (intersection of 60:-0.7--10:-0.7 and 0,0--35:-1) --
    (intersection of 60:0.7--10:0.7 and 0,0--35:1)
    node[right,black] {$P_t$};
  \end{tikzpicture}
  \caption{\label{fig:nonconvex} Example~\ref{ex:notdispconvw1}
    considers a target distribution $Q$ that is uniform on the $\R^2$ unit circle
    and a displacement geodesic between two line segments centered on the origin
    and with identical length (right plot.)}
\end{figure}

However the same does not hold for displacement convexity.
For instance, the Wasserstein distance is not displacement convex,
even when the sample space distance $d$ is geodesically convex,
and even when the sample space is Euclidean.

\begin{example}
  \label{ex:notdispconvw1}
  Let $\calX$ be $\R^2$ equipped with the Euclidean distance.  Let $Q$
  be the uniform distribution on the unit circle, and let
  $P_{\ell,\theta}$ be the uniform distribution on a line segment of
  length $2\ell$ centered on the origin (Figure~\ref{fig:nonconvex},
  left). The distance $W_1(Q,P_{\ell,\theta})$ is independent on
  $\theta$ and decreases when $\ell\in[0,1]$ increases
  (Figure~\ref{fig:nonconvex}, center).  Consider a displacement
  geodesic $t\in[0,1]\mapsto P_t$ where $P_0=P_{\ell,\theta_0}$ and
  $P_1=P_{\ell,\theta_1}$ for $0{<}\theta_0{<}\theta_1{<}\pi/2$. Since
  the space $\R_2$ is Euclidean, displacements occur along straight
  lines. Therefore the distribution $P_t$ for $0{<}t{<}1$ is uniform
  on a slightly shorter line segment (Figure~\ref{fig:nonconvex},
  right), implying
  \[
     W_1(Q,P_t) ~>~ W_1(Q,P_0)=W_1(Q,P_1)~.
  \]
  Therefore the distance function $P\mapsto W_1(Q,P)$
  is not displacement convex.
\end{example}

\smallskip

Although this negative result prevents us from invoking
Theorem~\ref{th:convexopt} for the minimization of the Wasserstein
distance, observe that the convexity violation in
Example~\ref{ex:notdispconvw1} is rather small. Convexity violation
examples are in fact rather difficult to construct. The following
section shows that we can still obtain interesting guarantees by
bounding the size of the convexity violation.

\subsection{Almost-convexity}

We consider in this section that the distance $d$ is geodesically
convex in $\calX$: for any point $x\in\calX$ and any
constant speed geodesic $t\in[0,1]\mapsto\gamma_t\in\calX$,
\[
   d(x,\gamma_t)\leq (1{-}t)\,d(x,\gamma_0) + t\,d(x,\gamma_1)~.
\]
This requirement is of course verified when $\calX$ is an Euclidean
space. This is also trivially true when $\calX$ is a Riemannian or
Alexandrov space with nonpositive curvature~\cite{burago-2001}.

The following result bounds the convexity violation:

\begin{proposition}
  \label{th:almostconvexbound}
  Let $\calX$ be a strictly intrinsic Polish space equipped with a
  geodesically convex distance $d$ and let $\calM^1$ be equipped with
  the $1$-Wasserstein distance $W_1$. For all $Q\in\calM$ and all
  displacement geodesics $t\in[0,1]\mapsto P_t$,
  \[
     \forall t\in[0,1]\quad
     W_1(Q,P_t) \leq (1{-}t)\,W_1(Q,P_0) + t\,W_1(Q,P_1) + 2 t(1-t) K(Q,P_0,P_1)\,
  \]
  with 
  \(\displaystyle
      K(Q,P_0,P_1) \leq 2 \min_{u_0\in\calX} \E[u\sim Q]{d(u,u_0)}~.
  \)
\end{proposition}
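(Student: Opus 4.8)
The plan is to build a single coupling between $Q$ and $P_t$ by gluing three optimal plans together, and then to bound its cost using the assumed geodesic convexity of $d$ followed by two triangle inequalities routed through a common origin $u_0$. First I would fix $t$ and extract, from Definition~\ref{def:displacementgeodesics} with $\tp=t$, a plan $\pi_3\in\pspace{\calX^3}$ on coordinates $(x,w,z)$ with marginals $P_0,P_t,P_1$, such that $(x,z)\pfd\pi_3$ is optimal for $W_1(P_0,P_1)$ and $d(x,w)=t\,d(x,z)$, $d(w,z)=(1{-}t)\,d(x,z)$ hold $\pi_3$-almost surely. Let $\sigma_0\in\Pi(Q,P_0)$ and $\sigma_1\in\Pi(Q,P_1)$ be optimal. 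Applying the Gluing Lemma~\ref{th:gluing} twice — first attaching $\sigma_0$ to the $x$-marginal of $\pi_3$, then attaching $\sigma_1$ to its $z$-marginal — produces a distribution $\Omega\in\pspace{\calX^5}$ on $(u,x,w,z,u')$ whose relevant pairwise marginals are $(u,x)\sim\sigma_0$, $(x,w,z)\sim\pi_3$, and $(z,u')\sim\sigma_1$. Here $u$ and $u'$ are two $Q$-distributed copies pinned at the two ends of the geodesic, and the $w$-marginal is exactly $P_t$.

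Next I would form $\tau=(1{-}t)\,(u,w)\pfd\Omega+t\,(u',w)\pfd\Omega\in\Pi(Q,P_t)$; mixing preserves both marginals, so $\tau$ is admissible and $W_1(Q,P_t)\le(1{-}t)\,\E[\Omega]{d(u,w)}+t\,\E[\Omega]{d(u',w)}$. Since $d(x,w)+d(w,z)=d(x,z)$ almost surely, $w$ sits at parameter $t$ on a constant speed minimal geodesic from $x$ to $z$ (Corollary~\ref{th:csg}), so geodesic convexity gives $d(u,w)\le(1{-}t)d(u,x)+t\,d(u,z)$ and likewise for $u'$. Taking expectations, using $\E[\Omega]{d(u,x)}=W_1(Q,P_0)$ and $\E[\Omega]{d(u',z)}=W_1(Q,P_1)$, and collecting terms via $(1{-}t)^2=(1{-}t)-t(1{-}t)$ and $t^2=t-t(1{-}t)$, yields
\[
W_1(Q,P_t)\le (1{-}t)W_1(Q,P_0)+tW_1(Q,P_1)
  + t(1{-}t)\big[\,\E[\Omega]{d(u,z)}+\E[\Omega]{d(u',x)}-W_1(Q,P_0)-W_1(Q,P_1)\,\big].
\]

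Finally I would control the bracketed convexity violation. The triangle inequalities $d(u,z)\le d(u,u')+d(u',z)$ and $d(u',x)\le d(u,u')+d(u,x)$, together with optimality of $\sigma_0,\sigma_1$, bound the bracket by $2\,\E[\Omega]{d(u,u')}$. As both $u$ and $u'$ have marginal $Q$, a further inequality through any origin $u_0$ gives $\E[\Omega]{d(u,u')}\le 2\,\E[u\sim Q]{d(u,u_0)}$, and minimizing over $u_0$ bounds the bracket by $4\min_{u_0}\E[u\sim Q]{d(u,u_0)}$, which is exactly $2t(1{-}t)K$ with the claimed bound on $K$. The main obstacle is conceptual rather than computational: a naive two-plan gluing only yields the weaker constant $K=W_1(P_0,P_1)$, and the key idea is to introduce two independent $Q$-copies $u,u'$ glued at opposite ends of the transport plan and then to compare them through a common center $u_0$, which replaces the (possibly large) diameter $W_1(P_0,P_1)$ by the intrinsic radius of $Q$. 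The only genuine technical subtlety is the measure-theoretic gluing of three plans on a Polish space, which is precisely what Lemma~\ref{th:gluing} provides.
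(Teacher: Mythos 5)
Your proposal is correct and follows essentially the same route as the paper's proof: the same $\pi_3$ extracted from the displacement geodesic, the same double gluing of optimal plans to $Q$ at the two endpoints, the same $(1{-}t,t)$-mixture coupling with $P_t$, geodesic convexity of $d$, and the same two triangle inequalities (through the second $Q$-copy and then through a common origin $u_0$) to reach the bound $K\leq 2\min_{u_0}\E[u\sim Q]{d(u,u_0)}$. The only difference is cosmetic — you apply the triangle inequality after taking expectations and reorganize the algebra via $(1{-}t)^2=(1{-}t)-t(1{-}t)$, whereas the paper bounds the integrand pointwise first.
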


\begin{figure}
  \centering
  \begin{tikzpicture}[thick,>=latex]
    \node [circle,draw,minimum width=1.5cm,thin] (x) at (1,3.5) {$x\sim P_0$};
    \node [circle,draw,minimum width=1.5cm,thin] (y) at (4,3.5) {$y\sim P_t$};
    \node [circle,draw,minimum width=1.5cm,thin] (z) at (7,3.5) {$z\sim P_1$};
    \node [circle,draw,minimum width=1.5cm,thin] (ux) at (2,1) {$u_x\sim Q$}; 
    \node [circle,draw,minimum width=1.5cm,thin] (u) at (4,1) {$u\sim Q$};
    \node [circle,draw,minimum width=1.5cm,thin] (uz) at (6,1)  {$u_z\sim Q$};
    \draw [<->,blue] (x) -- (y);	
    \draw [<->,blue] (y) -- (z);
    \draw [<->,blue] (x) .. node [below=1ex]{$\pi_3(x,y,z)$} controls (4,5.3) .. (z);
    \draw [<->,purple] (x) -- node[left]{$\pi_0$} (ux);
    \draw [<->,purple] (z) -- node[left]{$\pi_1$} (uz);
    \draw [->,olive,densely dashed] (ux) -- ++(0.5,1)
    -- node[above]{$[\,1{-}t\,]$} ++(1,0) -- (u);
    \draw [->,olive,densely dashed] (uz) -- ++(-0.5,1)
    -- node[above]{$[\,t\,]$} ++(-1,0) -- (u);
  \end{tikzpicture}
  \caption{\label{fig:glueplan} The construction of $\pi\in\pspace{\calX^6}$ in
    the proof of Proposition~\ref{th:almostconvexbound}.}
\end{figure}

\begin{proof}
  The proof starts with the construction of a distribution
  $\pi\in\pspace{\calX^6}$ illustrated in Figure~\ref{fig:glueplan}.
  Thanks to Proposition~\ref{th:displacementgeodesics} we can
  construct a distribution $\pi_3(x,y,z)\in\pspace{\calX^3}$ whose
  marginals are respectively $P_0$, $P_t$, $P_1$, whose
  pairwise marginals are optimal transport plans, and such that,
  $\pi_3$-almost surely,
  \[
     d(x,y)=t\,d(x,z) \quad d(y,z)=(1{-}t)\,d(x,z)\,.
  \]
  We then construct distribution $\pi_5(x,y,z,u_x,u_z)$ by gluing
  $\pi_3$ with the optimal transport plans $\pi_0$ between $P_0$ and
  $Q$ and $\pi_1$ between $P_1$ and $Q$.  Finally
  $\pi(x,y,z,u_x,u_z,u)$ is constructed by letting $u$ be equal to
  $u_x$ with probability $1{-}t$ and equal to $u_z$ with probability
  $t$. The last three marginals of $\pi$ are all equal to $Q$.
  
  Thanks to the convexity of $d$ in $\calX$, the following
  inequalities hold $\pi$-almost surely:
  \begin{align*}
    d(u_x,y) &\leq (1{-}t)\,d(u_x,x)+t\,d(u_x,z)\\
             &\leq (1{-}t)\,d(u_x,x)+t\,d(u_z,z)+t\,d(u_x,u_z) \\
    d(u_z,y) &\leq (1{-}t)\,d(u_z,x)+t\,d(u_z,z)\\
             &\leq (1{-}t)\,d(u_x,x)+t\,d(u_z,z)+(1-t)\,d(u_x,u_z)~.
  \end{align*}
  Therefore
  \begin{align*}
    W_1(Q,P_t)
    &\leq \E[\pi]{d(u,y)}\\
    &= \E[\pi]{(1{-}t)d(u_x,y)+td(u_z,y)}\\
    &\leq \E[\pi]{ (1{-}t)\,d(u_x,x)+t\,d(u_z,z) + 2t(1-t)\,d(u_x,u_z) }\\
    &= (1{-}t)\,W_1(Q,P_0)+t\,W_1(Q,P_1)+ 2t(1-t)\,\E[\pi]{d(u_x,u_z)}~.
  \end{align*}
  For any $u_0\in\calX$, the constant $K$ in the last term
  can then be coarsely bounded with
  \begin{align*}
    K(Q,P_0,P_1) &= \E[\pi]{d(u_x,u_z)} \\
    &\leq \E[\pi]{d(u_x,u_0)}+\E[\pi]{d(u_0,u_z)}
    = 2 \E[u\sim Q]{d(u,u_0)}~.
  \end{align*}
  Taking the minimum over $u_0$ gives the final result.
\end{proof}

When the optimal transport plan from $P_0$ to $P_1$ specifies that a
grain of probability must be transported from $x$ to~$z$, its optimal
coupling counterpart in $Q$ moves from $u_x$ to~$u_z$. Therefore the
quantity $K(Q,P_0,P_1)$ quantifies how much the transport plan from
$P_t$ to $Q$ changes when $P_t$ moves along the geodesic. This idea
could be used to define a Lipschitz-like property such as
\[
  \forall P_0,P_1\in\calF_L\subset\calF\qquad
     K(Q,P_0,P_1) \leq L W_1(P_0,P_1)~.
\]
Clearly such a property does not hold when the transport plan changes
very suddenly. This only happens in the vicinity of
distributions~$P_t$ that can be coupled with $Q$ using multiple
transport plans.

Unfortunately we have not found an elegant way to leverage this idea
into a global description of the cost landscape.
Proposition~\ref{th:almostconvexbound} merely bounds~$K(Q,P_0,P_1)$ by
the expected diameter of the distribution~$Q$. We can nevertheless use
this bound to describe some level sets of $W_1(Q,P_\theta)$

\begin{theorem}
  \label{th:almostconvexopt}
  Let $\calX$ be a strictly intrinsic Polish space equipped with a
  geodesically convex distance $d$ and let $\calM^1$ be equipped with
  the $1$-Wasserstein distance $W_1$. Let $\calF\subset\calM^1$
  be displacement convex and let $Q\in\calM^1$ have expected diameter
  \[
      D = 2 \min_{u_0\in\calX}\E[u\sim Q]{d(u,u_0)}~.
  \] 
  Then the level set $L(Q,\calF,M)=\{P_\theta\in\calF:W_1(Q,P_\theta)\leq M\}$
  is connected if
  \[
      M > \inf_{P_\theta\in\calF}W_1(Q,P_\theta) + 2 D~.
  \]
\end{theorem}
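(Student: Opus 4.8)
The plan is to prove path-connectedness directly: given any two distributions $P_0,P_1$ in the level set $L(Q,\calF,M)$, I will exhibit a continuous path joining them that stays inside the level set. The only quantitative tool needed is the almost-convexity bound of Proposition~\ref{th:almostconvexbound}. Since $K(Q,\cdot,\cdot)\leq 2\min_{u_0}\E[u\sim Q]{d(u,u_0)}=D$, every displacement geodesic $t\mapsto P_t$ in $\calM^1$ satisfies
\[
  W_1(Q,P_t)\leq (1{-}t)\,W_1(Q,P_0)+t\,W_1(Q,P_1)+2t(1{-}t)\,D~.
\]
The hard part is that a single displacement geodesic directly connecting $P_0$ and $P_1$ need not remain below $M$: the concave bump $2t(1{-}t)D$ can add up to $D/2$ and push the curve out of the level set. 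The device that overcomes this is to route the path through a \emph{near-minimizer} of $W_1(Q,\cdot)$, which tilts the linear interpolation term downward strongly enough that the concave bump cannot cross level $M$; the hypothesis $M>m+2D$ is exactly the condition that makes this work.

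Concretely, write $m=\inf_{P_\theta\in\calF}W_1(Q,P_\theta)$ and fix any $\epsilon$ with $0<\epsilon<M-m-2D$, which is possible precisely because $M>m+2D$. By definition of the infimum, I choose $P^*\in\calF$ with $W_1(Q,P^*)<m+\epsilon$. Since $\calF$ is displacement convex and $P_0,P^*,P_1$ all lie in $\calF$, displacement convexity supplies a displacement geodesic $t\mapsto P_t^{(0)}$ from $P_0$ to $P^*$ and a displacement geodesic $t\mapsto P_t^{(1)}$ from $P^*$ to $P_1$, each contained entirely in $\calF$. The candidate path is the concatenation of these two geodesics; it is continuous (displacement geodesics are constant speed minimal geodesics by Proposition~\ref{th:displacementgeodesics}) and lies in $\calF$.

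What remains is to check that both halves stay in the level set. Applying the displayed bound to the first geodesic together with $W_1(Q,P_0)\leq M$ and $W_1(Q,P^*)<m+\epsilon$ gives $W_1(Q,P_t^{(0)})\leq f(t)$, where $f(t)=(1{-}t)M+t(m{+}\epsilon)+2t(1{-}t)D$. A one-line computation shows $f$ is concave with $f'(0)=-(M-m-2D-\epsilon)<0$, hence $f$ is decreasing on $[0,1]$ and $f(t)\leq f(0)=M$. The symmetric computation for the second geodesic yields $W_1(Q,P_t^{(1)})\leq g(t)$, where $g(t)=(1{-}t)(m{+}\epsilon)+tM+2t(1{-}t)D$ is concave with $g'(1)=(M-m-2D)-\epsilon>0$, hence increasing, so $g(t)\leq g(1)=M$. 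Both halves therefore remain in $L(Q,\calF,M)$, so the concatenated path witnesses path-connectedness, and the level set is connected.
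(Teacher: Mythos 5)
Your proof is correct and follows essentially the same route as the paper's: both arguments pick a near-minimizer $P^*$ with $W_1(Q,P^*)<M-2D$ (guaranteed by the hypothesis $M>\inf+2D$), connect each point of the level set to it by a displacement geodesic inside $\calF$, and use Proposition~\ref{th:almostconvexbound} to keep the geodesic below level $M$. The only cosmetic difference is that the paper simplifies the quadratic bound directly to $M-2t^2D\leq M$, whereas you reach the same conclusion via concavity and monotonicity of the bounding function.
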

\begin{proof}
  Choose $P_1\in\calF$ such that $W_1(Q,P_1)<M-2D$. For any
  $P_0,P_0^\prime\in L(Q,\calF,M)$, let $t\in[0,1]\mapsto P_t\in\calF$
  be a displacement geodesic joining $P_0$ and $P_1$ without
  leaving~$\calF$. Thanks to Proposition~\ref{th:almostconvexbound},
  \[
    W_1(Q,P_t) \leq (1{-}t)\,M + t\,(M-2D) + 2t(1-t)D = M - 2 t^2 D \leq M~.
  \]
  Therefore this displacement geodesic is contained in $L(Q,\calF,M)$
  and joins $P_0$ to $P_1$. We can similarly construct a second
  displacement geodesic that joins $P_0^\prime$ to $P_1$ without
  leaving $L(Q,\calF,M)$.  Therefore there is a continuous path
  connecting $P_0$ to $P_0^\prime$ without leaving $L(Q,\calF,M)$.
\end{proof}

This result means that optimizing the Wasserstein distance with a
descent algorithm will not stop before finding a generative model
$P\in\calF$ whose distance $W_1(Q,P)$ to the target distribution is
within $2D$ of the global minimum. Beyond that point, the algorithm
could meet local minima and stop progressing. Because we use a rather
coarse bound on the constant $K(Q,P_0,P_1)$, we believe that it is
possible to give much better suboptimality guarantee in particular
cases.

Note that this result does not depend on the parametrization of
$G_\theta$ and therefore applies to the level sets of potentially very
nonconvex neural network parametrizations. Previous results on the
connexity of such level sets
\cite{auffinger-benarous-2013,freeman-bruna-2016} are very tied to a
specific parametric form. The fact that we can give such a result in
an abstract setup is rather surprising. We hope that further
developments will clarify how much our approach can help these
efforts.

Finally, comparing this result with Example~\ref{ex:arora} also reveals a
fascinating possibility: a simple descent algorithm might in fact be
unable to find that the Dirac distribution at the center of the
sphere is a global minimum. Therefore the effective statistical
performance of the learning process may be subtantially better than
what Theorem~\ref{th:fournier} suggests. Further research is necessary
to check whether such a phenomenon occurs in practice.

\section{Conclusion}

This work illustrates how the geometrical study of probability
distances provides useful ---but still incomplete--- insights on the
practical performance of implicit modeling approaches using different
distances. In addition, using a technique that differs substantially
from previous works, we also obtain surprising global optimization
results that remain valid when the parametrization is nonconvex.


\section*{Acknowledgments}

We would like to thank Joan Bruna, Marco Cuturi, Arthur Gretton, Yann
Ollivier, and Arthur Szlam for stimulating discussions and also for
pointing out numerous related works.

\ifllncs
  \bibliographystyle{splncs03}
\else
  \bibliographystyle{plain}
\fi
\bibliography{geometry}

\end{document}